\pdfoutput=1
\documentclass[11pt]{article}
\usepackage{lmodern}
\usepackage[T1]{fontenc}
\usepackage[left=31mm,right=31mm,top=33mm,bottom=20mm]{geometry}

\usepackage[lined,ruled]{algorithm2e}

\usepackage{natbib}
\usepackage{booktabs}
\usepackage{graphicx}

\usepackage{enumitem}

\usepackage[usenames,dvipsnames]{xcolor}

\bibpunct{(}{)}{;}{a}{,}{,}

\usepackage{amsmath,amssymb,amsfonts}
\usepackage{amsthm}
\usepackage{thmtools}

\usepackage{bm}

\newtheorem{theorem}{Theorem}
\newtheorem{lemma}{Lemma}
\newtheorem{corollary}{Corollary}
\newtheorem{proposition}{Proposition}

\theoremstyle{definition}

\usepackage[colorlinks,
            linkcolor=blue,
            citecolor=blue,
            urlcolor=magenta,
            linktocpage,
            plainpages=false]{hyperref}

\definecolor{darkgreen}{rgb}{0,0.5,0}

\usepackage[capitalize,nameinlink]{cleveref}

\crefname{definition}{Definition}{Definitions}

\DeclareMathOperator*{\nth}{^{\text{th}}}

\DeclareMathOperator*{\argmin}{arg\,min}

\DeclareMathOperator{\E}{\mathbb{E}}
\DeclareMathOperator{\Var}{\mathsf{Var}}
\DeclareMathOperator{\V}{\mathbb{V}}

\renewcommand{\Pr}{\operatorname{\rm Pr}\nolimits}
\newcommand{\ind}[1]{\delta_{\{#1\}}}

\newcommand{\reals}{\mathbb{R}}

\newcommand{\Y}{\mathcal{Y}}

\newcommand{\loss}{\ell}

\newcommand{\pcalibeat}{p}

\title{Calibeating for General Proper Losses: \\A Bregman Divergence Approach}

\author{Maximilian Fichtl\thanks{Independent researcher, \texttt{mfichtl@gmail.com}}, Crist\'obal Guzm\'an\thanks{Institute for Mathematical and Computational Engineering, Faculty of Mathematics and School of Engineering, Pontificia Universidad Cat\'olica de Chile, \texttt{crguzmanp@uc.cl}}, Nishant A.~Mehta\thanks{Department of Computer Science, University of Victoria, \texttt{nmehta@uvic.ca}}}

\begin{document}

\maketitle

\colorlet{stabcolor}{Cerulean}

\definecolor{pencolor}{rgb}{0.9,0,0}

\definecolor{smoothcolor}{rgb}{0,0.65,0}

\begin{abstract}
This work introduces a general framework for calibeating based on regret minimization. As compared to Foster and Hart's seminal calibeating work which had specialized treatments of Brier score (squared loss) and log loss, we consider a large family of proper losses that includes $\alpha$-Tsallis losses (for $\alpha \in [1, 2]$) and Lipschitz losses. Our results for Tsallis losses also hold for an unscaled version of Tsallis loss that recovers log loss. Our analysis is oriented around the Bregman divergence view of a proper loss. Technically, our results for the family of Tsallis losses that we consider are U-calibration results, simultaneously obtaining logarithmic regret for all losses in this family while having a weaker dependence on the dimension compared to previous results. Of potential independent interest, we also show a new regret equality for the regret of Be The Regularized Leader. This regret equality holds for general proper losses and itself is based on two results related to online updating formulas for the generalized variance, the latter being a previously introduced generalization of variance based on Bregman divergences.
\end{abstract}

\section{Introduction} \label{sec:intro}

Consider forecasting a sequence of outcomes $(y_t)_{t \geq 1}$, taken from a finite set $\Y$, when the outcomes are selected by an adaptive adversary. When the predictions are probability forecasts $(q_t)_{t \geq 1}$ and the loss function is the squared loss $\loss(p, y) = \sum_{j=1}^d \left( p_j - \mathbf{e}_y \right)^2$, it is well-known that the cumulative squared loss can be decomposed into the sum of the \emph{calibration score} --- a measure of the extent to which the forecaster is not calibrated --- and the \emph{refinement score} --- representing the fundamental variance present within the outcome sequence. To \emph{calibeat} is to reduce (or beat) the cumulative loss of a forecaster by an amount equal to the forecaster's calibration score. In their seminal work, \cite{foster2023calibeating} showed that online calibeating strategies do exist. Regarding the squared loss (also called the Brier score in their work), they mentioned:
\begin{quote}
``it easily leads to useful decompositions, such as the Brier score being the sum of the refinement and calibration scores here. However, it raises the question of how much our results depend on the quadratic scores. We believe that the ideas and approach here carry through for other scoring functions\dots''
\end{quote}
Foster and Hart go on to show a similar decomposition is possible for the log loss, leaving open how to push through calibeating for general proper losses.

Our work shows how to achieve calibeating for general proper losses by making a fundamental connection to no-regret online learning.\footnote{In a concurrent work, \cite{chen2026calibeating} independently made this same observation; for more discussion, see the end of \cref{sec:related-work}.} A key player in making this connection is the Bregman divergence viewpoint of a proper loss. While this viewpoint is well-understood \citep{grunwald2004game,buja2005loss,reid2011information}, 
we are unsure if the general community is aware of how neatly calibeating can be connected to regret minimization through the language of Bregman divergences. 
Moreover, in the course of making this connection, we discovered that we had devised a \emph{U-calibration} procedure. 

U-calibration, recently introduced by \cite{kleinberg2023ucalibration}, is the problem of finding a single algorithm (that generates a single sequence of predictions) that simultaneously obtains low regret for all losses in some class of losses. The original U-calibration paper \citep{kleinberg2023ucalibration} considered the class of all bounded, proper losses, as did a follow-up paper by \cite{luo2024optimal}. Our regret minimization results are also U-calibration results. Unlike previous results, the class of losses we consider includes the set of \emph{unscaled $\alpha$-Tsallis losses}, 
defined for any $\alpha \in [1, 2]$ as
\begin{align*}
\loss_\alpha(p, y) = \frac{1}{1 - \alpha} \left( \alpha p_y^{\alpha - 1} - 1 \right) + \sum_{j=1}^d p_j^\alpha .
\end{align*}
We use the term ``unscaled'' because prior works \citep{dawid2007geometry,ruli2022robust,luo2024optimal} instead consider the \emph{scaled $\alpha$-Tsallis loss}, defined for any $\alpha \in [1, 2]$ as\footnote{Throughout the paper, we use the same notation for the unscaled and scaled versions; the text always clarifies which version is meant.}
\begin{align*}
\loss_{\mathrm{\alpha}}(p, y) = (\alpha -  1) \sum_{j=1}^d p_j^\alpha - \alpha p_y^{\alpha - 1} .
\end{align*}
Our interest in the unscaled family is that it includes as special cases the log loss (corresponding to the limiting case of $\alpha = 1$, henceforth referred to as $\alpha = 1$ for simplicity) and the squared loss (corresponding to $\alpha = 2$). Our results also imply regret bounds for the scaled $\alpha$-Tsallis losses for $\alpha \in [1, 2]$ (see \cref{sec:tsallis-family} for our full discussion of unscaled and scaled Tsallis losses). 
We note, however, that the family we consider does not include the 0-1 loss. On the other hand, our results seem to be the first that attain U-calibration on classes of non-Lipschitz, unbounded losses. 

The calibeating procedure we present is based on an algorithm that achieves U-calibration. Consequently, this calibeating procedure simultaneously calibeats for all losses in some class, a notion that we call \emph{U-calibeating}. Our U-calibration algorithm itself fits into the Follow The Regularized Leader (FTRL) framework \citep{abernethy2009competing}, as does our analysis. In particular, by using a generalized variance identity for Bregman divergences \citep{gupta2022ensembles}, we get a regret equality (\cref{lemma:btrl-regret-equality}) for the regret of Be The Regularized Leader relative to the best prediction in hindsight. This regret equality allows us to extend the ideas of \cite{foster2023calibeating} for analyzing squared loss (and log loss) to the wider class of losses that we consider here. The regret equality holds for any proper loss and may be of independent interest.

Our core technical contributions are:
\begin{itemize}
\item A general, regret minimization-based analysis framework for calibeating.
\item An apparently new regret equality (\cref{lemma:btrl-regret-equality}, already mentioned above) for the regret of BTRL versus the best prediction in hindsight, holding for general proper losses. Behind this regret equality are two results that may be of independent interest: \emph{(a)} a generalization of the variance update formula using a generalized notion of variance for Bregman divergences as well as general weights (see \cref{lemma:generalized-sn-thing} in \cref{sec:proof-sketch-btrl-regret-equality}), and, relatedly, \emph{(b)} a generalization of the online formula for the variance (see \cref{lemma:weighted-sum-divergences}, also in \cref{sec:proof-sketch-btrl-regret-equality}).
\item A single, simple algorithm that simultaneously obtains $O(\log T)$ regret for the family of unscaled $\alpha$-Tsallis losses (which includes log loss and squared loss), the scaled $\alpha$-Tsallis losses (which are of interest in the robust statistics community), and all Lipschitz losses.
\item For unscaled and scaled $\alpha$-Tsallis losses, using our general analysis, the $T$-dependent term has the optimal dimension dependence for log loss, which is $d \log T$, and for squared loss, which is simply $\log T$. Moreover, for any $\alpha \in [1, 2]$, the $T$-dependent term is of order $d^{2-\alpha} \log T$, which we suspect is the optimal rate for the $\log T$ term. Our result improves on results of \cite{luo2024optimal}, who give the rate $O(d^2 \log T)$ for any scaled $\alpha$-Tsallis loss and who do not consider the unscaled $\alpha$-Tsallis losses (thereby not being able to handle log loss).
\item Our general analysis for unscaled and scaled $\alpha$-Tsallis losses gives an additive $d$ term, which should not be present for squared loss. For Lipschitz losses with Lipschitz constant $G$, when using no regularization the algorithm obtains regret $O(G \log T)$. For squared loss, this regret bound is $O(\log T)$ with no additive $d$ factor.
\end{itemize}

The next section formalizes the problem setting. We present some background and related work in \cref{sec:related-work}. Next, in \cref{sec:analysis-framework}, we introduce the structure of our analysis framework, showing the connection between calibeating and regret minimization and giving an overview of our U-calibration regret analysis. In \cref{sec:regret-bounds}, we establish U-calibration results by exhibiting a simple FTRL procedure and showing that it simultaneously obtains low regret for various subclasses of losses. We then move to our U-calibeating results in \cref{sec:calibeating}. In \cref{sec:proof-sketch-btrl-regret-equality}, we sketch a proof of our regret equality result (\cref{lemma:btrl-regret-equality}). Finally, \cref{sec:discussion} closes with a discussion and open questions. All proofs not presented in the main text appear in the appendix.

\section{Problem Setup} \label{sec:problem-setup}

In this section, we first formally introduce proper losses and calibeating and then review how to represent proper losses in terms of convex functions and, similarly, in terms of a Bregman divergences.

\paragraph{Proper losses.}
We consider losses of the form $\loss: \Delta_d \times [d] \rightarrow \reals\cup\{+\infty\}$.\footnote{The main reason we include $+\infty$ in the range is because of log loss.} 
We extend the second argument of $\loss$ to the simplex $\Delta_d$ via 
$\loss(p, q) = \E_{Y \sim q} [ \loss(p, Y) ]$. When the second argument of the loss corresponds to a label $y$, we may write either $y$ or $\mathbf{e}_y$, where $\mathbf{e}_y$ is the $y\nth$ standard basis vector. A loss is \emph{proper} if for all $q \in \Delta_d$, the prediction $q$ is a minimizer of $p \mapsto \loss(p, q)$. Moreover, a loss is \emph{strictly proper} if $q$ is a unique minimizer. Throughout this work, we consider proper losses, and we implicitly assume that perfect prediction results in zero loss, i.e.,
\begin{align}
\loss(\mathbf{e}_y, y) = 0 \quad \text{for all } y \in \Y . \label{eqn:fair}
\end{align}

\paragraph{Calibeating.}
Before introducing calibeating, we require some notation.  
For any $t \in [T]$, we denote the empirical outcome frequency up until round $t$ as $\bar{p}_t = \frac{1}{s} \sum_{s=1}^t \mathbf{e}_{y_s}$. 
For any $x \in \Delta_d$ and sequence of forecasts $(q_t)_t$, let $\mathcal{T}_x = \{t \in [T] : q_t = x\}$ and $\bar{p}_T(x)$ be the empirical outcome frequency among the rounds in $\mathcal{T}_x$. Similarly, for any set $B \subseteq \Delta_d$, let $\mathcal{T}_B = \{t \in [T] : q_t \in B\}$.

Following \cite{foster2023calibeating}, we introduce calibeating in the case of the squared loss $\loss(p, y) = \left\| p - \mathbf{e}_y \right\|_2^2$. For a forecaster with sequence of probability forecasts $(q_t)_{t \in [T]}$ against outcome sequence $(y_t)_{t \in [T]}$, the cumulative squared loss $\mathcal{L}_T = \sum_{t=1}^T \loss(q_t, y_t)$ admits the decomposition
$\mathcal{L}_T = \mathcal{L}_T^{(\mathrm{ref})} + \mathcal{L}_T^{(\mathrm{cal})}$, where
\begin{align*}
\mathcal{L}_T^{(\mathrm{ref})} = \sum_{t=1}^T \left\| \mathbf{e}_{y_t} - \bar{p}_T(q_t) \right\|_2^2
\end{align*}
is the refinement score and
\begin{align*}
\mathcal{L}_T^{(\mathrm{cal})} = \sum_{t=1}^T \left\| \bar{p}_T(q_t) - q_t \right\|_2^2
\end{align*}
is the calibration score.

Now, we say that a sequence $(\pcalibeat_t)_{t \in [T]}$ calibeats $(q_t)_{t \in [T]}$ if
\begin{align*}
\sum_{t=1}^T \loss(q_t, y_t)
- \sum_{t=1}^T \loss(\pcalibeat_t, y_t) 
\geq \mathcal{L}_T^{(\mathrm{cal})} + o(T) ,
\end{align*}
where $\mathcal{L}_T^{(\mathrm{cal})}$ is the calibration score under $(q_t)_{t \in [T]}$.

In \cref{sec:calibeating-warmup}, we show that the concept of calibeating --- developed above for squared loss --- can be extended to arbitrary proper losses.

\paragraph{Representation of proper loss in terms of convex function.}

For a convex function $\psi:\Delta_d\mapsto \overline{\mathbb{R}}$ we define its {\em Bregman divergence}  $D_{\psi}:\Delta_d\times\Delta_d\mapsto\overline{\mathbb{R}}$ as
\[ D_{\psi}(p,q)=\psi(p)-\psi(q)-\langle \nabla \psi(q),p-q\rangle \qquad \forall p\in \Delta_d,\,\forall q\in \mbox{dom}(\partial \psi) , \]
where $\nabla \psi$ is a (measurable) selection of subgradients of $\psi$; in particular, $\nabla \psi(p)\in \partial \psi(p)$, for all $p\in \mbox{dom}(\partial \psi)$. One of the most useful properties of Bregman divergences is the three-points identity \citep{chen1993convergence},
\begin{align} \label{eqn:three_points_identity}
D_\psi(p, q) = D_\psi(p, r) + D_\psi(r, q) + \bigl\langle \nabla \psi(r) - \nabla \psi(q), p - r \bigr\rangle .
\end{align}

Next, we review the Savage representation of a proper loss \citep{savage1971elicitation,gneiting2007strictly,reid2011information}. 
\begin{theorem}[\cite{gneiting2007strictly}] \label{thm:Savage}
A loss function $\loss$ is (strictly) proper if and only if there exists a (strictly) convex function $\psi$ and a measurable subgradient selection $\nabla \psi$ such that for all $p\in \Delta_d$ and $\hat p\in \mbox{dom}(\partial \psi)$,
\begin{align}
\loss(\hat{p}, p) = -\psi(\hat{p}) - \langle \nabla \psi(\hat{p}), p - \hat{p} \rangle . \label{eqn:savage}
\end{align}
\end{theorem}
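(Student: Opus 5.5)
We prove the two directions separately, working throughout with the Bayes risk $\underline{L}(p) = \inf_{\hat p \in \Delta_d} \loss(\hat p, p)$. The idea is that the convex function in \cref{eqn:savage} must be $\psi = -\underline{L}$.

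\emph{Sufficiency.} Suppose $\psi$ is convex and $\loss$ is given by \cref{eqn:savage}. Then for all $p \in \Delta_d$ and $\hat p \in \mathrm{dom}(\partial\psi)$,
\begin{align*}
\loss(\hat p, p) - \loss(p, p) = \psi(p) - \psi(\hat p) - \langle \nabla\psi(\hat p), p - \hat p\rangle = D_\psi(p, \hat p) \ge 0 ,
\end{align*}
where we use that \cref{eqn:savage} at $\hat p = p$ gives $\loss(p,p) = -\psi(p)$. Nonnegativity of $D_\psi$ is just the subgradient inequality. So $p$ minimizes $\hat p \mapsto \loss(\hat p, p)$, and $\loss$ is proper. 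If $\psi$ is strictly convex, the subgradient inequality is strict for $\hat p \neq p$. Hence $D_\psi(p,\hat p) > 0$, which gives strict propriety.

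\emph{Necessity.} Suppose $\loss$ is proper, and set $\psi(p) = -\loss(p,p) = -\underline{L}(p)$.
\begin{itemize}
\item For fixed $\hat p$, the map $q \mapsto \loss(\hat p, q) = \sum_y q_y \loss(\hat p, y)$ is affine in $q$. So $\underline{L}$ is an infimum of affine functions, hence concave, and $\psi$ is convex.
\item Define $g(\hat p) \in \reals^d$ by $g(\hat p)_y = -\loss(\hat p, y)$. Propriety gives $-\langle g(\hat p), q\rangle = \loss(\hat p, q) \ge \loss(q,q) = -\psi(q)$ for all $q$, with equality at $q = \hat p$. Combining these, $\psi(q) \ge \psi(\hat p) + \langle g(\hat p), q - \hat p\rangle$, so $g(\hat p) \in \partial\psi(\hat p)$.
\item Since $g$ is built from the loss itself, it is measurable whenever $\loss$ is. Taking $\nabla\psi := g$ therefore yields a measurable selection.
\end{itemize}
Substituting back,
\begin{align*}
-\psi(\hat p) - \langle g(\hat p), p - \hat p\rangle = \loss(\hat p,\hat p) + \loss(\hat p, p) - \loss(\hat p, \hat p) = \loss(\hat p, p) ,
\end{align*}
which is exactly \cref{eqn:savage}. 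If $\loss$ is strictly proper, strict convexity of $\psi$ follows by contradiction. Suppose $\psi$ were affine on a segment $[p_0, p_1]$ and let $m$ be its midpoint. The supporting affine function $q \mapsto -\loss(m, q)$ touches $\psi$ at $m$, lies below $\psi$, and $\psi$ is affine on the segment. Therefore it must agree with $\psi$ along the whole segment, including at $p_0$. That gives $\loss(m, p_0) = \loss(p_0, p_0)$ with $m \neq p_0$, contradicting uniqueness of the minimizer.

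\emph{Main obstacle.} The delicate point is extended-valued losses such as log loss. There $\psi$ may be finite on $\Delta_d$ while $\loss(\hat p, y) = +\infty$ at boundary points, so $\partial\psi$ can be empty there. This is why the statement only requires \cref{eqn:savage} for $\hat p \in \mathrm{dom}(\partial\psi)$. I would handle it by establishing the subgradient argument only on the set where $\loss(\hat p, \cdot)$ is finite, and adopting the conventions of \cite{gneiting2007strictly} on the boundary of the simplex.
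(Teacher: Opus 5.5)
The paper does not prove this theorem; it imports it from \cite{gneiting2007strictly}, so there is no in-paper argument to compare against. Your proof is the standard one from that source, and it is correct in the setting it handles:
\begin{itemize}
\item $\psi(p) = -\loss(p,p)$ is convex, as the negative of an infimum of the affine maps $q \mapsto \loss(\hat p, q)$;
\item propriety makes $q \mapsto -\loss(\hat p, q)$ an affine minorant of $\psi$ that touches it at $\hat p$, so $g(\hat p) = -(\loss(\hat p, y))_y$ is a subgradient;
\item strictness follows from your affine-on-a-segment argument.
\end{itemize}
This is complete whenever \eqref{eqn:savage} is available on all of $\Delta_d$. That holds in particular for real-valued losses, where your construction gives $\mathrm{dom}(\partial\psi) = \Delta_d$.

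The one soft spot is the boundary case, which you defer.

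In the sufficiency direction, you use $\loss(p,p) = -\psi(p)$ and compare against every $\hat p \in \Delta_d$. But the hypothesis only supplies \eqref{eqn:savage} for $\hat p \in \mathrm{dom}(\partial\psi)$, which for log loss is just the relative interior. It says nothing about $\loss$ elsewhere, so as written you only obtain propriety among forecasts in $\mathrm{dom}(\partial\psi)$.

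In the necessity direction, restricting to forecasts with $\loss(\hat p, \cdot)$ finite does not match the statement's quantifier. With $\psi = -\underline{L}$ on all of $\Delta_d$, the set $\mathrm{dom}(\partial\psi)$ can contain points where the loss is infinite. For example, redefine squared loss to be $+\infty$ at a single boundary forecast $\hat p$ for an outcome $y$ with $\hat p_y = 0$. The loss remains strictly proper with the same smooth $\psi$, yet no real vector satisfies \eqref{eqn:savage} at $\hat p$.

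The clean repair is the Gneiting--Raftery convention you allude to:
\begin{itemize}
\item assume $\loss(p,p) < \infty$ for all $p$, which forces $\loss(\hat p, y) < \infty$ whenever $\hat p_y > 0$;
\item allow $\nabla\psi(\hat p)_y = -\infty$ exactly when $\hat p_y = 0$, with $0 \cdot \infty = 0$.
\end{itemize}
Then your $g$ is a subgradient in this extended sense at every point of $\Delta_d$. No $\infty - \infty$ forms arise, because $(q - \hat p)_y = q_y \geq 0$ on those coordinates. Both of your directions then go through verbatim, including for log loss.
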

We call the function $\psi$ the {\em convex representation} of the proper loss $\ell$. This representation enjoys some properties that will be needed throughout.
\begin{proposition} \label{prop:basic_facts_Savage}
The following properties hold for any proper loss $\ell$ with convex representation $\psi$.
\begin{enumerate}[label=(\roman*)]
\item For all $y\in {\cal Y}$, $\psi(\mathbf{e}_y)=0$.
\item For all $p\in\Delta_d$ and $y\in {\cal Y}$, $\ell(q,y)=D_{\psi}(\mathbf{e}_y,p)$.
\end{enumerate}
\end{proposition}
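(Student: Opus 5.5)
The plan is to derive both items directly from the Savage representation \eqref{eqn:savage} of \cref{thm:Savage}, together with the normalization \eqref{eqn:fair}. Item (ii) contains a typo: it should read $\ell(p,y)=D_\psi(\mathbf{e}_y,p)$ for $p\in\mathrm{dom}(\partial\psi)$. Throughout, I work at points where the subgradient selection $\nabla\psi$ is defined, as in \cref{thm:Savage}.

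For item (i), I would take $\hat p = p = \mathbf{e}_y$ in \eqref{eqn:savage}. The inner-product term then vanishes, which gives $\ell(\mathbf{e}_y,\mathbf{e}_y) = -\psi(\mathbf{e}_y)$. By \eqref{eqn:fair} the left-hand side is $0$, so $\psi(\mathbf{e}_y)=0$.

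One subtlety is that \eqref{eqn:savage} is only asserted for $\hat p\in\mathrm{dom}(\partial\psi)$, so I need $\mathbf{e}_y$ to lie in that domain. If it does not, I would fall back on the following: the convex representation is determined only up to affine terms, and $\psi$ can be taken to be the negative Bayes risk $\psi(q)=-\ell(q,q)$. Indeed, evaluating \eqref{eqn:savage} at $p=\hat p$ gives $\psi(\hat p)=-\ell(\hat p,\hat p)$ on the domain, and the natural extension by lower semicontinuity or closure keeps this identity at the vertices. I then get $\psi(\mathbf{e}_y) = -\ell(\mathbf{e}_y,y) = 0$. I expect this boundary and domain issue to be the only real obstacle, and I would handle it by adopting $\psi(q)=-\ell(q,q)$ as the canonical choice of representation.

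For item (ii), I would start from $\ell(p,y)=\ell(p,\mathbf{e}_y)$ and apply \eqref{eqn:savage}:
\[
\ell(p,y) = -\psi(p) - \langle \nabla\psi(p), \mathbf{e}_y - p\rangle .
\]
Adding $\psi(\mathbf{e}_y)=0$ from item (i) turns the right-hand side into
\[
\psi(\mathbf{e}_y)-\psi(p)-\langle\nabla\psi(p),\mathbf{e}_y-p\rangle,
\]
which is exactly $D_\psi(\mathbf{e}_y,p)$ by the definition of the Bregman divergence.
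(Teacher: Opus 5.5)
Your proof is correct and is essentially the paper's. Item (i) comes from evaluating \eqref{eqn:savage} at $\hat p = p = \mathbf{e}_y$ (the paper phrases this as $-\psi(p)=\ell(p,p)$) together with \eqref{eqn:fair}, and item (ii) adds $\psi(\mathbf{e}_y)=0$ to \eqref{eqn:savage} to obtain $D_\psi(\mathbf{e}_y,p)$. You are also right that (ii) should read $\ell(p,y)$, and that step (i) tacitly needs $\mathbf{e}_y\in\mathrm{dom}(\partial\psi)$, which the paper does not address.

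One caveat on your fallback: the lower-semicontinuous closure of $\psi$ need not recover $-\ell(\mathbf{e}_y,\mathbf{e}_y)$ at a vertex. Properness forces the closure value there to be at most $0$, but a convex function may jump upward at an extreme point, so that value can be strictly negative. Drop the closure remark and rely only on the canonical choice $\psi(q)=-\ell(q,q)$, for which (i) is immediate from \eqref{eqn:fair}.
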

\begin{proof}
To see \emph{(i)}, by the Savage representation, 
$-\psi(p)=\ell(p,p)=\min_{\hat p\in \Delta_d} \ell(\hat p,p).$ Now, since the loss is proper and we assume \eqref{eqn:fair}, 
for $p=\mathbf{e}_y$, we have that 
$\ell(\mathbf{e}_y,y) = 0$, and thus $\psi(\mathbf{e}_y)=0$.

We now show \emph{(ii)}. For all $q \in \Delta_d$ and $y \in \Y$, using \eqref{eqn:savage} gives
\begin{align*}
\loss(q, y) 
&= -\psi(q) - \langle \nabla \psi(q), \mathbf{e}_y - q \rangle \\
&= \psi(\mathbf{e}_y) - \psi(q) - \langle \nabla \psi(q), \mathbf{e}_y - q \rangle \\
&= D_\psi(\mathbf{e}_y, q) ,
\end{align*}
where the second line just uses the fact that $\psi(\mathbf{e}_y) = 0$ from \emph{(i)}.
\end{proof}

\section{Background and Related work}
\label{sec:related-work}

\paragraph{Proper losses.}
Proper losses were first introduced in the special case of Brier loss by \cite{brier1950verification} and then more generally by \cite{mccarthy1956measures}. These short papers were succeeded by longer expositions by \cite{shuford1966admissible}, \cite{savage1971elicitation}, \cite{buja2005loss}, \cite{gneiting2007strictly}, and \cite{reid2011information}. 
In the binary case, \cite{shuford1966admissible} characterized proper losses via what is now commonly called the Savage representation of a proper loss. \cite{savage1971elicitation} obtained his eponymous representation in the multi-class case. In a fundamental paper, \cite{gneiting2007strictly} provided a more rigorous treatment of the Savage representation. The explicit connection between proper losses and Bregman divergences was first made --- essentially in parallel --- by \cite{gneiting2007strictly}, \cite{grunwald2004game}, and \cite{buja2005loss}, with the latter authors focusing on the binary case.

\cite{degroot1983comparison} showed for binary outcomes that the expectation of a proper loss can be decomposed into the refinement score and the calibration score. \cite{brocker2009reliability} showed\footnote{See equation (10) of \cite{brocker2009reliability}; note that $d(\cdot, \cdot)$ is a Bregman divergence but with the order of the arguments reversed, and $S(\cdot, \cdot)$ is a loss.} this decomposition holds for general proper losses. Here we review this decomposition to help with the understanding of \cref{sec:calibeating-warmup}. Let the forecast $R$ be a random variable which can (and, if it is any good, should) depend on the outcome $Y$, also viewed as a random variable. Assume that $Y$ has law $P$ and let $P_r$ denote the conditional law of $Y$ given $R = r$. Then the expected loss conditional on report realization $r$ is
\begin{align*}
\loss(r, P_r) 
&= \E [ \loss(r, Y) \mid R = r] \\
&= \E_{Y \sim P_r} [ \loss(P_r, Y) \mid R = r] + D_\psi(P_r, r) \\
&= \loss(P_r, P_r) + D_\psi(P_r, r) ,
\end{align*}
where $\loss(P_r, P_r)$ is the conditional refinement score and $D_\psi(P_r, r)$ is the conditional calibration score, both conditional on $R = r$. 
We can then express the (unconditional) expectation of the loss of $R$ in terms of the (unconditional) refinement and calibration scores:
\begin{align*}
\E [ \loss(R, Y) ] 
= \E [ \loss(P_R, P_R) ] + \E [ D_\psi(P_R, R) ] .
\end{align*}
This decomposition is completely analogous to the decomposition in the online setting, shown both by \cite{foster2023calibeating} for Brier loss and log loss and shown by us (\cref{sec:calibeating-warmup}) for general proper losses.

\paragraph{Follow The Regularized Leader.}
FTRL has too rich of a history to cover here; we refer the reader to the excellent history given in bibliographical notes of \cite{orabona2019modern}. One standard analysis of FTRL's regret is to equate the regret with the sum of two regrets: the regret of FTRL versus BTRL (which is a type of stability term) and the regret of BTRL (versus the best prediction in hindsight). When the loss is Lipschitz, a simple analysis of the stability term involves bounding each iterate by the learning rate $\eta$ times the dual norm of a gradient. For proper losses, a tighter analysis of the stability term is possible: properness allows for a closed form expression for FTRL and BTRL, and again due to properness, the predictions of FTRL and BTRL in a given iterate are close. This tighter analysis is implicit in the analysis of \cite{foster2023calibeating} and is used in the FTL analysis of \cite{luo2024optimal}.

Regarding the regret of BTRL, the standard analysis is to apply the ``Be The Leader'' Lemma to the regularized losses, giving an upper bound equal to the difference between the regularizer's maximum value and the regularizer's evaluation at FTRL's initial prediction. The Be The Leader Lemma is originally due to \cite{hannan1957approximation} (see also see also Lemma 3.1 of \cite{cesabianchi2006prediction}). For proper losses, a tighter analysis of BTRL's regret is also possible. In the case of Brier loss, \cite{foster2023calibeating} showed how BTRL's regret (although they did not use this language) can be expressed using the online formula for the variance. This formula can be proved using the law of total variance. \cite{gupta2022ensembles} showed a generalized law of total variance that holds for Bregman divergences; we use this result to obtain a generalized online formula for the variance and ultimately extend Foster and Hart's expression for BTRL's regret in terms of this generalized online formula for the variance. The generalized law of total variance shown by \cite{gupta2022ensembles} is of intrinsic interest, but Gupta et al.~emphasized the importance of this law for separating out the effect of different sources of randomness on the performance of machine learning algorithms.

\paragraph{Calibration and Calibeating.} 
The calibration literature is too broad to be accurately represented in this review. Nevertheless, it is worth pointing out that the study of calibration in the online setting was initiated by the work of \cite{Dawid:1982}, that established asymptotic calibration for coherent forecasters in a Bayesian setting.  This paper also introduces a strong impossibility result: no online forecaster can calibrate itself against nature in a coherent way. This impossibility was eventually circumvented by the remarkable work of \cite{Foster:1998}, with the introduction of randomized strategies. This line of work later inspired the idea of calibeating \citep{foster2023calibeating}.

\cite{KuleshovErmon:2017}  study online binary  recalibration for proper losses. 
The main distinction between this work and the calibeating literature is that no `beating' is established: namely, the recalibration procedure is approximately calibrated, but its regret is only approximately bounded by  that of the uncalibrated procedure used as input.

Aside from papers mentioned earlier, there are not many works focusing on calibeating or recalibration more broadly. In the offline setting, Platt's scaling is a popular approach \citep{Platt:1999}. This method takes a pretrained classifier as input and recalibrates it on new data by warping within a parametric family of sigmoids. An online recalibration strategy based on Platt's scaling was proposed by \cite{gupta2023online}. We remark that their recalibration strategy is based on Brier score and therefore is not directly related to the interests of our work.

\paragraph{Concurrent works.}
After completing our work, we became aware of two concurrent and independent works, by \cite{chen2026calibeating} and \cite{foster2026proper}, 
which also study calibeating. We discuss these works in turn.

Both \cite{chen2026calibeating} and our work make exactly the same key observation that calibeating can be reduced to no-regret learning. The works diverge at this point. When considering regret that is logarithmic in $T$, \cite{chen2026calibeating} consider using exponentially weighted online optimization \citep{hazan2007logarithmic}; this method is loss-specific, meaning that changing the loss will change the sequence of predictions.\footnote{However, for bounded, proper losses, they consider a prior U-calibration algorithm of \citep{luo2024optimal}.} On the other hand, we consider a simple FTRL-style algorithm that U-calibeats (i.e, simultaneously calibeats) for a class of losses that includes Lipschitz proper losses and $\alpha$-Tsallis losses (for $\alpha \in [1, 2]$). Interestingly, \cite{chen2026calibeating} also show that lower bounds for regret minimization imply lower bounds for calibeating, giving minimax equivalence. They also show results for multi-calibeating and obtaining calibeating predictors that cannot themselves be ``calibeaten''; while very interesting, these results do not overlap with our work and so are out of scope for our comparison.

\cite{foster2026proper} study calibration and calibeating for the class of Lipschitz proper losses. Among their main results that may overlap with our work\footnote{Similar to original paper of \cite{foster2023calibeating}, the new paper of \cite{foster2026proper} also gives results for multi-calibeating and obtaining a calibeating predictor that itself cannot be calibeaten; these results are no doubt interesting but out of scope for our comparison.} are that calibration under squared loss implies calibration under all Lipschitz proper losses and that calibeating under squared loss does not imply calibeating for general, Lipschitz proper losses.
Regarding the first result, they define a notion of \emph{proper calibeating}: a single predictor \emph{proper calibeats} if it simultaneously calibeats for all Lipschitz\footnote{Compactness of the simplex implies these losses are also bounded.} proper losses. They show that a simple predictor, FTL, proper calibeats. Comparing to our work, proper calibeating is equivalent to U-calibeating when the class of losses is all Lipschitz proper losses. We show that the FTRL predictor \eqref{eqn:ftrl} proper calibeats, but more than that, it U-calibeats for an expanded class of losses that includes the $\alpha$-Tsallis family of losses for $\alpha \in [1, 2]$; recall that $\alpha$-Tsallis loss is not Lipschitz for $\alpha \in [1, 2)$. To prove that FTL proper calibeats, \cite{foster2026proper} generalize various supporting technical results of their prior analysis \citep{foster2023calibeating} to accommodate general proper losses. We independently also extend the prior analysis of \cite{foster2023calibeating}, but for FTRL, which introduces some additional challenges.

Ultimately, despite a similar motivation to extend the theory of calibeating to general proper losses, the results in our work and that of \cite{foster2026proper} are mostly independent (when considering their common ancestor of \cite{foster2023calibeating}). Also, in particular, their work does not study non-Lipschitz losses and (relatedly) does not incorporate regularization.

\section{Analysis framework} \label{sec:analysis-framework}

\subsection{From Calibeating to U-Calibration} \label{sec:calibeating-warmup}

To isolate the crux of the connection between calibeating and no-regret learning, in this section we consider a forecaster whose probability forecasts $(q_t)_{t \geq 1}$ belong to a finite set of forecasts $\mathcal{Q} \subset \Delta_d$. For this section only, the reader should think of each forecast from $\mathcal{Q}$ occurring frequently in the sequence $(q_t)_{t \geq 1}$. In general, the probability forecast in each round can be and likely is distinct, even as $T \rightarrow \infty$. We handle this more general setting in \cref{sec:calibeating} by extending our analysis in the present section with a more or less standard binning approach.

We begin with the decomposition
\begin{align*}
\sum_{t=1}^T \loss_t(q_t, y_t) 
= \sum_{x \in \mathcal{X}} \sum_{t \in \mathcal{T}_x} \loss_t(q_t, y_t) .
\end{align*}

\begin{proposition} \label{prop:decomposition}
For any $x \in \mathcal{X}$,
\begin{align}
\sum_{t \in \mathcal{T}_x} \loss(q_t, y_t) 
= \sum_{t \in \mathcal{T}_x} D_\psi(y_t, \bar{p}_T(x)) 
  + \sum_{t \in \mathcal{T}_x} D_\psi(\bar{p}_T(x), x) . \label{eqn:decomposition}
\end{align}
\end{proposition}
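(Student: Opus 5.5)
Since $q_t = x$ for every $t \in \mathcal{T}_x$, the left-hand side of \eqref{eqn:decomposition} is $\sum_{t \in \mathcal{T}_x} \loss(x, y_t)$. By \cref{prop:basic_facts_Savage}(ii), each term equals $D_\psi(\mathbf{e}_{y_t}, x)$. Here, as in the statement, $D_\psi(y_t,\cdot)$ means $D_\psi(\mathbf{e}_{y_t},\cdot)$. The plan is to apply the three-points identity \eqref{eqn:three_points_identity} to each term with $p = \mathbf{e}_{y_t}$, $r = \bar{p}_T(x)$ and $q = x$:
\begin{align*}
D_\psi(\mathbf{e}_{y_t}, x) = D_\psi(\mathbf{e}_{y_t}, \bar{p}_T(x)) + D_\psi(\bar{p}_T(x), x) + \bigl\langle \nabla \psi(\bar{p}_T(x)) - \nabla \psi(x), \mathbf{e}_{y_t} - \bar{p}_T(x) \bigr\rangle .
\end{align*}

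Summing over $t \in \mathcal{T}_x$ produces exactly the two sums on the right-hand side of \eqref{eqn:decomposition}, plus a cross term. The vector $\nabla \psi(\bar{p}_T(x)) - \nabla \psi(x)$ does not depend on $t$, so it factors out of the sum. By definition, $\bar{p}_T(x) = \frac{1}{|\mathcal{T}_x|}\sum_{t \in \mathcal{T}_x} \mathbf{e}_{y_t}$, hence $\sum_{t \in \mathcal{T}_x} (\mathbf{e}_{y_t} - \bar{p}_T(x)) = 0$ and the cross term vanishes. This is the direct analogue of the bias--variance cancellation in the squared-loss case.

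The main obstacle is well-definedness rather than algebra, since the three-points identity needs $\nabla\psi$ at both $x$ and $\bar{p}_T(x)$. At $x$ the subgradient exists whenever $\loss(x,\cdot)$ is given by the Savage representation, which I will assume as in \cref{thm:Savage}. At $\bar{p}_T(x)$ it may fail on the boundary of the simplex, as for log loss.

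I will first check whether the identity can be verified without $\nabla\psi(\bar{p}_T(x))$. Expanding the definitions shows that the terms involving $\nabla\psi(\bar{p}_T(x))$ cancel. Using $\psi(\mathbf{e}_y) = 0$, the first sum becomes $-|\mathcal{T}_x|\,\psi(\bar{p}_T(x))$, since its gradient terms sum to zero. So the claim reduces to
\begin{align*}
\sum_{t\in\mathcal{T}_x}\bigl(-\psi(x) - \langle \nabla\psi(x), \mathbf{e}_{y_t} - x\rangle\bigr) = |\mathcal{T}_x|\bigl(-\psi(x) - \langle\nabla\psi(x), \bar{p}_T(x) - x\rangle\bigr),
\end{align*}
which holds by linearity. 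This also settles the infinite-valued cases, with the usual convention that $\psi(\bar{p}_T(x))$ is finite on the simplex.
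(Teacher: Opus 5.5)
Your proposal is correct and follows the paper's proof: you write each loss as $D_\psi(\mathbf{e}_{y_t}, x)$, apply the three-points identity with middle point $\bar{p}_T(x)$, and note that the cross term vanishes because $\sum_{t \in \mathcal{T}_x} (\mathbf{e}_{y_t} - \bar{p}_T(x)) = 0$. Your additional direct expansion goes beyond what the paper writes. It shows the identity depends only on the value $\psi(\bar{p}_T(x))$, which is finite since convexity and $\psi(\mathbf{e}_y)=0$ give $\psi \le 0$ on the simplex, and not on a subgradient there. This correctly handles boundary points such as those arising for log loss, which the paper does not discuss.
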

\begin{proof}
First, by \Cref{prop:basic_facts_Savage}, we have that
for all $q \in \Delta_d$ and $y \in \Y$, $\loss(q, y)=D_\psi(\mathbf{e}_y, q)$. 
Next, from the three-points identity \eqref{eqn:three_points_identity},
\begin{align*}
D_\psi(y_t, q_t) = D_\psi(y_t, \bar{p}_T(q_t)) + D_\psi(\bar{p}_T(q_t), q_t) + \bigl\langle \nabla \psi(\bar{p}_T(q_t)) - \nabla \psi(q_t), y_t - \bar{p}_T(q_t) \bigr\rangle .
\end{align*}

Let us consider all rounds for which $q_t$ takes some fixed value $x \in \Delta_d$. Using the above expression, the forecaster's cumulative loss in these rounds is
\begin{align*}
&\sum_{t \in \mathcal{T}_{x}} \biggl( 
       D_\psi(y_t, \bar{p}_T(x)) 
       + D_\psi(\bar{p}_T(x), x) 
       + \Bigl\langle \nabla \psi(\bar{p}_T(x)) - \nabla \psi(x), 
                              y_t - \bar{p}_T(x) 
           \Bigr\rangle 
   \biggr) \\
&= \sum_{t \in \mathcal{T}_{x}} \Bigl( D_\psi(y_t, \bar{p}_T(x)) + D_\psi(\bar{p}_T(x), x) \Bigr) ,
\end{align*}
where we used the fact that $\sum_{t \in \mathcal{T}_x} \left( y_t - \bar{p}_T(x) \right) = 0$.
\end{proof}

We call the first summation in the right-hand side of \eqref{eqn:decomposition} the 
\emph{conditional refinement score} (given $x$) because, after summing over $x$, it becomes the (unconditional) refinement score. We call the second summation the \emph{conditional calibration score} (given $x$); again, summing over $x$ yields the (unconditional) calibration score. In the case of squared loss, the conditional refinement score is nothing other than the variance of the outcomes 
among the rounds in which the forecaster played forecast $x$. 
If we think of each forecast $x$ as defining its own bin\footnote{\cref{sec:calibeating} considers the more general setting in which each bin can be associated with multiple forecasts.} --- all rounds where the forecast is made belong to this bin --- then for a fixed binning, this variance of the outcomes within a bin is fundamental to the sequence of outcomes and cannot be reduced by the forecaster. For general proper losses, this irreducible quantity is a generalized entropy. It is fruitful to consider the following normalized version of the conditional refinement score:
\begin{align*}
\frac{1}{|\mathcal{T}_x|} \sum_{t \in \mathcal{T}_x} D_\psi(y_t, \bar{p}_T(x)) .
\end{align*}
Using the simple observation that $D_\psi(y_t, \bar{p}_T) = \loss(\bar{p}_T, y_t)$ and the properness of the loss, the above is equal to
\begin{align*}
\frac{1}{|\mathcal{T}_x|} \sum_{t \in \mathcal{T}_x} \loss(\bar{p}_T(x), y_t) 
= \min_{q \in \Delta_d} \frac{1}{|\mathcal{T}_x|} \sum_{t \in \mathcal{T}_x} \loss(q, y_t) .
\end{align*}
Using this alternate form for the conditional refinement score immediately gives the following re-expression of \Cref{prop:decomposition}.

\begin{proposition} \label{prop:calibration-regret}
It holds that
\begin{align*}
\sum_{t \in \mathcal{T}_x} D_\psi(\bar{p}_T(x), x) 
= \sum_{t \in \mathcal{T}_x} \loss_t(q_t, y_t) 
  - \min_{q \in \Delta_d} \sum_{t \in \mathcal{T}_x} \loss(q, y_t) .
\end{align*}
\end{proposition}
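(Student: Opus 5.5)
The identity follows by rearranging \Cref{prop:decomposition} and then identifying the refinement term as the minimal cumulative loss on the rounds $\mathcal{T}_x$. I will fix $x$ and write $n_x = |\mathcal{T}_x|$. If $n_x = 0$, both sides are empty sums, so assume $n_x \geq 1$. Then $\bar{p}_T(x) = \frac{1}{n_x}\sum_{t \in \mathcal{T}_x} \mathbf{e}_{y_t}$ is well defined.

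\textbf{Step 1.} Use $q_t = x$ for every $t \in \mathcal{T}_x$ and rearrange \eqref{eqn:decomposition} to get
\begin{align*}
\sum_{t \in \mathcal{T}_x} D_\psi(\bar{p}_T(x), x) = \sum_{t \in \mathcal{T}_x} \loss(q_t, y_t) - \sum_{t \in \mathcal{T}_x} D_\psi(\mathbf{e}_{y_t}, \bar{p}_T(x)) .
\end{align*}
Subtracting requires the refinement term to be finite. When the left-hand side of \eqref{eqn:decomposition} is finite, this is automatic because Bregman divergences are nonnegative. In the infinite case (log loss with $x$ on the boundary), both sides equal $+\infty$ under the convention that the minimum is finite, which is the case checked below.

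\textbf{Step 2.} By \Cref{prop:basic_facts_Savage}(ii), $D_\psi(\mathbf{e}_{y_t}, \bar{p}_T(x)) = \loss(\bar{p}_T(x), y_t)$. It therefore suffices to show that $\bar{p}_T(x)$ minimizes $q \mapsto \sum_{t \in \mathcal{T}_x} \loss(q, y_t)$ over $\Delta_d$. By the definition of the extended loss and linearity of expectation,
\begin{align*}
\sum_{t \in \mathcal{T}_x} \loss(q, y_t) = n_x \, \E_{Y \sim \bar{p}_T(x)}[\loss(q, Y)] = n_x \, \loss(q, \bar{p}_T(x)) .
\end{align*}
Properness says $\bar{p}_T(x)$ is a minimizer of $q \mapsto \loss(q, \bar{p}_T(x))$, so
\begin{align*}
\min_{q \in \Delta_d} \sum_{t \in \mathcal{T}_x} \loss(q, y_t) = \sum_{t \in \mathcal{T}_x} \loss(\bar{p}_T(x), y_t) ,
\end{align*}
and the minimum is attained. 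Substituting this into Step 1 gives the claim.

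The only delicate point is the possible value $+\infty$. The loss $\loss(\bar{p}_T(x), y_t)$ equals $-\psi(\bar{p}_T(x)) - \langle \nabla\psi(\bar{p}_T(x)), \mathbf{e}_{y_t} - \bar{p}_T(x)\rangle$. I will check that it is finite for outcomes $y_t$ in the support of $\bar{p}_T(x)$, which holds for every $t \in \mathcal{T}_x$ by construction. This requires $\bar{p}_T(x) \in \mathrm{dom}(\partial\psi)$ relative to its face, or the convention that the Savage representation holds there. For log loss this is immediate because $-\log \bar{p}_T(x)_{y_t} < \infty$. Everything else is direct substitution.
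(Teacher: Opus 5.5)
Your proposal is correct and follows the paper's route: rearrange \cref{prop:decomposition}, then identify the conditional refinement term $\sum_{t \in \mathcal{T}_x} D_\psi(\mathbf{e}_{y_t}, \bar{p}_T(x)) = \sum_{t \in \mathcal{T}_x} \loss(\bar{p}_T(x), y_t)$ as the minimal cumulative loss on $\mathcal{T}_x$, using properness at the empirical distribution $\bar{p}_T(x)$. Your extra handling of empty bins and of the $+\infty$ case (e.g., log loss with $x$ on the boundary) goes beyond the paper, which passes over these points silently, but it does not change the argument.
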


Thus, the conditional calibration score (the second summation on the right-hand side of \eqref{eqn:decomposition}) is precisely the regret of the forecaster for the subgame induced by $\mathcal{T}_x$. From here, our goal is to, in an online way, improve the forecaster's predictions by swapping out the sequence $(q_t)_{t \in \mathcal{T}_x}$ with an improved sequence $(p_t)_{t \in \mathcal{T}_x}$ that obtains as low regret as possible. Put another way, considering the subgame induced by $\mathcal{T}_x$, we wish to beat the original forecaster (i.e., reduce its cumulative loss) by an amount which --- modulo the regret --- is equal to the conditional calibration score. After summing over $x$, we wish to --- modulo the sum (over $x$) of the regrets --- beat the original forecaster by an amount equal to its calibration score. Moreover, to accommodate downstream agents that might care about different proper losses, we would like to choose the new sequence $(p_t)_t$ in a way that is agnostic to the particular proper loss being used. That is, we want the new sequence to obtain low regret for all loss functions in some large class. This problem is exactly the problem of U-calibration.

\subsection{U-Calibration via FTRL}
\label{sec:u-calibration}

Let $\mathbf{p}^T$ denote a sequence of forecasts $(p_t)_{t \in [T]}$, and let $\mathbf{y}^T$ denote a sequence of outcomes $(y_t)_{t \in [T]}$. 
The regret $\mathcal{R}^{(\loss)}(\mathbf{p}^T, \mathbf{y}^T)$ of an online learning algorithm that forms predictions $\mathbf{p}^T$ under outcome sequence $\mathbf{y}^T$ is defined as
\begin{align*}
\mathcal{R}^{(\loss)}(\mathbf{p}^T, \mathbf{y}^T) 
= \sum_{t=1}^T \loss(p_t, y_t) 
  - \min_{p \in \Delta_d} \sum_{t=1}^T \loss(p, y_t) .
\end{align*}
Formally, as it was originally introduced by \cite{kleinberg2023ucalibration}, the U-calibration error of an online learning algorithm that forms predictions $(p_t)_t$ is the supremum, taken over all proper losses $\loss$ with bounded range $[-1, 1]$, of the regret $\mathcal{R}^{(\loss)}(\mathbf{p}^T, \mathbf{y}^T)$, i.e.,
\begin{align*}
\sup_{\loss \colon \Delta_d \times \Y \rightarrow [-1, 1]} 
\mathcal{R}^{(\loss)}(\mathbf{p}^T, \mathbf{y}^T) .
\end{align*}
Note that the single sequence of predictions is assessed against any such bounded proper loss. While this class of bounded proper losses includes popular losses like squared loss, 0-1 loss, and cost-sensitive losses (referred to as V-shaped scoring rules in \cite{kleinberg2023ucalibration}), a notable, popular omission from this class is the log loss. Morever, there is an entire spectrum of losses that interpolate between the log loss and the squared loss; this spectrum is traced out by the family of unscaled $\alpha$-Tsallis losses.

In this work, we consider a variant of U-calibration that, using a single sequence of predictions, seeks to obtain, for each proper loss for which the minimax regret is $O(\log T)$, the optimal regret for that loss. While our focus on $O(\log T)$ regret necessarily rules out the 0-1 loss, this focus allows us to pursue refined dependence on dimension quantities. In particular, we seek improved dependence on the dimension $d$ compared to analyses from previous works.

To set the stage for our algorithm, we begin by considering Follow The Leader (FTL) in the case of Lipschitz losses. 
In round $t$, FTL plays the empirical mean $\bar{p}_{t-1}$. For the class of Lipschitz losses, it is not hard to show that FTL obtains regret that is logarithmic in $T$. Yet, there are various proper losses for which FTL produces linear regret. 
For example, under log loss, it is not hard to show that FTL can obtain infinite (hence, at least linear) regret. 
A standard solution is to somehow restrict predictions to a suitable interior of the simplex. One way to accomplish this restriction is to employ regularization. This is precisely the approach we take here: for each loss, we use an instance of FTRL. Depending on the loss, the choice of regularizer varies, but due to a nice property of proper losses (and the particular choice of regularizer we employ), the effect of the regularization on the prediction $p_t$ will be the same for all losses. That is, the predictions are agnostic to the particular proper loss being used. Yet, for a large class of losses, we will be able to obtain regret bounds that are logarithmic in $T$ with either the optimal (or improved, relative to previous results) dependence on the dimension $d$.

The FTRL algorithm that we use is defined according to
\begin{align} 
p_{t+1} = \argmin_{p \in \Delta_d} \left\{ \sum_{s=1}^t \loss(p, y_s) 
      + \frac{1}{\eta} \Phi(p) \right\} , \label{eqn:ftrl}
\end{align}
where $\Phi(p) = \sum_{j=1}^d \loss(p, \mathbf{e}_j)$. Since the regularizer is a sum of losses, we call this regularizer \emph{loss-based regularization}. Given the form of the regularizer, it is easy to see that $p_{t+1}$ takes the form
\begin{equation*}
p_{j,t+1} = \frac{c_{j,t} + \frac{1}{\eta}}{t + \frac{d}{\eta}} ,
\end{equation*}
where $c_{j,t} = \sum_{s=1}^t \ind{\mathbf{e}_{y_s}=j}$ is the number of times outcome $j$ has occurred by the end of round $t$.

The next proposition provides a regret decomposition that is the foundation for our regret analysis. We express the regret of FTRL as the sum of the regret of FTRL relative to Be The Regularized Leader (BTRL), the regret of BTRL relative to $p_{T+1}$ (the best action in hindsight when regularizing the losses), and the regret of $p_{T+1}$ relative to the best action in hindsight. In the result below, recall that $\bar{p}_T = \frac{1}{T} \sum_{t=1}^T \mathbf{e}_{y_t}$ is the best action in hindsight from the properness of the loss.

\begin{proposition}  \label{prop:doubly-generalized-from-gen-var}
Let $\eta > 0$ and take $p_t$ as in \eqref{eqn:ftrl}. Then we have the following regret equality for FTRL:
\begin{align*}
\sum_{t=1}^T \loss(p_t, y_t) - \sum_{t=1}^T \loss(\bar{p}_T, y_t) 
&= \sum_{t=1}^T \loss(p_t, y_t) - \sum_{t=1}^T \loss(p_{t+1}, y_t) && \text{\emph{(stability)}}\\
&\quad+ \sum_{t=1}^T \loss(p_{t+1}, y_t) - \sum_{t=1}^T \loss(p_{T+1}, y_t) && \text{\emph{(regret of BTRL vs $p_{T+1}$)}} \\
&\quad+ \sum_{t=1}^T \loss(p_{T+1}, y_t) - \sum_{t=1}^T \loss(\bar{p}_T, y_t) && \text{\emph{(smoothing)}} .
\end{align*}
\end{proposition}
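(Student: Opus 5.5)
The identity in \cref{prop:doubly-generalized-from-gen-var} is a telescoping decomposition, so the plan is to verify that the three bracketed differences sum to the left-hand side. I will add the stability term and the BTRL term: the sums $\sum_{t=1}^T \loss(p_{t+1}, y_t)$ appear with opposite signs and cancel. Adding the smoothing term then cancels $\sum_{t=1}^T \loss(p_{T+1}, y_t)$. What remains is $\sum_{t=1}^T \loss(p_t, y_t) - \sum_{t=1}^T \loss(\bar{p}_T, y_t)$, which is exactly the left-hand side. No property of the FTRL iterates \eqref{eqn:ftrl} is needed beyond their being well defined.

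The only real care point, and the step I expect to require attention, is that losses may take the value $+\infty$ (log loss). In that case the cancellation above could be of the form $\infty-\infty$. So the key step is to check that every quantity appearing is finite.

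First, for any $\eta>0$, the closed form $p_{j,t+1} = (c_{j,t} + 1/\eta)/(t + d/\eta)$ shows that each $p_t$ with $t\in[T+1]$ lies in the relative interior of $\Delta_d$. Here I take $p_1$ to be the uniform distribution, which is the $t=0$ case of the formula. At interior points, \cref{prop:basic_facts_Savage} gives $\loss(p_t, y) = D_\psi(\mathbf{e}_y, p_t)$. This is finite because $\psi$ is finite on $\Delta_d$: it is convex with $\psi(\mathbf{e}_y)=0$ at the vertices, and the subgradient selection is assumed to exist at interior points.

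Second, $\sum_t \loss(\bar{p}_T, y_t)$ is finite even though $\bar{p}_T$ may lie on the boundary. It equals $T\,\loss(\bar{p}_T,\bar{p}_T) = -T\psi(\bar{p}_T)$ by properness and the Savage representation. More concretely, it is at most $\sum_t \loss(p_{T+1}, y_t) < \infty$, since $\bar{p}_T$ minimizes the cumulative loss.

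With all terms finite, the telescoping is a valid rearrangement of finite real numbers, which completes the proof.
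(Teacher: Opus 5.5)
Your proof is correct and takes the same approach as the paper, which states this proposition without proof because it is exactly the add-and-subtract (telescoping) identity you verify. Your finiteness check guards against $\infty - \infty$ for log loss, a point the paper leaves implicit: every $p_t$ with $t \in [T+1]$ is interior when $\eta > 0$, and $\sum_{t=1}^T \loss(\bar{p}_T, y_t) \le \sum_{t=1}^T \loss(p_{T+1}, y_t) < \infty$ by properness.
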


In \Cref{sec:regret-bounds}, we will show how to bound each of the terms in 
\cref{prop:doubly-generalized-from-gen-var} for various classes of losses. Here, we comment on our approach for controlling the regret of BTRL with respect to its final iterate $p_{T+1}$ (the best action in hindsight when considering the regularized cumulative loss). The typical way to control this regret is via the Be The Leader Lemma (see e.g.~Lemma 3.1 of \cite{cesabianchi2006prediction}), suitably adapted to accommodate regularization. 
In the special case of loss-based regularization, we introduce in \Cref{lemma:btrl-regret-equality} below an apparently new way to express this regret. The regret representation given in the lemma, being an equality, is stronger than what one gets from the Be The Leader Lemma approach. The key to our lemma is a generalized variance identity for Bregman divergences.

In the next lemma, for $j \in [d]$, we somewhat abuse notation and define $p_{j/d} = \frac{1}{j} \sum_{i=1}^j \mathbf{e}_i$. For clarity, note that $p_{1/d} = \mathbf{e}_1$, $p_{2/d} = \frac{1}{2} (\mathbf{e}_1 + \mathbf{e}_2)$, and $p_{d/d} = p_1 = \frac{1}{d} \mathbf{1}$ (which makes our abuse of notation consistent with our standard notation).

\begin{lemma} \label{lemma:btrl-regret-equality}
For any $\eta > 0$,
\begin{align*}
\sum_{t=1}^T \loss(p_{t+1}, y_t) - \sum_{t=1}^T \loss(p_{T+1}, y_t) 
&= \frac{1}{\eta} \sum_{j=2}^d \loss(p_{T+1}, \mathbf{e}_j)  
   - \frac{1}{\eta} \sum_{j=2}^d \loss(p_{j/d}, \mathbf{e}_j) \\
&\quad - \sum_{j=1}^{d-1} \frac{j}{\eta} D_\psi(p_{j/d}, p_{(j+1)/d}) 
       - \sum_{t=1}^T \left( \frac{d}{\eta} + t - 1 \right) D_\psi(p_t, p_{t+1}) .       
\end{align*}
\end{lemma}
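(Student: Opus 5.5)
The plan is to read every FTRL iterate as a weighted empirical mean of ``pseudo-observations'' and to track a generalized (Bregman) variance of this growing weighted point cloud. By the closed form $p_{j,t+1} = (c_{j,t} + 1/\eta)/(t + d/\eta)$, the iterate $p_{t+1}$ is the weighted mean of the points $\mathbf{e}_1,\dots,\mathbf{e}_d$, each with weight $1/\eta$, together with $\mathbf{e}_{y_1},\dots,\mathbf{e}_{y_t}$, each with weight $1$. In particular $p_1 = p_{d/d}$. Similarly, $p_{j/d}$ is the weighted mean of $\mathbf{e}_1,\dots,\mathbf{e}_j$ with weights $1/\eta$.

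For points $x_i$ with weights $w_i$, total weight $W$ and mean $m$, define the generalized variance
\[ V = \sum_i w_i D_\psi(x_i, m) = \sum_i w_i \psi(x_i) - W \psi(m). \]
The second form holds because the linear terms vanish, since $\sum_i w_i (x_i - m) = 0$.

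The first step is an online update formula for $V$. Add a point $z$ with weight $w$ to a cloud of total weight $W$ and mean $m$, and let the new mean be $m'$. Then
\[ V' - V = w D_\psi(z, m') + W D_\psi(m, m'). \]
To verify this, expand both sides with the definition of $D_\psi$. The difference equals $w\psi(z) + W\psi(m) - (W+w)\psi(m')$, and the inner-product terms cancel because $wz + Wm = (W+w)m'$. This is the generalized online variance formula. It can alternatively be derived from the generalized law of total variance of \cite{gupta2022ensembles}.

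The second step is to build the cloud in a fixed order. Start from $\mathbf{e}_1$ alone, which has $V = 0$. Add $\mathbf{e}_2,\dots,\mathbf{e}_d$ one at a time, each with weight $1/\eta$, so that adding $\mathbf{e}_{j+1}$ moves the mean from $p_{j/d}$ to $p_{(j+1)/d}$. Then add $\mathbf{e}_{y_1},\dots,\mathbf{e}_{y_T}$, each with weight $1$, so that adding $y_t$ moves the mean from $p_t$ (with prior weight $d/\eta + t - 1$) to $p_{t+1}$. Telescoping the update formula gives
\[ V_{\mathrm{final}} = \sum_{j=1}^{d-1}\Bigl[\tfrac1\eta D_\psi(\mathbf{e}_{j+1}, p_{(j+1)/d}) + \tfrac{j}{\eta} D_\psi(p_{j/d}, p_{(j+1)/d})\Bigr] + \sum_{t=1}^T \Bigl[ D_\psi(y_t, p_{t+1}) + \bigl(\tfrac d\eta + t - 1\bigr) D_\psi(p_t, p_{t+1})\Bigr]. \]
Directly from the definition, the same quantity equals
\[ V_{\mathrm{final}} = \tfrac1\eta \sum_{j} D_\psi(\mathbf{e}_j, p_{T+1}) + \sum_t D_\psi(y_t, p_{T+1}). \]

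The final step is to equate the two expressions and translate divergences into losses using \cref{prop:basic_facts_Savage}(ii), i.e., $D_\psi(\mathbf{e}_y, q) = \loss(q, y)$. Reindexing $j+1 \to j$ produces $\frac1\eta \sum_{j=2}^d \loss(p_{j/d}, \mathbf{e}_j)$. Solving for $\sum_t \loss(p_{t+1}, y_t) - \sum_t \loss(p_{T+1}, y_t)$ then yields the claimed identity.

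The main obstacle I expect is bookkeeping of indices at the two ends. In my accounting, the regularizer part of $V_{\mathrm{final}}$ naturally includes the $j=1$ term $\frac1\eta\loss(p_{T+1},\mathbf{e}_1)$. The corresponding term in the build-up vanishes because $\loss(p_{1/d},\mathbf{e}_1) = \loss(\mathbf{e}_1,\mathbf{e}_1) = 0$ by \eqref{eqn:fair}. When writing the proof I would double-check whether the first sum should run over $j = 1,\dots,d$ or whether the stated range $j=2,\dots,d$ is correct, and adjust accordingly.

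A secondary technical point is that $\psi$ may take value $+\infty$ or lack gradients on the boundary, as for log loss at $\mathbf{e}_1$. To handle this, I would use the form $V = \sum_i w_i D_\psi(x_i, m)$ throughout, and evaluate subgradients only at means that lie in the interior. The means $p_{j/d}$ for $j \ge 2$ are not interior either, so this may require a limiting argument or a slightly different starting cloud.
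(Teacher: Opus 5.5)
Your route is the paper's route. Your update formula $V'-V=wD_\psi(z,m')+WD_\psi(m,m')$ is exactly \cref{lemma:generalized-sn-thing}. Telescoping it over $(\mathbf{e}_1,\dots,\mathbf{e}_d,y_1,\dots,y_T)$ with weights $1/\eta$ and $1$ is how the paper proves the lemma via \cref{lemma:weighted-sum-divergences}. The one difference is that the paper derives the update formula from the generalized law of total variance (\cref{lemma:gen-law-total-var}). You instead verify it by direct expansion through $V=\sum_i w_i\psi(x_i)-W\psi(m)$, which is shorter and self-contained.

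On the index question, your accounting is right: the identity as printed is false, and the first sum must run over $j=1,\dots,d$. The paper's proof of \cref{lemma:weighted-sum-divergences} loses the term when it rewrites $s_n-s_1$ as $\sum_{i=2}^n w_iD_\psi(x_i,\bar{x}_n)$. In fact $s_1=w_1D_\psi(x_1,x_1)=0$, so $s_n-s_1=\sum_{i=1}^n w_iD_\psi(x_i,\bar{x}_n)$. The dropped term $w_1D_\psi(x_1,\bar{x}_{d+T})$ is precisely $\frac{1}{\eta}\ell(p_{T+1},\mathbf{e}_1)$.

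A concrete check: take squared loss with $d=2$, $\eta=1$, $T=1$, $y_1=1$, so $p_1=(\frac12,\frac12)$ and $p_2=(\frac23,\frac13)$. The left-hand side is $0$. The printed right-hand side is $\frac89-\frac12-\frac12-2\cdot\frac1{18}=-\frac29$. Adding the missing $\ell(p_2,\mathbf{e}_1)=\frac29$ restores equality. So do not try to adjust your argument toward the range $j=2,\dots,d$. State and prove the version with $\frac1\eta\sum_{j=1}^d\ell(p_{T+1},\mathbf{e}_j)$. The starting index of the second sum does not matter, since $\ell(p_{1/d},\mathbf{e}_1)=\ell(\mathbf{e}_1,\mathbf{e}_1)=0$.

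Your boundary concern needs no limiting argument. For $\alpha\in(1,2]$, $\nabla\psi(x)=\bigl(\frac{\alpha}{\alpha-1}x_j^{\alpha-1}\bigr)_j$ is finite on all of $\Delta_d$. For log loss, in each update step the points $z$, $m$, $m'$ lie in the face spanned by the support of $m'$, and $m'$ is in that face's relative interior. So take $\nabla\psi(m')$ to be the gradient of $\psi$ restricted to that face; the pairing $\langle\nabla\psi(m'),wz+Wm-(W+w)m'\rangle$ only sees those coordinates. This is also the convention that gives $D_\psi(\mathbf{e}_y,q)=-\log q_y$ for boundary $q$ with $q_y>0$. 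Every divergence in your telescoped sum has its first argument supported inside the support of the second, so all terms are finite.
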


Note that on the right-hand side of the equality, all the summation terms but the first one are nonpositive (either due to the nonnegativity of the loss or the nonnegativity of Bregman divergences). In our regret analysis for unscaled $\alpha$-Tsallis loss, we make use of the first nonpositive summation term, but we have not yet tried to make use of the last two summations (we simply drop them in our analysis). For more detail, see \cref{fn:better} on \cpageref{fn:better}.

The heart of the analysis behind \cref{lemma:btrl-regret-equality} is a generalized law of total variance \citep{gupta2022ensembles} for a generalized notion of variance based on Bregman divergences. From the generalized law of total variance, we derive generalized versions of both the variance update formula and the online formula for the variance. These latter two results may be of independent interest. For more information and a proof sketch of \cref{lemma:btrl-regret-equality}, we direct the interested reader to \cref{sec:proof-sketch-btrl-regret-equality}.

\section{Regret bounds} \label{sec:regret-bounds}

\subsection{Warmup: Lipschitz losses}

As a warm-up, we first show how to use the general FTRL-based regret decomposition \Cref{prop:decomposition} for losses that are $G$-Lipschitz. An equivalent result was shown by \cite{luo2024optimal}.

\begin{proposition} \label{prop:lipschitz}
Assume that for all outcomes $y$, the loss $\loss(\cdot, y)$ is strictly proper and $G$-Lipschitz in a norm $\|\cdot\|$. Let $R$ be the $\|\cdot\|$-diameter of $\Delta_d$. Then taking $\eta =  \infty$ so that FTRL becomes FTL, it holds for all $p \in \Delta_d$ that
\begin{align*}
\sum_{t=1}^T \bigl[ \loss(p_t, y_t) - \loss(p, y_t) \bigr] 
\leq G R (1 + \log T ).
\end{align*}
\end{proposition}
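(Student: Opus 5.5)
With $\eta = \infty$ the regularizer disappears and FTRL reduces to FTL. The plan is to use the decomposition of \cref{prop:doubly-generalized-from-gen-var} with $\eta=\infty$: the smoothing term is zero because $p_{T+1} = \bar p_T$. For the BTRL term, \cref{lemma:btrl-regret-equality} with $1/\eta = 0$ gives a right-hand side that is a negative sum of Bregman divergences, so this term is nonpositive. Alternatively, the classical Be The Leader lemma gives $\sum_t \loss(p_{t+1},y_t) \le \sum_t \loss(\bar p_T, y_t)$ directly, which sidesteps any concern about applying the lemma at $\eta=\infty$. Since $\bar p_T$ minimizes $\sum_t \loss(p, y_t)$ by properness, bounding the regret against $\bar p_T$ bounds it against every $p$.

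What remains is the stability term $\sum_{t=1}^T [\loss(p_t,y_t) - \loss(p_{t+1},y_t)]$. By $G$-Lipschitzness, each summand is at most $G\|p_t - p_{t+1}\|$. For $t \ge 1$, FTL plays $p_{t+1} = \bar p_t = \frac{1}{t}\sum_{s\le t}\mathbf{e}_{y_s}$. For $t\ge 2$ this gives
\[
p_{t+1} - p_t = \bar p_t - \bar p_{t-1} = \tfrac{1}{t}\bigl(\mathbf{e}_{y_t} - \bar p_{t-1}\bigr).
\]
Both $\mathbf{e}_{y_t}$ and $\bar p_{t-1}$ lie in $\Delta_d$, so $\|p_{t+1}-p_t\| \le R/t$. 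In round $1$, the FTL prediction is the uniform vector (the $\eta\to\infty$ limit of the closed form), and $\|p_1 - p_2\| \le R$ since both lie in the simplex. Summing, the stability term is at most $GR\sum_{t=1}^T 1/t \le GR(1+\log T)$, which is the claimed bound.

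The main obstacle is making the $\eta=\infty$ limit rigorous: FTL must be well defined, and $p_1$ must be specified. If we want to use \cref{lemma:btrl-regret-equality}, we also have to check that the $1/\eta$ terms vanish and that $D_\psi(p_t,p_{t+1})$ is finite. Strict properness gives uniqueness of the argmin, namely $\bar p_t$, so FTL is well defined. Lipschitzness makes the loss finite on the whole simplex, so all quantities are finite. As noted above, the cleanest route for the BTRL term is simply to cite the Be The Leader lemma and drop the nonpositive term.
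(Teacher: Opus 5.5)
Your proposal is correct and follows essentially the same route as the paper: it uses the three-term decomposition of \cref{prop:doubly-generalized-from-gen-var}, with the smoothing term vanishing and the BTRL term nonpositive (via \cref{lemma:btrl-regret-equality} at $\eta=\infty$), and then bounds the stability term by $\sum_t G\|p_t-p_{t+1}\|\le GR\sum_t 1/t$ using $\bar p_t-\bar p_{t-1}=\tfrac{1}{t}(\mathbf{e}_{y_t}-\bar p_{t-1})$. Beyond the paper's argument, you also handle $p_1$ and round $t=1$ explicitly, use the Be The Leader lemma to avoid the zero weights that arise when \cref{lemma:btrl-regret-equality} is applied at $\eta=\infty$, and reduce from arbitrary $p$ to $\bar p_T$ via properness; the paper leaves all of these implicit.
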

\begin{proof}
Since $\eta = \infty$, Proposition~\ref{prop:decomposition} together with Lemma~\ref{lemma:btrl-regret-equality} gives
\begin{align}
\sum_{t=1}^T \bigl[ \loss(p_t, y_t) - \loss(p, y_t) \bigr] 
&= \sum_{t=1}^T \bigl[ \loss(p_t, y_t) - \loss_t(p_{t+1}, y_t) \bigr]
- \sum_{t=1}^T \left(t - 1 \right) D_\psi(p_t, p_{t+1})
\nonumber \\
&\leq \sum_{t=1}^T \bigl[ \loss(p_t, y_t) - \loss(p_{t+1}, y_t) \bigr] \nonumber \\
&\leq \sum_{t=1}^T G \|p_t - p_{t+1}\| \label{eqn:Lipschitz} ,
\end{align}
where the last line uses the fact that the loss is Lipschitz.

Next, since the loss is proper and $\eta = \infty$, 
we have $p_t = \bar{p}_{t-1}$. Therefore,
\begin{align*}
\|p_t - p_{t+1}\| = \|\bar{p}_{t-1} - \bar{p}_t\| = \frac{1}{t} \|y_t - p_t\| \leq \frac{R}{t} .
\end{align*}
Hence, \eqref{eqn:Lipschitz} is at most
\begin{align*}
G R \sum_{t=1}^T \frac{1}{t} 
\leq G R (1 + \log T ) .
\end{align*}
\end{proof}

We also briefly mention that if instead we set $\eta = 1$ (as we will do for Tsallis losses), then we get a somewhat worse regret bound of $G R (d + \log T)$; this result is \cref{prop:lipschitz-ftrl} in \cref{app:lipschitz-ftrl}.

The squared loss (also called Brier loss) is defined as $\loss_{\mathrm{sq}}(p, y) = \sum_{j=1}^d (p_j - (\mathbf{e}_y)_j)^2$. We can take $\|\cdot\|$ to be the $\ell_1$ norm, so the diameter $R = 2$, and then $G = 2$ is an upper bound on $\ell_\infty$ norm of the gradient, so overall we get the regret bound $4 (1 + \log T)$.

\begin{corollary} \label{cor:squared-loss}
For the squared loss, for all $y \in [d]$, 
\begin{align*}
\|\nabla_p \loss(p, y)\|_\infty \leq 2 .
\end{align*}
Therefore, for FTL, it holds for all $p \in \Delta_d$ that
\begin{align*}
\sum_{t=1}^T \bigl[ \loss(p_t, y_t) - \loss(p, y_t) \bigr] 
\leq 4 (1 + \log T ).
\end{align*}
\end{corollary}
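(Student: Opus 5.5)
The corollary has two parts: a gradient bound for the squared loss, and then the regret bound. The second part will follow directly from \cref{prop:lipschitz}, so the work is in the first part.

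\textbf{Gradient bound.} We compute the gradient of $\loss_{\mathrm{sq}}(p,y) = \sum_{j=1}^d (p_j - (\mathbf{e}_y)_j)^2$ coordinatewise. This gives $\partial_{p_j} \loss(p,y) = 2(p_j - (\mathbf{e}_y)_j)$. Since $p \in \Delta_d$, each $p_j \in [0,1]$ and each $(\mathbf{e}_y)_j \in \{0,1\}$, so every coordinate has absolute value at most $2$. Hence $\|\nabla_p \loss(p,y)\|_\infty \le 2$.

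\textbf{Lipschitz constant.} By convexity of $\loss(\cdot, y)$ on the simplex (or simply the mean value theorem along segments) combined with Hölder's inequality $|\langle g, p-p'\rangle| \le \|g\|_\infty \|p-p'\|_1$, the loss $\loss(\cdot,y)$ is $2$-Lipschitz in the $\ell_1$ norm on $\Delta_d$.

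\textbf{Remaining hypotheses and diameter.} The squared loss is strictly proper: it has the convex representation $\psi(p) = \|p\|_2^2 - 1$, which is strictly convex, and its Bregman divergence $\|p-q\|_2^2$ matches \cref{prop:basic_facts_Savage}. It also satisfies \eqref{eqn:fair}. The $\ell_1$-diameter of $\Delta_d$ is $R = \max_{p,q} \|p-q\|_1 = 2$, attained at distinct vertices.

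\textbf{Conclusion.} Plugging $G = 2$ and $R = 2$ into \cref{prop:lipschitz} with $\eta = \infty$ (FTL) gives regret at most $4(1+\log T)$ against every $p \in \Delta_d$.

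I do not expect a real obstacle here. The only point needing care is confirming that the hypotheses of \cref{prop:lipschitz} (strict properness and the Lipschitz property in the chosen norm) hold for the squared loss, and pairing the $\ell_\infty$ gradient bound with the dual $\ell_1$ norm so that $G$ and $R$ are measured consistently.
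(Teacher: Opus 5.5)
Your proposal is correct and follows the paper's proof. You compute the gradient $2(p - \mathbf{e}_y)$, bound its $\ell_\infty$ norm by $2$, pair it with the $\ell_1$-diameter $R = 2$ of $\Delta_d$, and apply \cref{prop:lipschitz} with $GR = 4$; your explicit check of strict properness via $\psi(p) = \|p\|_2^2 - 1$ verifies a hypothesis the paper leaves implicit.
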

\begin{proof}
The proof is immediate from $\nabla_p \, \|p - \mathbf{e}_y\|_2^2 = 2 (p - \mathbf{e}_y)$, observing that the $\ell_\infty$ norm is at most 2, and then applying \cref{prop:lipschitz}.
\end{proof}

The spherical loss is defined as $\loss(p, y) = 1 - \frac{p_y}{\sqrt{\sum_{j=1}^d p_j^2}}$. For spherical loss, we can take $G = 2 \sqrt{d}$, which gives the following regret bound.

\begin{corollary} \label{cor:spherical-loss}
For the spherical loss, for all $y \in [d]$,
\begin{align*}
\|\nabla_p \loss(p, y)\|_\infty \leq 2 \sqrt{d} .
\end{align*}
Therefore, for FTL, it holds for all $p \in \Delta_d$ that
\begin{align*}
\sum_{t=1}^T \bigl[ \loss(p_t, y_t) - \loss(p, y_t) \bigr] 
\leq 4 \sqrt{d} (1 + \log T ).
\end{align*}
\end{corollary}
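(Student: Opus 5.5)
The plan is to compute the gradient of the spherical loss on the simplex, bound each coordinate uniformly by $2\sqrt{d}$, and then apply \cref{prop:lipschitz} with the $\ell_1$ norm. This mirrors the proof of \cref{cor:squared-loss}.

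\emph{Gradient bound.} Write $\|p\| = \|p\|_2$ and $\loss(p, y) = 1 - p_y / \|p\|$. This function is differentiable on the open set $\{p \in \reals^d : p \neq 0\}$, which contains $\Delta_d$. For each coordinate $k$,
\begin{align*}
\frac{\partial \loss(p, y)}{\partial p_k} = -\frac{\ind{k = y}}{\|p\|} + \frac{p_y \, p_k}{\|p\|^3} .
\end{align*}
On the simplex all coordinates are nonnegative, and $p_y p_k \leq \|p\|^2$ since $p_y \le \|p\|$ and $p_k \le \|p\|$. The triangle inequality therefore gives
\begin{align*}
\left| \frac{\partial \loss(p, y)}{\partial p_k} \right| \leq \frac{1}{\|p\|} + \frac{1}{\|p\|} = \frac{2}{\|p\|} .
\end{align*}
By Cauchy--Schwarz, $1 = \sum_j p_j \leq \sqrt{d}\, \|p\|$ for $p \in \Delta_d$, so $1/\|p\| \leq \sqrt{d}$. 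This yields $\|\nabla_p \loss(p, y)\|_\infty \leq 2\sqrt{d}$, the first claim.

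\emph{Lipschitzness and regret.} Since $\Delta_d$ is convex and $\loss(\cdot, y)$ is differentiable on a neighborhood of it, the mean value theorem along the segment $[p, p']$ combined with H\"older's inequality gives
\begin{align*}
|\loss(p, y) - \loss(p', y)| \leq 2\sqrt{d}\, \|p - p'\|_1 .
\end{align*}
Thus the loss is $G$-Lipschitz in $\|\cdot\|_1$ with $G = 2\sqrt{d}$. The $\ell_1$-diameter of $\Delta_d$ is $R = 2$.

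The spherical loss is the standard example of a strictly proper scoring rule, which I would take as known, citing \cite{gneiting2007strictly}. It also satisfies \eqref{eqn:fair}, since $\loss(\mathbf{e}_y, y) = 1 - 1 = 0$. Plugging $G = 2\sqrt{d}$ and $R = 2$ into \cref{prop:lipschitz} with $\eta = \infty$ (FTL) gives the bound $4\sqrt{d}(1 + \log T)$.

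The only step needing care is the uniform lower bound on $\|p\|_2$ over the simplex, and that is immediate from Cauchy--Schwarz. The rest is routine.
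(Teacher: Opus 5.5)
Your proposal is correct and follows essentially the same route as the paper. Both arguments compute the gradient of the spherical loss, bound its $\ell_\infty$ norm by $2\sqrt{d}$ using $\|p\|_2 \geq 1/\sqrt{d}$ on $\Delta_d$, and then invoke \cref{prop:lipschitz} with the $\ell_1$ norm ($G = 2\sqrt{d}$, $R = 2$). Your write-up is a bit more careful than the paper's: your gradient $-\mathbf{e}_y/\|p\|_2 + p_y\, p/\|p\|_2^3$ is the correct one, whereas the paper's display has a spurious factor of $2$ on the second term. You also justify the per-coordinate bound via the triangle inequality and spell out the Lipschitz step explicitly.
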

\begin{proof}
Observe that
\begin{align*}
\|\nabla_p \, \loss(p, y)\|_\infty 
= \left\|-\frac{\mathbf{e}_y}{\|p\|_2} + \frac{2 p_y}{\|p\|_2^3} p \right\| 
\leq \max\left\{ \left\| \frac{\mathbf{e}_y}{\|p\|_2} \right\|_\infty, 
                  \left\| \frac{2 p_y}{\|p\|_2^3} p \right\|_\infty 
      \right\} .
\end{align*}
The first term in the maximum is at most $\sqrt{d}$. 
We bound the second term as 
\begin{align*}
\frac{2}{\|p\|_2} \cdot \max_{j, y \in [d]} \frac{p_y p_j}{\sum_{i=1}^d p_i^2} 
\leq \frac{2}{\|p\|_2} \frac{p_j^2}{p_j^2} 
= 2 \sqrt{d} .
\end{align*}
This gives the bound on $\|\nabla_p \loss(p, y)\|_\infty$. Applying \cref{prop:lipschitz} gives the regret bound.
\end{proof}

\subsection[Scaled and unscaled $\alpha$-Tsallis loss]{Scaled and unscaled $\bm{\alpha}$-Tsallis loss}
\label{sec:tsallis-family}

The \emph{$\alpha$-Tsallis score} is a loss function that has seen use in the robust statistics literature \citep{ruli2022robust}. 
For $\alpha > 1$, which is the regime of interest in robust statistics, $\alpha$-Tsallis score is defined as \cite[equation 35]{dawid2007geometry}:
\begin{align}
\loss_{\mathrm{\alpha}}(p, y) = (\alpha -  1) \sum_{j=1}^d p_j^\alpha - \alpha p_y^{\alpha - 1} . \label{eqn:tsallis-score}
\end{align}
Note that $\alpha$-Tsallis score is strictly proper.\footnote{\cite{dawid2007geometry} also consider the case of $\alpha < 1$, for which the definition has a sign flip, but we do not consider this regime here.}

When it comes to regret minimization, an interesting property of the $\alpha$-Tsallis score is that in the regime $\alpha \in (1, 2)$, the loss is not Lipschitz. We also note that in this regime, the loss range is contained in $[-1, 1]$. Recently,  \cite{luo2024optimal} showed in their Corollary 3 that for $\alpha \in (1, 2)$, FTL obtains a regret bound of\footnote{\cite{luo2024optimal} scale the loss by a constant $\tilde{c}_K$ (they use $K$ for the dimension) to fit the loss range into $[-1, 1]$. It is easy to see that choosing a scaling constant of $1/2$ suffices, regardless of the choice of $\alpha \in [1, 2]$, and so we can ignore their term $\tilde{c}_K$ when using big-O notation.}
\begin{align}
O(\alpha (\alpha - 1) d^2 \log T) \label{eqn:luo-tsallis-regret} .    
\end{align}

In the limiting case of $\alpha \rightarrow 2$, the $\alpha$-Tsallis score becomes $\loss_2(p, y) = \sum_{j=1}^d p_j^2 - 2 p_y$, which is a (shifted, depending on convention) version of squared loss. The family of $\alpha$-Tsallis scores, while standard in the literature, is somewhat unsatisfying because the other limiting case, of $\alpha \rightarrow 1$, renders the loss trivial. If one shifts the $\alpha$-Tsallis score by adding one and then divides the result by $(\alpha - 1)$, we get the following affine transformation of $\alpha$-Tsallis score:
\begin{align}
\loss_\alpha(p, y) = \frac{1}{1 - \alpha} \left( \alpha p_y^{\alpha - 1} - 1 \right) + \sum_{j=1}^d p_j^\alpha , \label{eqn:our-tsallis}
\end{align}
We call this transformed version of the $\alpha$-Tsallis score the \emph{unscaled $\alpha$-Tsallis loss}. 
Like the scaled $\alpha$-Tsallis loss, if we set $\alpha = 2$ then the unscaled $\alpha$-Tsallis loss becomes the version of squared loss we gave earlier in the paper: $\loss_2(p, y) = \sum_{j=1}^d \left( p_j - (\mathbf{e}_y)_j \right)^2 = \loss_{\mathrm{sq}}(p, y)$. Unlike the scaled $\alpha$-Tsallis loss, for unscaled $\alpha$-Tsallis loss the limiting case of $\alpha \rightarrow 1$ recovers log loss, i.e., $\loss_1(p, y) = -\log p_y = \loss_{\log}(p, y)$. Since log loss is unbounded, it follows that in the range $\alpha \in (1, 2)$, unscaled $\alpha$-Tsallis loss is not bounded in general (although it becomes bounded as soon as $\alpha$ is bounded away from 1 by some positive margin).

In this section, we show regret bounds for FTRL with loss-based regularization under unscaled $\alpha$-Tsallis loss. Since unscaled $\alpha$-Tsallis loss can be transformed into scaled $\alpha$-Tsallis loss by multiplying by $(\alpha - 1)$ and then shifting by 1, our regret bounds for unscaled $\alpha$-Tsallis loss immediately imply the same regret bounds for scaled $\alpha$-Tsallis score after scaling by $(\alpha - 1)$. Despite the fact that the loss range of unscaled $\alpha$-Tsallis loss is unbounded as $\alpha \rightarrow 1$, the next theorem shows that FTRL obtains $O(d \log T)$ regret.

\begin{theorem} \label{thm:tsallis-loss}
Let $\alpha \in [1, 2]$. Then under unscaled $\alpha$-Tsallis loss, taking $p_t$ as in \eqref{eqn:ftrl} with $\eta = 1$, the regret is at most
\begin{align*}
O \left( d + d^{2 - \alpha} \log T \right) .
\end{align*}
\end{theorem}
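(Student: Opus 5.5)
The plan is to bound separately the three terms of the decomposition in \cref{prop:doubly-generalized-from-gen-var}, with $\eta = 1$, so that $p_{j,t+1} = (c_{j,t}+1)/(t+d)$. Write $\psi_\alpha(p) = \frac{1}{\alpha-1}\bigl(\sum_j p_j^\alpha - 1\bigr)$ for the convex representation of the unscaled Tsallis loss. This is the negative Tsallis entropy, which tends to $\sum_j p_j\log p_j$ as $\alpha\to1$. We use $\loss_\alpha(p,y) = D_{\psi_\alpha}(\mathbf{e}_y,p)$ throughout, and treat $\alpha=1$ as a limit or directly via the same computations.

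\textbf{BTRL term.} Apply \cref{lemma:btrl-regret-equality} and drop the two nonpositive Bregman sums. The remainder is
\[
\sum_{j\ge2}\loss_\alpha(p_{T+1},\mathbf{e}_j) - \sum_{j\ge2}\loss_\alpha(p_{j/d},\mathbf{e}_j).
\]
Since $p_{j,T+1}\ge 1/(T+d)$, each term of the first sum is at most about $\frac{1-(T+d)^{1-\alpha}}{\alpha-1}$, which is roughly $\log(T+d)$ when $\alpha\to1$. Summing over $d$ coordinates would give a crude $d\log T$, which is too weak for $\alpha>1$. To get the $d^{2-\alpha}$ scaling, the plan is to combine this sum with the smoothing term. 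Both involve $\Phi(p)=\sum_j\loss_\alpha(p,\mathbf{e}_j)$, and $\sum_{t}\loss(p_{T+1},y_t)+\Phi(p_{T+1})$ is exactly the regularized minimum, which I will compare directly to $\sum_t \loss(\bar p_T,y_t)$.

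\textbf{Smoothing term plus the regularizer part.} The quantity to control is $\sum_t\loss(p_{T+1},y_t) - \sum_t\loss(\bar p_T,y_t) + \Phi(p_{T+1})$. By properness,
\[
\sum_t\loss(p_{T+1},y_t) - \sum_t\loss(\bar p_T,y_t) = T\,D_{\psi}(\bar p_T,p_{T+1}).
\]
Also $p_{T+1}$ is the mixture $\frac{T}{T+d}\bar p_T + \frac{d}{T+d}u$ with $u$ uniform. The combined term is then $(T+d)\,\loss(p_{T+1},p_{T+1}) - T\,\loss(\bar p_T,\bar p_T)$, a difference of generalized entropies:
\[
(T+d)\bigl(-\psi(p_{T+1})\bigr) + T\,\psi(\bar p_T).
\]
By concavity of $-\psi$ this is at most $d\cdot(-\psi(u)) + (\text{a Jensen-gap correction})$. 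Here $-\psi(u) = \frac{1-d^{1-\alpha}}{\alpha-1}\le \min(\log d, \tfrac{1}{\alpha-1})$, so this part is $O(d)$ up to logarithms. I expect to make it $O(d)$ cleanly, possibly by keeping the subtracted term $\sum_j\loss(p_{j/d},\mathbf{e}_j)$, which is of similar size.

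\textbf{Stability term (main obstacle).} We need $\sum_t[\loss(p_t,y_t)-\loss(p_{t+1},y_t)]$. Only coordinate $y_t=:j$ matters in the $p_y^{\alpha-1}$ part. Writing $n=t-1+d$ and $c=c_{j,t-1}+1$, we have $p_{j,t} = c/n$ and $p_{j,t+1} = (c+1)/(n+1)$. The difference of the $\frac{\alpha}{1-\alpha}p_y^{\alpha-1}$ terms is
\[
\tfrac{\alpha}{\alpha-1}\bigl[((c+1)/(n+1))^{\alpha-1} - (c/n)^{\alpha-1}\bigr],
\]
which by the mean value theorem is $\lesssim \alpha\, p_{j,t}^{\alpha-2}\,\frac{1}{n}(1-p_{j,t})$. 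The $\sum_i p_i^\alpha$ part changes by $O(1/n)$. So each step contributes about $\frac{1}{t+d}\,p_{y_t,t}^{\alpha-2}$.

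The key estimate is then
\[
\sum_t \frac{1}{t+d}\,p_{y_t,t}^{\alpha-2}.
\]
I would bound it by grouping rounds by outcome $j$. Along the rounds where $y_t=j$, the count $c$ increases by one each time, so $\sum_{\text{rounds with }y_t=j} c^{\alpha-2}n^{1-\alpha}$ behaves like a sum over $t$ with weight $q_j^{\alpha-1}/t$, where $q_j$ is the frequency of $j$. Hölder/concavity then gives $\sum_j q_j^{\alpha-1}\le d^{2-\alpha}$ (the maximum is attained at uniform $q$), yielding $d^{2-\alpha}\log T$. Making this rigorous for adversarial, non-stationary counts is the hard part. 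I would try an integral comparison: show each round's contribution is at most $\int$ of $\frac{\partial}{\partial}$ of a potential $F(c,n)=n^{1-\alpha}\cdot c^{\alpha-1}\log n$-type function, so that the sum telescopes to $\sum_j F(c_{j,T},T)\lesssim \log T\sum_j (c_{j,T}/T)^{\alpha-1}\le d^{2-\alpha}\log T$, with the initial $+1$ pseudocounts absorbing the $O(d)$ additive term.
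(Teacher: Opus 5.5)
Your treatment of the BTRL and smoothing terms has a genuine gap. The identity
\[
\sum_{t=1}^T \loss(p_{T+1},y_t)+\Phi(p_{T+1})-\sum_{t=1}^T\loss(\bar p_T,y_t)=(T+d)H(p_{T+1})-T H(\bar p_T),\qquad H=-\psi,
\]
is correct. However, concavity of $H$ only gives $(T+d)H(p_{T+1})\ge T H(\bar p_T)+dH(u)$, which is a lower bound, not an upper bound. The exact statement is
\[
(T+d)H(p_{T+1})-TH(\bar p_T)=dH(u)+T\,D_\psi(\bar p_T,p_{T+1})+d\,D_\psi(u,p_{T+1}).
\]
So your ``Jensen-gap correction'' contains the smoothing term $T D_\psi(\bar p_T,p_{T+1})$ unchanged: it has been relabeled, not bounded. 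Equivalently, write $\Phi(p_{T+1})=d\,\loss(p_{T+1},u)=dH(u)+d\,D_\psi(u,p_{T+1})$. After the harmless cancellation of $dH(u)$ against $\sum_{j\ge2}\loss(p_{j/d},\mathbf{e}_j)$, which is indeed $O(d)$, the BTRL bound is $O(d)+d\,D_\psi(u,p_{T+1})$ and the smoothing term is $T D_\psi(\bar p_T,p_{T+1})$. These two divergences are the entire content, and your plan bounds neither.

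The first is not $O(d)$. When $p_{T+1}$ has $d-1$ coordinates equal to $1/(T+d)$, it is of order $d\log(T/d)$ for log loss. In general it is exactly where the $d^{2-\alpha}\log T$ lives, so ``$O(d)$ up to logarithms'' hides precisely the $d\log T$ versus $d^{2-\alpha}\log T$ distinction the theorem is about.

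The paper handles the two terms separately.
\begin{itemize}
\item For BTRL, it reduces to $d$ plus $\frac{\alpha}{\alpha-1}\sum_j\bigl((1/j)^{\alpha-1}-p_{j,T+1}^{\alpha-1}\bigr)$. It lower-bounds each $p_{j,T+1}^{\alpha-1}$ by $(T+d)^{1-\alpha}$, then shows the result divided by $d^{2-\alpha}$ is convex in $\alpha$, since each summand has the form $(a^c-b^c)/c$ with $c=\alpha-1$. Hence checking $\alpha=1$ and $\alpha=2$ suffices.
\item For smoothing, it proves $T D_\psi(\bar p_T,p_{T+1})\le 3\alpha d$ by an explicit computation using $p_{T+1}=\frac{T}{T+d}\bar p_T+\frac{1}{T+d}\mathbf{1}$.
\end{itemize}
You need arguments of this kind. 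Alternatively, you could bound your combined expression directly through the counts; for $\alpha=1$ it equals $(T+d)\log(T+d)-T\log T-\sum_j[(c_{j,T}+1)\log(c_{j,T}+1)-c_{j,T}\log c_{j,T}]$, which is tractable. Concavity alone gives nothing.

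Your stability reduction to $\sum_t \frac{1}{t+d}p_{y_t,t}^{\alpha-2}$ is right; the paper gets it from convexity of $\loss(\cdot,y)$ rather than the mean value theorem. But the key estimate is left open, and the proposed potential does not telescope round by round. At $\alpha=1$ your $F$ reduces to $\log n$, whose per-round increment $\approx d/n$ is far below the contribution $\approx 1/c$ when a rare outcome appears.

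The missing idea is a short H\"older argument. Since $p_{y_t,t}=\frac{c_{y_t,t-1}+1}{t-1+d}$, each summand is at most $(t+d)^{-(\alpha-1)}(c_{y_t,t-1}+1)^{-(2-\alpha)}$. H\"older over $t$ with exponents $\frac{1}{\alpha-1}$ and $\frac{1}{2-\alpha}$ then gives
\[
\sum_{t=1}^T \frac{p_{y_t,t}^{\alpha-2}}{t+d}\le\Bigl(\sum_{t=1}^T\frac{1}{t+d}\Bigr)^{\alpha-1}\Bigl(\sum_{t=1}^T\frac{1}{c_{y_t,t-1}+1}\Bigr)^{2-\alpha}.
\]
Grouped by outcome, the second sum is a sum of $d$ harmonic sums, hence at most $d(1+\log T)$ for any adversary. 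This yields $O(d^{2-\alpha}\log T)$ without tracking empirical frequencies.
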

\begin{proof}
The proof uses the regret decomposition \cref{prop:doubly-generalized-from-gen-var} with the three terms of the decomposition bounded using Lemmas \ref{lemma:stability-term-tsallis-loss}, \ref{lemma:penalty-term-tsallis-loss-improved}, and \ref{lemma:smoothing-term-tsallis-loss} respectively.
\end{proof}

Our bound in Theorem~\ref{thm:tsallis-loss} --- after multiplying by $(\alpha - 1)$ due to the scale difference --- improves the bound \eqref{eqn:luo-tsallis-regret} by improving the dimension dependence from $d^2$ to $d^{2 - \alpha}$, which is always strictly better. Moreover, our theorem covers the fundamentally important case of log loss \citep{vovk2015fundamental}, which cannot be covered by the scaled $\alpha$-Tsallis loss results of \cite{luo2024optimal} (nor their other results, since they restrict to bounded losses). We note that for squared loss ($\alpha = 2$), an optimal bound would have no dimension dependence (see \cref{cor:squared-loss}), whereas our bound still has an additive $O(d)$ gap. We believe this term $d$ is an artifact of our analysis. We conjecture that for all $\alpha \in [1, 2]$, the optimal regret for both unscaled and scaled $\alpha$-Tsallis loss is $O(d^{2-\alpha} \log T)$.

Next, we sketch our proof of Theorem~\ref{thm:tsallis-loss}.

\subsection{Proof sketch of Theorem~\ref{thm:tsallis-loss}}
\label{sec:tsallis-regret-lemmas}

Proofs for all lemmas in this section can be found in \cref{app:tsallis-proofs}.

The next lemma bounds the stability term.

\begin{lemma} \label{lemma:stability-term-tsallis-loss}
For all $\eta > 0$ and $\alpha \in [1, 2]$, the stability term satisfies the bound
\begin{align}
&\sum_{t=1}^T \bigl( \loss(p_t, y_t) - \loss(p_{t+1}, y_t) \bigr) \nonumber \\
&\leq 
\alpha \left[
\log T + 
d^{2 - \alpha} \left(
              \log \left( \frac{\eta T}{d} + 1 \right) 
              + \eta^{2 - \alpha} \log^{\alpha-1} \left( \frac{\eta T}{d} + 1 \right) 
          \right) 
\right] .
\label{eqn:tsallis-loss-stability-contrib}
\end{align}
\end{lemma}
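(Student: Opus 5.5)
We bound each summand $\loss(p_t, y_t) - \loss(p_{t+1}, y_t)$ directly, using only the $y_t$-coordinate of the unscaled loss and the closed form of the iterates. From \eqref{eqn:our-tsallis},
\begin{align*}
\loss(p_t, y_t) - \loss(p_{t+1}, y_t)
= \frac{\alpha}{\alpha - 1}\Bigl( p_{y_t,t+1}^{\alpha-1} - p_{y_t,t}^{\alpha-1} \Bigr)
  + \sum_{j=1}^d \bigl( p_{j,t}^\alpha - p_{j,t+1}^\alpha \bigr) .
\end{align*}
In round $t$ only coordinate $y_t$ gains a count. The closed form gives
\[
p_{j,t+1} = \frac{c_{j,t} + \frac{1}{\eta}}{t + \frac{d}{\eta}} .
\]
So for $j \neq y_t$ we have $p_{j,t+1} \le p_{j,t}$, while $p_{y_t,t+1} \ge p_{y_t,t}$.

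For the first term, write $a = c_{y_t,t-1} + 1/\eta$ and $n = t - 1 + d/\eta$. Then $p_{y_t,t} = a/n$ and $p_{y_t,t+1} = (a+1)/(n+1)$.

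\emph{Step 1 (the $y_t$-term).} By concavity of $x \mapsto x^{\alpha-1}$ (mean value theorem),
\[
\frac{\alpha}{\alpha-1}\bigl(x_+^{\alpha-1} - x^{\alpha-1}\bigr) \le \alpha\, x^{\alpha-2}(x_+ - x),
\]
where $x = a/n$ and $x_+ - x = (n-a)/(n(n+1)) \le 1/(n+1)$. This gives a per-round bound of
\[
\alpha \Bigl(\frac{n}{a}\Bigr)^{2-\alpha} \frac{1}{n+1} .
\]
For $\alpha = 1$ this is the usual log-loss bound $\log\frac{a+1}{a} - \log\frac{n+1}{n}$, and the same expression can be handled uniformly.

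\emph{Step 2 (the remaining sum).} The sum $\sum_j (p_{j,t}^\alpha - p_{j,t+1}^\alpha)$ contributes at most $\alpha\, p_{y_t,t}^{\alpha-1}/(n+1)$ or less. Summed over $t$ this is $\le \alpha \log T$, since $p^{\alpha-1} \le 1$; this is presumably the standalone $\alpha \log T$ term in \eqref{eqn:tsallis-loss-stability-contrib}. More carefully, the losses on $j \ne y_t$ decrease, and we keep only the negative part of $p_{y_t}^\alpha$ as slack.

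\emph{Step 3 (summing, the main obstacle).} We must sum
\[
\sum_t \frac{\alpha\, n^{1-\alpha}}{a^{2-\alpha}}
\]
over an adversarial sequence. Group rounds by outcome $j$. The $k$-th occurrence of $j$ has $a = k - 1 + 1/\eta$, and since $n \ge t-1+d/\eta$ is increasing, $n^{1-\alpha}$ is decreasing in $t$. Apply H\"older with exponents $\frac{1}{2-\alpha}$ and $\frac{1}{\alpha-1}$:
\[
\sum_t n_t^{1-\alpha} a_t^{\alpha-2}
\le \Bigl(\sum_t \frac{1}{a_t\, n_t}\Bigr)^{2-\alpha}
    \Bigl(\sum_t \frac{1}{n_t}\Bigr)^{\alpha-1} .
\]
The second factor is $\le \log(\eta T/d + 1)$. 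For the first, note that $1/n_t$ is decreasing, so each round's weight is at most the weight it would have if the adversary had cycled through the outcomes. A rearrangement argument then shows that the adversary maximizes the sum by spreading counts evenly, $c_{j} \approx t/d$. That makes $a \approx t/d + 1/\eta$ and $n \approx t + d/\eta$, so each factor is $d^{2-\alpha}/(t + d/\eta)$ up to the first visit, where $a = 1/\eta$.

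The first visits, at most $d$ of them, each contribute $\eta^{2-\alpha} n^{1-\alpha}$. Bounded similarly, they produce the $d^{2-\alpha}\eta^{2-\alpha}\log^{\alpha-1}(\eta T/d + 1)$ term. The rest give $d^{2-\alpha}\log(\eta T/d+1)$. Justifying the "evenly spread is worst" rearrangement, or replacing it by a clean H\"older/Jensen step, is where we expect the real work to lie.
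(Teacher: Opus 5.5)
Your Steps 1--2 reproduce the paper's per-round estimate. Keep the factor $\frac{n-a}{n(n+1)}$ in Step 1 and linearize $x\mapsto x^\alpha$ in Step 2; the two pieces then add up to exactly $\frac{\alpha}{t+d/\eta}\bigl(\sum_j p_{j,t}^\alpha - 2p_{y_t,t}^{\alpha-1}+p_{y_t,t}^{\alpha-2}\bigr)$, which the paper obtains from convexity of $\loss(\cdot,y_t)$. There is a slip in Step 2: the bound $\sum_j(p_{j,t}^\alpha-p_{j,t+1}^\alpha)\le\alpha p_{y_t,t}^{\alpha-1}/(n+1)$ is false for $\alpha>1$ when $p_{y_t,t}$ is small. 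The coordinates $j\neq y_t$ shrink by the factor $n/(n+1)$ and contribute roughly $\frac{\alpha}{n+1}\sum_{j\neq y_t}p_{j,t}^\alpha$. The correct convexity bound $\frac{\alpha}{n+1}\bigl(\sum_j p_{j,t}^\alpha-p_{y_t,t}^{\alpha-1}\bigr)\le\frac{\alpha}{n+1}$ still gives your conclusion. Even better, this part telescopes to $\sum_j p_{j,1}^\alpha-\sum_j p_{j,T+1}^\alpha\le 0$, because $p_1$ is uniform and the uniform distribution minimizes $\sum_j p_j^\alpha$ over the simplex.

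The genuine gap is Step 3. Your displayed H\"older inequality is wrong: the factors $(1/(a_tn_t))^{2-\alpha}$ and $(1/n_t)^{\alpha-1}$ multiply to $a_t^{\alpha-2}n_t^{-1}$. That is smaller than the summand $a_t^{\alpha-2}n_t^{1-\alpha}$ by $n_t^{2-\alpha}$, so the inequality fails; for $T=1$, $\eta=1$, $d=4$, $\alpha=3/2$ the left side is twice the right side. You then defer the crux to an unproved ``round-robin is worst'' rearrangement claim. The accounting that follows ($c_j\approx t/d$, first visits ``bounded similarly'') is heuristic and does not actually derive the two terms of the bound.

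The missing idea is the right split, $a_t^{\alpha-2}(t+d/\eta)^{1-\alpha}=(1/a_t)^{2-\alpha}\,(1/(t+d/\eta))^{\alpha-1}$. To get it, keep the $1/(n+1)$ from Step 1 and use $n^{2-\alpha}\le(n+1)^{2-\alpha}$; passing to $1/n$ instead would cost an extra $\eta/d$. This split decouples time from counts, so no rearrangement is needed:
\[
\sum_{t=1}^T\frac{a_t^{\alpha-2}}{(t+d/\eta)^{\alpha-1}}\le\Bigl(\sum_{t=1}^T\frac{1}{t+d/\eta}\Bigr)^{\alpha-1}\Bigl(\sum_{t=1}^T\frac{1}{a_t}\Bigr)^{2-\alpha}.
\]
The first factor is at most $\log^{\alpha-1}(\eta T/d+1)$. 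The second depends only on the final counts $m_j$, not on the order of outcomes. Grouping rounds by outcome, and using concavity of $\log$ with $\sum_j m_j=T$,
\[
\sum_t \frac{1}{a_t}=\sum_j\sum_{k=0}^{m_j-1}\frac{1}{k+1/\eta}\le\sum_j\bigl(\eta+\log(\eta m_j+1)\bigr)\le d\bigl(\eta+\log(\eta T/d+1)\bigr).
\]
Subadditivity of $x\mapsto x^{2-\alpha}$ then bounds the second factor by $d^{2-\alpha}\bigl(\eta^{2-\alpha}+\log^{2-\alpha}(\eta T/d+1)\bigr)$. Multiplying by the first factor yields exactly the two terms in the statement. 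With this replacement your argument becomes the paper's proof.
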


Next, we bound the regret of BTRL relative to $p_{T+1}$.

\begin{lemma} \label{lemma:penalty-term-tsallis-loss-improved}
For all $\eta \geq 0$ and $\alpha \in [1, 2]$, the regret of BTRL relative to $p_{T+1}$ is bounded as
\begin{align}
\sum_{t=1}^T \loss(p_{t+1}, y_t) - \sum_{t=1}^T \loss(p_{T+1}, y_t) 
\leq d + d^{2-\alpha} (\max\{ 1, \log(\eta + 1) \} + \log T) . \label{eqn:tsallis-loss-penalty-contrib-improved}
\end{align}
\end{lemma}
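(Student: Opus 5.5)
The plan is to start from the exact identity of \cref{lemma:btrl-regret-equality} and first eliminate the $\eta$-dependent "initial" terms by comparing with the degenerate case in which no data has been seen. The key observation is that the identity in \cref{lemma:btrl-regret-equality} is an equality that holds for every horizon, including $T=0$. In that case the left-hand side is $0$ and $p_{T+1}=p_1=\frac1d\mathbf 1$, so
\begin{align*}
\frac{1}{\eta}\sum_{j=2}^d \loss(p_1,\mathbf e_j)-\frac{1}{\eta}\sum_{j=2}^d \loss(p_{j/d},\mathbf e_j)-\sum_{j=1}^{d-1}\frac{j}{\eta}D_\psi(p_{j/d},p_{(j+1)/d})=0 .
\end{align*}
Subtracting this from the general identity and dropping the nonpositive term $-\sum_t(\frac d\eta+t-1)D_\psi(p_t,p_{t+1})$ yields
\begin{align*}
\sum_{t=1}^T \loss(p_{t+1},y_t)-\sum_{t=1}^T\loss(p_{T+1},y_t)\le \frac1\eta\sum_{j=2}^d\bigl(\loss(p_{T+1},\mathbf e_j)-\loss(p_1,\mathbf e_j)\bigr).
\end{align*}
The labels are exchangeable (the regularizer and $p_1$ are symmetric), so I may relabel so that outcome $1$ is the most frequent one. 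It is then excluded from the sum; this is convenient but likely not essential.

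Next I bound each summand for the unscaled Tsallis loss \eqref{eqn:our-tsallis}, writing $q=p_{T+1}$:
\begin{align*}
\loss(q,\mathbf e_j)-\loss(p_1,\mathbf e_j)=\frac{\alpha}{\alpha-1}\bigl(d^{1-\alpha}-q_j^{\alpha-1}\bigr)+\Bigl(\sum_i q_i^\alpha-d^{1-\alpha}\Bigr).
\end{align*}
The second bracket is at most $1$ because $\sum_i q_i^\alpha\le 1$; summed over $d-1$ indices it contributes the additive $d$ (for $\eta=1$). For the first bracket, when $dq_j<1$ I use $r^{\alpha-1}=e^{-(\alpha-1)\log(1/r)}\ge 1-(\alpha-1)\log(1/r)$ with $r=dq_j$. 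This gives
\begin{align*}
\frac{d^{1-\alpha}-q_j^{\alpha-1}}{\alpha-1}\le d^{1-\alpha}\log\frac{1}{dq_j},
\end{align*}
and when $dq_j\ge1$ the term is nonpositive. The FTRL closed form gives $q_j=(c_{j,T}+\frac1\eta)/(T+\frac d\eta)\ge 1/(\eta T+d)$, hence $\log\frac1{dq_j}\le\log(1+\eta T/d)$. Summing over $j$ gives at most $\alpha\, d^{2-\alpha}\log(1+\eta T/d)/\eta$. The case $\alpha=1$ (log loss) is the same computation, with $-\log q_j+\log(1/d)$ directly.

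For $\eta\ge1$ this already gives $d+d^{2-\alpha}(\log T+\log(\eta+1))$ up to constants, using $\log(1+\eta T/d)\le\log(\eta+1)+\log T$ and that $1/\eta\le1$. The main obstacle is small $\eta$. There the prefactor $1/\eta$ blows up, and the crude bound $\log(1+\eta T/d)/\eta\approx T/d$ is useless; the true difference is small only because $q$ is close to uniform. For this regime I would try a second-order argument. The function $q\mapsto\sum_{j\ge2}\loss(q,\mathbf e_j)$ is smooth near $p_1$, and its first-order term along $q-p_1$ cancels against the $\sum_i q_i^\alpha$ part by the symmetry of $p_1$. The remaining quadratic term is of order $\|q-p_1\|^2 d^{\,2-\alpha}$, with $\|q-p_1\|\lesssim \eta T/(\eta T+d)$. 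If this still falls short, I would fall back on the dropped term $-\sum_t(\frac d\eta+t-1)D_\psi(p_t,p_{t+1})$, whose weight $d/\eta$ also scales like $1/\eta$ and can absorb the excess. Since the theorem only needs $\eta=1$, I would secure that case first and treat general $\eta$ afterwards.
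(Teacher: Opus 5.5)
For $\eta\ge1$, and in particular for $\eta=1$ (the only value \cref{thm:tsallis-loss} uses), your argument is correct and takes a genuinely different route from the paper's.

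The paper applies the BTL lemma to $(1,\dots,d,y_1,\dots,y_T)$ and compares $p_{T+1}$ with the staircase points $p_{j/d}$. It bounds the $\sum_i q_i^\alpha$ part by $d$, and controls the $q_j^{\alpha-1}$ part by proving convexity in $\alpha$ and evaluating at $\alpha\in\{1,2\}$. You instead compare with the uniform point $p_1$ and use the single inequality $1-r^{\alpha-1}\le(\alpha-1)\log(1/r)$. This buys three things:
\begin{itemize}
\item it treats all $\alpha$ at once;
\item it avoids the paper's implicit assumption $T\ge d$ (used in $\log d\le\log T$ and $\eta T+d\le(\eta+1)T$);
\item it keeps the prefactor explicit, giving $\frac1\eta\bigl[d+\alpha d^{2-\alpha}\log(1+\eta T/d)\bigr]$.
\end{itemize}
The factor $\alpha$ in front of the logarithm is also present in the paper's argument. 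Its $\bigstar$ term carries $\frac{\alpha}{1-\alpha}$, but only $\frac{1}{1-\alpha}\sum_j(\cdots)$ is bounded. So neither route gives the stated constant, and ``up to constants'' is the accurate claim.

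One correction. \cref{lemma:btrl-regret-equality} as printed is missing the term $\frac1\eta\loss(p_{T+1},\mathbf e_1)$. In the proof of \cref{lemma:weighted-sum-divergences}, $s_n-s_1=s_n=\sum_{i=1}^n w_iD_\psi(x_i,\bar x_n)$, not $\sum_{i=2}^n$. Hence your displayed $T=0$ identity is false: for $d=2$ and squared loss its left side equals $-\frac{1}{2\eta}$. After the fix, your subtraction gives $\frac1\eta\sum_{j=1}^d\bigl(\loss(p_{T+1},\mathbf e_j)-\loss(p_1,\mathbf e_j)\bigr)=\frac d\eta D_\psi(p_1,p_{T+1})$. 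Your termwise bound covers $j=1$ verbatim, so no relabeling is needed.

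The regime you leave open, $\eta<1$, cannot be closed by a second-order estimate or by the dropped Bregman terms, because the statement is false there. Take squared loss ($\alpha=2$), $\eta=d/T$ with $T\ge d$, and $y_t=1$ for all $t$. Then $p_{t+1}=\frac{t\mathbf e_1+Tp_1}{t+T}$, so $\loss(p_{t+1},1)=\frac{d-1}{d}\cdot\frac{T^2}{(t+T)^2}$ and
\begin{align*}
\sum_{t=1}^T\loss(p_{t+1},y_t)-\sum_{t=1}^T\loss(p_{T+1},y_t)
=\frac{d-1}{d}\sum_{t=1}^T\Bigl(\frac{T^2}{(t+T)^2}-\frac14\Bigr)
\ge\frac{d-1}{d}\Bigl(\frac T4-1\Bigr),
\end{align*}
whereas the claimed bound is $d+1+\log T$. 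Consistently with this, the main term $\frac d\eta D_\psi(p_1,p_{T+1})$ equals $\frac{(d-1)T}{4d}$ exactly. The Bregman terms $\sum_t(\frac d\eta+t-1)D_\psi(p_t,p_{t+1})$ that you hoped would absorb the excess total at most $1$.

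The paper's proof does not cover this regime either. Unweighted BTL on the augmented sequence is exactly FTRL with $\eta=1$. For general $\eta$ the fictitious rounds carry weight $1/\eta$, and the resulting factor $1/\eta$ in front of $\sum_j\bigl(\loss(p_{T+1},\mathbf e_j)-\loss(p_{j/d},\mathbf e_j)\bigr)$ is silently dropped. The lemma should therefore be restricted to $\eta\ge1$, where $1/\eta\le1$. There, up to the factor $\alpha$, your argument proves it, so securing $\eta=1$ first was the right call.
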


\begin{proof}(Sketch) 
Consider outcome sequence $(\tilde{y}_t)_{t \in [d+T]} = (1, 2, \ldots, d, y_1, y_2, \ldots, y_T)$. 
Let $\tilde{p}_{t+1}$ be the action of Be The Leader (BTL) on this outcome sequence. 

From the BTL Lemma (e.g.~Lemma 3.1 of \cite{cesabianchi2006prediction}),
\begin{align*}
\sum_{t=1}^{d+T} \loss(\tilde{p}_{t+1}, \tilde{y}_t) 
- \sum_{t=1}^{d+T} \loss(p_{T+1}, \tilde{y}_t) 
\leq 0 ,
\end{align*}
which we rearrange as
\begin{align*}
\sum_{t=d+1}^{d+T} \loss(\tilde{p}_{t+1}, \tilde{y}_t) 
- \sum_{t=d+1}^{d+T} \loss(p_{T+1}, \tilde{y}_t) 
\leq 
\sum_{t=1}^d \loss(p_{T+1}, \tilde{y}_t) 
- \sum_{t=1}^d \loss(\tilde{p}_{t+1}, \tilde{y}_t) .
\end{align*}
Unpacking notation, this inequality is equivalent to
\begin{align*}
\sum_{t=1}^T \loss(p_{t+1}, y_t) - \sum_{t=1}^T \loss(p_{T+1}, y_t) 
\leq \sum_{j=1}^d \loss(p_{T+1}, \mathbf{e}_j) - \sum_{j=1}^d \loss(p_{j/d}, \mathbf{e}_j) .
\end{align*}

The right-hand side turns out to be equal to
\begin{align*}
\underbrace{\frac{\alpha}{1-\alpha} \sum_{j=1}^d \left( p_{j,T+1}^{\alpha-1} - (p_{j/d})_j^{\alpha-1} \right)}_{\textcolor{red}{\bigstar}} 
+ \underbrace{d \sum_{i=1}^d p_{i,T+1}^\alpha - \sum_{j=1}^d \sum_{i=1}^d (p_{j/d})_i^\alpha}_{\textcolor{blue}{\bigstar}} .
\end{align*}
For the term $\textcolor{blue}{\bigstar}$, we use the trivial upper bound of $d$.

We claim that the term $\textcolor{red}{\bigstar}$ is $O(d^{2-\alpha} \log T)$. We now sketch a proof of this claim.  After dividing by $d^{2-\alpha}$, this term can be shown to be equal to
\begin{align*}
\frac{1}{d} \cdot \frac{1}{\alpha-1} \sum_{j=1}^d \left( \left( \frac{d}{j} \right)^{\alpha-1}  - \left( \frac{d/\eta}{T + d/\eta} \right)^{\alpha-1} \right) .
\end{align*}
It remains to show that the above is at most $\max\{ 1, \log(\eta + 1) \} + \log T$. To this end, we establish that the expression above is convex in $\alpha$ on the range $\alpha \in [1, 2]$. Hence, it suffices to upper bound the expression at its endpoints, $\alpha = 1$ and $\alpha = 2$. For $\alpha = 1$, we show an upper bound of $1 + \log T$, and for $\alpha = 2$, we show an upper bound of $\log(\eta + 1) + \log T$.
\end{proof}

The dependence on $\eta$ in the above result is strange because as $\eta \rightarrow \infty$, BTRL becomes Be The Leader (BTL) and $p_{T+1}$ becomes the best action in hindsight. Hence, in this case, we should expect a nonpositive regret bound. We attribute our result not being good in this case to our dropping --- in the proof --- certain nonpositive terms from \cref{prop:doubly-generalized-from-gen-var} (which generally seem minor but can become very important when $\eta$ is large). Also, as $\eta \rightarrow 0$, our bound is the best possible, even though BTRL is useless in this case. The way to reconcile this latter fact is that we also compare BTRL to the regularized best-in-hindsight action $p_{T+1}$, so when $\eta = 0$, both BTRL and the regularized best-in-hindsight action are the uniform distribution.

We also give a second result for BTRL's regret relative to $p_{T+1}$. When $\alpha$ is bounded away from $1$ with some margin, this second result is generally better since its bound does not grow with $T$.

\begin{lemma} \label{lemma:penalty-term-tsallis-loss-fallback}
For all $\eta > 0$ and $\alpha \in [1, 2]$, the regret of BTRL relative to $p_{T+1}$ is bounded as
\begin{align}
\sum_{t=1}^T \loss(p_{t+1}, y_t) - \sum_{t=1}^T \loss(p_{T+1}, y_t) 
\leq d \cdot \frac{\alpha}{\alpha-1}. \label{eqn:tsallis-loss-penalty-contrib-fallback}
\end{align}
\end{lemma}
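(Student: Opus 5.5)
We start from the same Be-The-Leader reduction used in the sketch of \cref{lemma:penalty-term-tsallis-loss-improved}. We prepend the fictitious outcomes $1,2,\ldots,d$ and apply the BTL Lemma to the augmented sequence. (Alternatively, we apply \cref{lemma:btrl-regret-equality} and drop its nonpositive terms.) This yields
\begin{align*}
\sum_{t=1}^T \loss(p_{t+1}, y_t) - \sum_{t=1}^T \loss(p_{T+1}, y_t)
\leq \sum_{j=1}^d \loss(p_{T+1}, \mathbf{e}_j) - \sum_{j=1}^d \loss(p_{j/d}, \mathbf{e}_j) .
\end{align*}
The plan is then to bound the right-hand side crudely, term by term, without any dependence on $T$. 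The subtracted losses are nonnegative, since $\loss(p,y)=D_\psi(\mathbf{e}_y,p)\geq 0$ by \cref{prop:basic_facts_Savage}. It therefore suffices to show that $\sum_{j=1}^d \loss(p_{T+1},\mathbf{e}_j) \leq d\,\alpha/(\alpha-1)$. Equivalently, we need $\loss_\alpha(p,y)\leq \alpha/(\alpha-1)$ for every $p$ in the relevant region and every $y$.

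For that pointwise bound we write, for $\alpha\in(1,2]$,
\begin{align*}
\loss_\alpha(p,y) = \frac{1 - \alpha p_y^{\alpha-1}}{\alpha-1} + \sum_{i=1}^d p_i^\alpha
\leq \frac{1}{\alpha-1} + 1 = \frac{\alpha}{\alpha-1} ,
\end{align*}
using $p_y^{\alpha-1}\geq 0$ and $\sum_i p_i^\alpha \leq \sum_i p_i = 1$, since $p_i\in[0,1]$ and $\alpha\geq 1$. This holds for every $p\in\Delta_d$, so no property of $p_{T+1}$ beyond lying in the simplex is needed. Summing over $j\in[d]$ gives the claim. At $\alpha=1$ the right-hand side is $+\infty$, so the statement holds trivially there. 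For completeness one could note that $p_{T+1}$ has all coordinates positive, so the left-hand side is finite.

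The only delicate point is the reduction step, namely justifying the BTL inequality on the augmented sequence. The BTL iterates on $(1,\ldots,d,y_1,\ldots,y_T)$ must coincide with $p_{j/d}$ for the first $d$ rounds and with the FTRL/BTRL iterates $p_{t+1}$ afterwards. This requires $\eta=1$ so that the $d$ fictitious outcomes carry weight $1/\eta$ each. For general $\eta$, we would instead use \cref{lemma:btrl-regret-equality} directly. Its right-hand side is at most $\frac{1}{\eta}\sum_{j\geq 2}\loss(p_{T+1},\mathbf{e}_j)$, which is at most $\frac{d}{\eta}\cdot\frac{\alpha}{\alpha-1}$, matching the stated bound for $\eta\geq 1$. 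I expect reconciling the $\eta$-dependence to be the main (though minor) obstacle, and I would rely on the lemma's equality form to handle it.
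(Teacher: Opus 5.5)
\textbf{Where your proposal falls short: the case $\eta<1$.} Your last paragraph treats it as a minor obstacle that the equality form can resolve. It cannot, because the stated bound is false there. Take $\alpha=2$ (squared loss) and $y_t=1$ for all $t$. Then $p_{t+1}=\bigl(\frac{\eta t+1}{\eta t+d},\frac{1}{\eta t+d},\ldots,\frac{1}{\eta t+d}\bigr)$ and $\loss(p_{t+1},y_t)=\frac{d(d-1)}{(\eta t+d)^2}$. With $\eta=d/T$,
\begin{align*}
\sum_{t=1}^T \loss(p_{t+1}, y_t) - \sum_{t=1}^T \loss(p_{T+1}, y_t)
= \frac{d-1}{d}\sum_{t=1}^T\left[\Bigl(1+\frac{t}{T}\Bigr)^{-2}-\frac{1}{4}\right]
\geq \frac{(d-1)(T-3)}{4d} .
\end{align*}
This grows linearly in $T$ and exceeds $d\cdot\frac{\alpha}{\alpha-1}=2d$ for large $T$ (e.g.\ $d=2$, $T\ge 36$).

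The paper's proof reaches ``for all $\eta>0$'' only by writing $\sum_{j=2}^d\loss(p_{T+1},\mathbf{e}_j)$ without the prefactor $\frac{1}{\eta}$ that \cref{lemma:btrl-regret-equality} carries. Your bookkeeping of that factor is the correct one. So rather than trying to reconcile the $\eta$-dependence, state the result as $\frac{d}{\eta}\cdot\frac{\alpha}{\alpha-1}$ for all $\eta>0$, or restrict the lemma to $\eta\ge 1$.

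\textbf{What your argument does establish.} It is the paper's own argument: drop the nonpositive terms of \cref{lemma:btrl-regret-equality}, discard the subtracted losses $\loss(p_{j/d},\mathbf{e}_j)\ge 0$, and use the pointwise bound $\loss_\alpha(p,y)\le\frac{1}{\alpha-1}+\sum_i p_i^\alpha\le\frac{\alpha}{\alpha-1}$ on $\Delta_d$. This yields $\frac{d}{\eta}\cdot\frac{\alpha}{\alpha-1}$, so it proves the statement for $\eta\ge 1$, including the case $\eta=1$ used elsewhere in the paper.

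You are right that the unweighted augmented-sequence BTL reduction reproduces the BTRL iterates only when $\eta=1$. Putting weight $\frac{1}{\eta}$ on the $d$ fictitious rounds in the BTL lemma gives the $\frac{1}{\eta}$-scaled inequality for every $\eta>0$ directly, so you do not need the equality.

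Be cautious relying on the equality as printed: its first sum should run over $j=1,\ldots,d$. In the proof of \cref{lemma:weighted-sum-divergences}, $s_1=0$, but $s_n$ contains $w_1D_\psi(x_1,\bar{x}_n)$. This leaves your $\frac{d}{\eta}$ bound unchanged.
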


Finally, we control the smoothing term. Unlike the previous results, currently we only give a bound for $\eta = 1$ due to the difficulty in bounding this term.

\begin{lemma} \label{lemma:smoothing-term-tsallis-loss}
Let $\eta = 1$. For all $\alpha \in [1, 2]$, the smoothing term satisfies the bound
\begin{align}
\sum_{t=1}^T \bigl( \loss(p_{T+1}, y_t) - \loss(\bar{p}_T, y_t) \bigr) 
\leq 3 \alpha d \label{eqn:tsallis-loss-smoothing-contrib} .
\end{align}
\end{lemma}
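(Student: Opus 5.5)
The plan is to first collapse the smoothing term into a single Bregman divergence, and then bound that divergence coordinatewise using the explicit form of $p_{T+1}$ at $\eta=1$. By linearity of $\loss(p,\cdot)$ in its second argument,
\begin{align*}
\sum_{t=1}^T \bigl( \loss(p_{T+1}, y_t) - \loss(\bar{p}_T, y_t) \bigr) = T\bigl(\loss(p_{T+1},\bar p_T)-\loss(\bar p_T,\bar p_T)\bigr) = T\, D_\psi(\bar p_T, p_{T+1}).
\end{align*}
The last equality is the Savage representation \eqref{eqn:savage}, exactly as in the decomposition behind \cref{prop:decomposition}. For the unscaled $\alpha$-Tsallis loss, the convex representation is $\psi(p) = -\loss(p,p) = \frac{1}{\alpha-1}\bigl(\sum_j p_j^\alpha - 1\bigr)$ (and $\psi(p)=\sum_j p_j\log p_j$ at $\alpha=1$). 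This is separable, so
\begin{align*}
D_\psi(q,r)=\sum_{j=1}^d D_f(q_j,r_j), \qquad f(x)=\frac{x^\alpha}{\alpha-1},\quad f''(x)=\alpha x^{\alpha-2}.
\end{align*}
Here $q=\bar p_T$ and $r=p_{T+1}$. With $\eta=1$ and $c_j=c_{j,T}$ we have $q_j=c_j/T$ and $r_j=(c_j+1)/(T+d)$. Note that $r_j\ge 1/(T+d)>0$.

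The key step is a $\chi^2$-type bound of the form
\begin{align*}
D_f(q_j,r_j)\le C\alpha\,\frac{(q_j-r_j)^2}{r_j}\,r_j^{\alpha-1}\le C\alpha\,\frac{(q_j-r_j)^2}{r_j},
\end{align*}
which generalizes $\mathrm{KL}\le\chi^2$ from the log-loss case. I will prove it starting from the integral form $D_f(q,r)=\int_r^q (q-x)f''(x)\,dx$ and splitting into two cases.
\begin{itemize}
\item If $q\ge r$, then $f''$ is nonincreasing on $[r,q]$, so $D_f\le \tfrac{\alpha}{2}r^{\alpha-2}(q-r)^2$ immediately.
\item If $q<r$, including the degenerate case $q=0$ where $D_f(0,r)=r^\alpha$, the same monotonicity argument fails, because $f''$ blows up near $0$. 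In this case I will use $x^{\alpha-2}(x-q)\le x^{\alpha-1}\le r^{\alpha-1}$ on $[q,r]$. This gives $D_f\le \alpha r^{\alpha-1}(r-q)\le \alpha r^{\alpha-2}(r-q)\cdot r$. When $r-q\ge r/2$ this is at most $2\alpha r^{\alpha-2}(r-q)^2$. When $r-q<r/2$, we have $x\ge r/2$ on the interval, so $f''\le \alpha 2^{2-\alpha}r^{\alpha-2}\le 2\alpha r^{\alpha-2}$.
\end{itemize}
Either way the bound holds with an absolute constant $C\le 2$. I expect this case analysis, and getting constants small enough to land at $3\alpha d$, to be the main obstacle. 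If the constants come out too large, I would first try to bound the $q_j=0$ coordinates exactly via $D_f(0,r)=r^\alpha$ instead of absorbing them into the $\chi^2$ bound.

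Finally I plug in the explicit values. We have $q_j-r_j=\frac{c_jd-T}{T(T+d)}$, so
\begin{align*}
T\,D_\psi(\bar p_T,p_{T+1})\le C\alpha\sum_{j=1}^d \frac{(c_jd-T)^2}{T(T+d)(c_j+1)}.
\end{align*}
Using $(c_jd-T)^2\le c_j^2d^2+T^2$ (since the cross term is nonpositive), together with $c_j^2/(c_j+1)\le c_j$ and $\sum_j c_j=T$, the sum is at most
\begin{align*}
\frac{d^2T}{T(T+d)}+\frac{dT^2}{T(T+d)}\le d+d=2d.
\end{align*}
This gives a bound of $O(\alpha d)$. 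Careful bookkeeping of the constant $C$, or using the sharper $q\ge r$ constant $\tfrac12$ on the coordinates where it applies, should yield the stated $3\alpha d$. The case $\alpha=1$ follows from the same computation with $f(x)=x\log x$, or as a limit.
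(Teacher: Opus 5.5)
Your argument is correct, and it takes a genuinely different route from the paper.

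\textbf{The paper's route.} The paper never packages the smoothing term as a single divergence. It expands the unscaled Tsallis loss and splits $T\sum_j (\bar p_T)_j\bigl[\loss(p_{T+1},j)-\loss(\bar p_T,j)\bigr]$ into two parts: the part coming from $\frac{\alpha}{1-\alpha}p_y^{\alpha-1}$ and the part coming from $\sum_k p_k^\alpha$.
\begin{itemize}
\item The second part is bounded by $2\alpha d$, using that $x\mapsto(\frac{T}{T+d}x+\frac{1}{T+d})^\alpha$ is $\alpha$-Lipschitz.
\item The first part is bounded by $\alpha(d-1)$. The paper shows that $h_\alpha(x)=\frac{1}{\alpha-1}\bigl(x^{\alpha-1}-(\frac{Tx+1}{T+d})^{\alpha-1}\bigr)$ is negative for $x<1/d$ and nondecreasing on $[1/d,1]$. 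It then shows $h_\alpha(1)$ is nonincreasing in $\alpha$, and evaluates at $\alpha=1$ by l'H\^opital.
\end{itemize}

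\textbf{Your route.} You identify the term as $T D_\psi(\bar p_T,p_{T+1})$, use separability, and apply a uniform bound $D_f(q,r)\le 2\alpha r^{\alpha-2}(q-r)^2$ obtained from the integral remainder. This handles every $\alpha\in[1,2]$, log loss included, in one computation. It needs no monotonicity-in-$\alpha$ or limiting arguments, and it gives a better constant.

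\textbf{The constant is not an obstacle.} In your final display, $\frac{d^2}{T+d}+\frac{dT}{T+d}$ equals exactly $d$; you only used that it is at most $2d$. So with $C=2$ you already get $T D_\psi(\bar p_T,p_{T+1})\le 2\alpha d<3\alpha d$. No refinement is needed.

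\textbf{General $\eta$.} The same bookkeeping goes through for any $\eta$. With $r_j=(c_j+1/\eta)/(T+d/\eta)$, it yields $2\alpha d/\eta$. The paper restricts this lemma to $\eta=1$ precisely because it found the term hard to control. Beyond staying with explicit one-variable calculus on the loss formula, the paper's approach buys little here.
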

The proof of \cref{lemma:smoothing-term-tsallis-loss} is intricate. We did not see a way to illuminate the proof using a sketch.

\section{Calibeating} \label{sec:calibeating}

Recall that in \cref{sec:calibeating-warmup} we assumed a trivial binning where each distinct forecast is associated with its own bin. In general, it is likely that each forecast in the sequence $(q_t)_{t \geq 1}$ was made only in a single round; consequently, each online learning subgame in \cref{sec:calibeating-warmup} consists of a single round, and no-regret learning becomes useless. 
In the present section, we give our full results for calibeating. To this end, we first present \cref{prop:decomposition-binning-new}, which extends \cref{prop:decomposition} to accommodate binnings for which each bin can contain multiple, distinct forecasts.

Let $\mathcal{B}$ be a finite, disjoint union of bins $B$ such that $\bigcup_{B \in \mathcal{B}} = \Delta_d$ and each bin $B$ has a unique representative $\tilde{x}_B$ in a finite grid $\mathcal{G}$. Thus, in particular, each forecast $x \in B$ is associated with representative $\tilde{x}_B$. For any bin $B \in \mathcal{B}$, recall that $\mathcal{T}_B = \{t \in [T] : p_t \in B\}$.

\begin{proposition}[Generalized version (binning)] \label{prop:decomposition-binning-new}
Let $\mathcal{B}$ be a binning as above where each bin $B \in \mathcal{B}$ is associated with a representative $\tilde{x}_B$. For any bin $B \in \mathcal{B}$,
\begin{align*}
\sum_{t \in \mathcal{T}_B} \loss(q_t, y_t) 
= \sum_{t \in \mathcal{T}_B} D_\psi(y_t, \bar{p}_T(B))
   + \sum_{t \in \mathcal{T}_B} D_\psi(\bar{p}_T(B), \tilde{x}_B) 
   + \sum_{t \in \mathcal{T}_{B}} 
         \bigl( \loss(q_t, y_t) - \loss(\tilde{x}_B, y_t) \bigr) .
\end{align*}
\end{proposition}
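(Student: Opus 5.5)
The identity follows by adding and subtracting $\sum_{t \in \mathcal{T}_B} \loss(\tilde{x}_B, y_t)$ and then applying the argument of \cref{prop:decomposition} with the fixed forecast $x$ replaced by the bin representative $\tilde{x}_B$. Concretely, we first write
\begin{align*}
\sum_{t \in \mathcal{T}_B} \loss(q_t, y_t)
= \sum_{t \in \mathcal{T}_B} \loss(\tilde{x}_B, y_t)
+ \sum_{t \in \mathcal{T}_B} \bigl( \loss(q_t, y_t) - \loss(\tilde{x}_B, y_t) \bigr) .
\end{align*}
The second sum is exactly the third term in the claimed identity, so we only need to decompose the first sum.

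For the first sum, we use part (ii) of \cref{prop:basic_facts_Savage}, which gives $\loss(\tilde{x}_B, y_t) = D_\psi(\mathbf{e}_{y_t}, \tilde{x}_B)$. We then apply the three-points identity \eqref{eqn:three_points_identity} with $p = \mathbf{e}_{y_t}$, $r = \bar{p}_T(B)$ and $q = \tilde{x}_B$:
\begin{align*}
D_\psi(\mathbf{e}_{y_t}, \tilde{x}_B) = D_\psi(\mathbf{e}_{y_t}, \bar{p}_T(B)) + D_\psi(\bar{p}_T(B), \tilde{x}_B) + \bigl\langle \nabla \psi(\bar{p}_T(B)) - \nabla \psi(\tilde{x}_B), \mathbf{e}_{y_t} - \bar{p}_T(B) \bigr\rangle .
\end{align*}
Summing over $t \in \mathcal{T}_B$, the vector $\nabla \psi(\bar{p}_T(B)) - \nabla \psi(\tilde{x}_B)$ does not depend on $t$. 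Here $\bar{p}_T(B)$ is the empirical outcome frequency over the rounds in $\mathcal{T}_B$, by analogy with $\bar{p}_T(x)$, so $\sum_{t \in \mathcal{T}_B} ( \mathbf{e}_{y_t} - \bar{p}_T(B) ) = 0$ and the cross term vanishes. What remains is $\sum_{t \in \mathcal{T}_B} D_\psi(y_t, \bar{p}_T(B)) + \sum_{t \in \mathcal{T}_B} D_\psi(\bar{p}_T(B), \tilde{x}_B)$, which completes the identity.

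The main obstacle is well-definedness rather than algebra. The three-points identity requires $\nabla \psi$ at $\bar{p}_T(B)$ and at $\tilde{x}_B$, i.e., these points must lie in $\mathrm{dom}(\partial \psi)$. For log loss, $\bar{p}_T(B)$ may lie on the boundary of the simplex. If the representative $\tilde{x}_B$ is in the interior, the terms $D_\psi(\bar{p}_T(B), \tilde{x}_B)$ are finite. However, the term $D_\psi(\mathbf{e}_{y_t}, \bar{p}_T(B))$ should be read as $\loss(\bar{p}_T(B), y_t)$, and that is finite because $y_t$ has positive empirical mass whenever it occurs in $\mathcal{T}_B$. The cleanest way to handle this is to note that the inner-product step only uses $\nabla\psi(\bar{p}_T(B))$ in coordinates with positive mass, or to argue as in \cref{prop:decomposition}, which makes the same implicit assumption. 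We will also interpret $\mathcal{T}_B$ as $\{t : q_t \in B\}$, consistent with the definition in \cref{sec:problem-setup}.
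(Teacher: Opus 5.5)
Your proof is correct and follows the argument the paper intends; the paper states this proposition without a separate proof, presenting it as the binned analogue of \cref{prop:decomposition}. You add and subtract $\sum_{t \in \mathcal{T}_B} \loss(\tilde{x}_B, y_t)$, then run the three-points argument of \cref{prop:decomposition} with the constant forecast $\tilde{x}_B$, and the cross term vanishes because $\bar{p}_T(B)$ is the mean of the $\mathbf{e}_{y_t}$ over $\mathcal{T}_B$. Reading $\mathcal{T}_B$ as $\{t : q_t \in B\}$ is right (the ``$p_t \in B$'' in \cref{sec:calibeating} is evidently a typo, and the identity holds for any index set anyway), and your boundary remark for log loss sensibly spells out an assumption the paper leaves implicit.
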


Similar to \cref{prop:calibration-regret}, we can re-express the equality in \cref{prop:decomposition-binning-new} in terms of regret-minimization.

\begin{proposition} \label{prop:calibration-regret-binning-new}
It holds that
\begin{align*}
\sum_{t \in \mathcal{T}_B} \loss(q_t, y_t) - \min_{q \in \Delta_d} \sum_{t \in \mathcal{T}_B} \loss(q, y_t) 
= \sum_{t \in \mathcal{T}_B} D_\psi(\bar{p}_T(B), \tilde{x}_B) 
  + \sum_{t \in \mathcal{T}_{B}} 
        \bigl( \loss(q_t, y_t) - \loss(\tilde{x}_B, y_t) \bigr) .
\end{align*}
\end{proposition}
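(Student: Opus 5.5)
This follows from \cref{prop:decomposition-binning-new} in the same way that \cref{prop:calibration-regret} followed from \cref{prop:decomposition}. The plan is to identify the refinement term $\sum_{t \in \mathcal{T}_B} D_\psi(y_t, \bar{p}_T(B))$ with the minimal cumulative loss over the bin. Subtracting that identity from the decomposition of \cref{prop:decomposition-binning-new} then leaves exactly the two remaining terms on the right-hand side.

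\emph{Identifying the refinement term.} By \cref{prop:basic_facts_Savage}(ii), $D_\psi(\mathbf{e}_{y_t}, \bar{p}_T(B)) = \loss(\bar{p}_T(B), y_t)$, so
\begin{align*}
\sum_{t \in \mathcal{T}_B} D_\psi(y_t, \bar{p}_T(B)) = |\mathcal{T}_B| \, \loss\bigl(\bar{p}_T(B), \bar{p}_T(B)\bigr).
\end{align*}
Here I use linearity of $\loss(q,\cdot)$ in its second argument (the extension $\loss(q,p)=\E_{Y\sim p}\loss(q,Y)$), together with the fact that $\bar{p}_T(B)$ is the empirical frequency of outcomes on $\mathcal{T}_B$. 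The same linearity gives, for any $q \in \Delta_d$,
\begin{align*}
\sum_{t \in \mathcal{T}_B} \loss(q, y_t) = |\mathcal{T}_B| \, \loss\bigl(q, \bar{p}_T(B)\bigr).
\end{align*}
By properness, $q \mapsto \loss(q, \bar{p}_T(B))$ is minimized at $q = \bar{p}_T(B)$. Hence the refinement term equals $\min_{q \in \Delta_d} \sum_{t \in \mathcal{T}_B} \loss(q, y_t)$.

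\emph{Conclusion and caveats.} Moving this term to the left-hand side of \cref{prop:decomposition-binning-new} yields the claim. The only delicate point, and it is minor, is the log-loss case where values may be $+\infty$. If $\mathcal{T}_B$ is empty, everything is $0$. Otherwise $\bar{p}_T(B)$ puts positive mass on every observed outcome, so the minimum is finite. Each side of the claim is then a finite quantity minus a finite quantity, or else both sides are $+\infty$ together, so the subtraction is legitimate; the possibly infinite terms on the right are the same ones as in \cref{prop:decomposition-binning-new}. I expect no real obstacle beyond this bookkeeping.
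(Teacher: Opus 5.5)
Your proposal is correct and follows the paper's implicit argument: as with \cref{prop:calibration-regret} from \cref{prop:decomposition}, you identify the refinement term $\sum_{t \in \mathcal{T}_B} D_\psi(y_t, \bar{p}_T(B))$ with $\min_{q \in \Delta_d} \sum_{t \in \mathcal{T}_B} \loss(q, y_t)$, using $D_\psi(\mathbf{e}_y, p) = \loss(p, y)$, linearity of $\loss(q, \cdot)$ in the second argument and properness, and then subtract it from \cref{prop:decomposition-binning-new}. Your extra bookkeeping for infinite log-loss values goes beyond what the paper does. Your claim that both sides are $+\infty$ together holds under the implicit assumption $\tilde{x}_B \in \mbox{dom}(\partial \psi)$, which the statement already needs for $D_\psi(\bar{p}_T(B), \tilde{x}_B)$ to be defined, and which makes every term except the $\loss(q_t, y_t)$ finite.
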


The above proposition makes clear that the conditional calibration score plus a (typically negative) approximation error due to binning is equal to the regret. Thus, if we can improve the forecast sequence $(q_t)_{t \in \mathcal{T}_B}$ to a sequence $(\pcalibeat_t)_{t \in \mathcal{T}_B}$ whose time-average regret (over $\mathcal{T}_B$) goes to zero as $|\mathcal{T}_B| \rightarrow \infty$, then up to the additive approximation error term and the regret of $(\pcalibeat_t)_{t \in \mathcal{T}_B}$, we will have reduced the cumulative loss of the sequence $(q_t)_{t \in \mathcal{T}_B}$ by an amount equal to the conditional calibration score. The next result formalizes this idea.

\begin{corollary}
For any bin $B \in \mathcal{B}$,
\begin{align*}
&\sum_{t \in \mathcal{T}_B} \loss(q_t, y_t) 
 - \sum_{t \in \mathcal{T}_B} \loss(\pcalibeat_t, y_t) \\
&\geq  \sum_{t \in \mathcal{T}_B} D_\psi(\bar{p}_T(B), \tilde{x}_B) 
       - \left|
             \sum_{t \in \mathcal{T}_{B}} 
                 \bigl( \loss(q_t, y_t) - \loss(\tilde{x}_B, y_t) \bigr)
         \right|
       - \mathcal{R}_{\mathcal{T}_B}((\pcalibeat_t)_{t \in \mathcal{T}_B}) ,
\end{align*}
where $\mathcal{R}_{\mathcal{T}_B}((\pcalibeat_t)_{t \in \mathcal{T}_B})$ is the regret of the prediction sequence $(\pcalibeat_t)_{t \in \mathcal{T}_B}$ for the subgame induced by rounds $\mathcal{T}_B$.
\end{corollary}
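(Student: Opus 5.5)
The corollary follows from \cref{prop:calibration-regret-binning-new} combined with the definition of regret on the subgame $\mathcal{T}_B$. I would begin by writing the calibeating gain as a difference of two regrets against a common comparator:
\begin{align*}
\sum_{t \in \mathcal{T}_B} \loss(q_t, y_t) - \sum_{t \in \mathcal{T}_B} \loss(\pcalibeat_t, y_t)
&= \Bigl( \sum_{t \in \mathcal{T}_B} \loss(q_t, y_t) - \min_{q \in \Delta_d} \sum_{t \in \mathcal{T}_B} \loss(q, y_t) \Bigr) \\
&\quad - \Bigl( \sum_{t \in \mathcal{T}_B} \loss(\pcalibeat_t, y_t) - \min_{q \in \Delta_d} \sum_{t \in \mathcal{T}_B} \loss(q, y_t) \Bigr) .
\end{align*}
This step only adds and subtracts the same quantity. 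It is legitimate as long as the minimum is finite, which holds because properness makes it attained at $\bar{p}_T(B)$. By definition, the second bracket is exactly $\mathcal{R}_{\mathcal{T}_B}((\pcalibeat_t)_{t \in \mathcal{T}_B})$.

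Next I would replace the first bracket using the equality in \cref{prop:calibration-regret-binning-new}. That turns it into the conditional calibration score $\sum_{t \in \mathcal{T}_B} D_\psi(\bar{p}_T(B), \tilde{x}_B)$ plus the binning approximation term $\sum_{t \in \mathcal{T}_B} ( \loss(q_t, y_t) - \loss(\tilde{x}_B, y_t) )$. I would then apply the trivial bound $a \geq -|a|$ to that approximation term, and the stated inequality follows at once.

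There is no real obstacle in this argument. The one point to check is that \cref{prop:calibration-regret-binning-new} has been applied consistently, meaning the proposition's $\mathcal{T}_B$ is the set of rounds in which the original forecast $q_t$ lies in $B$ (the definition typed as $p_t \in B$ should be read as $q_t \in B$). The subgame on which $(\pcalibeat_t)$ is evaluated must use this same index set. Once that is fixed, both brackets share the comparator $\min_{q} \sum_{t \in \mathcal{T}_B} \loss(q, y_t)$, and the identity is purely algebraic.

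For losses such as log loss, one should also note that all quantities are finite. This holds when the $\pcalibeat_t$ lie in the interior of the simplex, as the FTRL iterates with $\eta = 1$ do. Alternatively, the inequality can be read in the extended reals, where it holds trivially if some term is infinite.
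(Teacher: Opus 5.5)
Your proof is correct and is essentially the argument the paper has in mind: the paper gives no separate proof and presents the corollary as an immediate consequence of \cref{prop:calibration-regret-binning-new}. That consequence is obtained exactly as you do, by adding and subtracting $\min_{q \in \Delta_d} \sum_{t \in \mathcal{T}_B} \loss(q, y_t)$, identifying the second bracket as $\mathcal{R}_{\mathcal{T}_B}$, and lower-bounding the approximation term by minus its absolute value; you are also right that $\mathcal{T}_B$ must be read as $\{t : q_t \in B\}$, so the ``$p_t \in B$'' in that section is a typo.
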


The regret term in the corollary can be controlled using our U-calibration results (e.g., \cref{thm:tsallis-loss}). The next result controls the approximation error.

\begin{lemma} \label{lemma:binning-new}
For any $\varepsilon \leq \frac{1}{2}$, there exists a binning $\mathcal{B}$ which, for any unscaled $\alpha$-Tsallis loss with $\alpha \in [1, 2]$, satisfies $|\mathcal{B}| = O \left( \left( \frac{\log T}{\varepsilon} \right)^d \right)$ and, for any $t \in \mathcal{T}_B$,
\begin{align*}
\left| \loss(q_t, y_t) - \loss(\tilde{x}_B, y_t) \right|
\leq 
    \begin{cases}
    \varepsilon & \text{for log loss}; \\
    2 \alpha (3 - \alpha) \varepsilon & \text{for unscaled $\alpha$-Tsallis loss}. 
    \end{cases}
\end{align*}
\end{lemma}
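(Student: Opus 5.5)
Throughout, the goal is to make the loss-difference small in the sense of the statement: every forecast in a bin has loss, for each label, close to that of the bin's representative. For the log loss this means controlling $|\log q_y - \log x_y|$. So I would bin in logarithmic coordinates, i.e.\ use a multiplicative grid on each coordinate. The difficulty is that coordinates can be arbitrarily close to $0$, where the log loss is unbounded. The $\log T$ in the bin count indicates that I should cut off at a threshold of order $1/T$ (say $\delta = T^{-c}$) and treat all smaller values together. However, $\lvert \log q_y - \log \tilde x_y\rvert$ could then still be large for tiny coordinates. Two fixes come to mind: assume the forecasts $q_t$ are clipped to $[\delta,1]$, or absorb the tail rounds into the regret and error budget. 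Since the statement is per round, I would in practice restrict to forecasts with all coordinates at least $\delta$, with $\delta = 1/T^{O(1)}$. Clipping the original forecaster to $\delta$ costs only $O(dT\delta)$ in cumulative squared-type loss, and for the log loss it is a standard smoothing step. I would state this assumption explicitly.

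\textbf{Construction.} For each coordinate, partition $[\delta,1]$ into intervals $[a_k,a_{k+1})$ with $a_{k+1}=a_k e^{\varepsilon}$. This takes $K=\lceil \log(1/\delta)/\varepsilon\rceil = O(\log T/\varepsilon)$ intervals per coordinate. Bins are products of these intervals intersected with $\Delta_d$ (dropping empty cells), which gives $|\mathcal{B}| \le K^d = O((\log T/\varepsilon)^d)$. The representative $\tilde x_B$ is any point of $B\cap\Delta_d$; choosing a point of the simplex keeps the loss well defined. Then for every $q\in B$ and every $j$ we have $|\log q_j-\log \tilde x_{B,j}|\le\varepsilon$, which immediately gives the log-loss bound $|\loss_1(q,y)-\loss_1(\tilde x_B,y)|\le \varepsilon$.

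\textbf{Tsallis case.} Write $r_j=\tilde x_{B,j}/q_j\in[e^{-\varepsilon},e^{\varepsilon}]$. The first term of \eqref{eqn:our-tsallis} differs by
\[
\frac{\alpha}{\alpha-1}\,q_y^{\alpha-1}\,\bigl|r_y^{\alpha-1}-1\bigr| .
\]
Since $|e^{(\alpha-1)u}-1|\le (\alpha-1)|u|e^{(\alpha-1)|u|}$, this is at most $\alpha\,\varepsilon\, e^{(\alpha-1)\varepsilon}$. Using $\varepsilon\le 1/2$ and $\alpha\le 2$, I would bound this by $\alpha\sqrt e\,\varepsilon$, or something comparable such as $2\alpha\varepsilon$. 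The sum term differs by
\[
\sum_j q_j^\alpha\,\bigl|r_j^\alpha-1\bigr| \le \sum_j q_j^\alpha\,\alpha\varepsilon e^{\alpha\varepsilon} \le \alpha\varepsilon e^{\alpha\varepsilon},
\]
because $\sum_j q_j^\alpha\le 1$. Adding the two pieces gives a bound of the form $\alpha\varepsilon\,(e^{(\alpha-1)\varepsilon}+e^{\alpha\varepsilon})$. It remains to check that this is at most $2\alpha(3-\alpha)\varepsilon$ on $\alpha\in[1,2]$, $\varepsilon\le1/2$. The factor $(3-\alpha)$ ranges from $2$ down to $1$. At $\alpha=2$ the bound needs $e^{\varepsilon}+e^{2\varepsilon}\le 2$, which fails, so the crude exponential estimates must be sharpened. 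I would instead use exact mean-value bounds, or treat $\alpha=2$ directly, where the loss is Lipschitz and multiplicative closeness gives $|q_j-\tilde x_j|\le (e^\varepsilon-1)q_j$.

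\textbf{Main obstacle.} The real work is matching the precise constant $2\alpha(3-\alpha)$. This is where I expect to spend effort, by choosing the grid ratio (possibly $e^{\varepsilon/c}$ for a constant $c$, which only changes the constant in the bin count) and the elementary inequalities appropriately. The second issue is the treatment of coordinates below $\delta$, where I rely on the clipping assumption.
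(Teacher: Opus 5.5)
Your construction and the log-loss case match the paper's. The paper also uses a multiplicative grid on each coordinate (ratio $1+\varepsilon$ instead of $e^{\varepsilon}$), starts it at $1/T$, takes products intersected with $\Delta_d$, and picks representatives in $B\cap\Delta_d$. Your explicit restriction to forecasts with all coordinates at least $\delta$ is not a weakness relative to the paper. Its grid also starts at $1/T$, so its claim that every $q_t\in\Delta_d$ lies in some bin makes the same assumption silently; the footnote there only addresses the endpoint $1$. For log loss some such assumption is unavoidable: finitely many bins covering all of $\Delta_d$ force some bin to contain points with $q_y\to 0$.

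The Tsallis case is where your route differs. The paper never treats the two pieces of the loss separately. From the Savage representation and convexity of $\psi$ it gets $\loss(q,y)-\loss(r,y)\le\langle\nabla\psi(r)-\nabla\psi(q),\mathbf{e}_y-q\rangle$, and symmetrically, so $|\loss(q,y)-\loss(r,y)|\le 2\|\nabla\psi(r)-\nabla\psi(q)\|_\infty$. This reduces everything to the single coordinatewise quantity $\frac{\alpha}{\alpha-1}|r_j^{\alpha-1}-q_j^{\alpha-1}|$, which it bounds by $\alpha(3-\alpha)\varepsilon$ with a second-order Taylor expansion; that is where $\varepsilon\le 1/2$ is used. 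That argument applies to any proper loss whose $\nabla\psi$ is multiplicatively stable. Yours uses the explicit formula and is more elementary.

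As you noticed, your estimate as written misses the stated constant. Handling $\alpha=2$ separately does not fix it, because the needed inequality $e^{(\alpha-1)\varepsilon}+e^{\alpha\varepsilon}\le 2(3-\alpha)$ also fails for $\alpha$ near $2$ (e.g.\ $\alpha=1.9$, $\varepsilon=1/2$). The cause is normalizing by $q_j$, which produces the $e^{|x|}$ factor. Normalize by the larger of the two quantities instead, which is the same trick the paper uses in its coordinatewise step. For $a,b\in(0,1]$ with $e^{-\gamma}\le a/b\le e^{\gamma}$,
\[
|a-b|=\max(a,b)\Bigl(1-\tfrac{\min(a,b)}{\max(a,b)}\Bigr)\le 1-e^{-\gamma}\le\gamma .
\]
Apply this to $\bigl(\tilde x_{B,y}^{\alpha-1},\,q_y^{\alpha-1}\bigr)$ with $\gamma=(\alpha-1)\varepsilon$. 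Apply it also to $\bigl(\sum_j\tilde x_{B,j}^{\alpha},\,\sum_j q_j^{\alpha}\bigr)$ with $\gamma=\alpha\varepsilon$; both sums lie in $(0,1]$ and their ratio is within $e^{\pm\alpha\varepsilon}$. Each piece of the loss difference is then at most $\alpha\varepsilon$, so the total is at most $2\alpha\varepsilon\le 2\alpha(3-\alpha)\varepsilon$, and $\varepsilon\le 1/2$ is not needed.

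Your other fix, shrinking the grid ratio to $e^{\varepsilon/2}$, also works with your crude bounds. The worst case $\alpha=2$, $\varepsilon=1/2$ gives $(e^{1/4}+e^{1/2})/2\approx 1.47\le 2$, and the bin count stays $O((\log T/\varepsilon)^d)$.
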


\begin{proof}
For each coordinate $j \in [d]$, we form a grid 
\begin{align*}
\left\{
  \left[ \frac{1}{T}, \frac{1}{T} (1 + \varepsilon) \right),
  \left[ \frac{1}{T} (1 + \varepsilon), \frac{1}{T} (1 + \varepsilon)^2 \right),
  \ldots,
  \left[ \frac{1}{T} (1 + \varepsilon)^{N-1}, \frac{1}{T} (1 + \varepsilon)^N \right)
\right\} ,
\end{align*}
where $N$ the smallest integer such that $\frac{1}{T} (1 + \varepsilon)^N \geq 1$.\footnote{Due to our use of regularization, it is fine if $1$ is excluded.} Let $\mathcal{B}$ be the Cartesian product of these $d$ grids, so that each element of $\mathcal{B}$ is a hyperrectangle. 
Note that $N \leq \lceil \log(T) / \log(1 + \varepsilon) \rceil$, and so $|\mathcal{B}| \leq \left( \frac{\log(T)}{\log(1 + \varepsilon)} + 1 \right)^d$. 
Next, we update our definition of $\mathcal{B}$ as follows: for any hyperrectangle $B \in \mathcal{B}$ whose intersection with $\Delta_d$ is empty, we discard $B$ from $\mathcal{B}$. 
For each remaining bin $B \in \mathcal{B}$, we set the representative $\tilde{x}_B$ to be an arbitrarily element of $B \cap \Delta_d$. It follows that any $q_t \in \Delta_d$ belongs to some $B \in \mathcal{B}$ for which, for all $j \in [d]$,
\begin{align}
\frac{(\tilde{x}_B)_j}{(q_t)_j} 
\in \left[ \frac{1}{1 + \varepsilon}, 1 + \varepsilon \right] 
\subset \left[ 1 - \varepsilon, 1 + \varepsilon \right] , \label{eqn:mult-epsilon}
\end{align}
where we used the fact that $\frac{1}{1 + \varepsilon} = 1 - \frac{\varepsilon}{1 + \varepsilon} \geq 1 - \varepsilon$.

The remainder of the proof bounds $\left| \loss(q_t, y_t) - \loss(\tilde{x}_B, y_t) \right|$. To ease notation, we use $q$ in place of $q_t$ and $r$ in place of the representative $\tilde{x}_B$. We also write $y$ instead of $y_t$.

We first give a proof for the special case of log loss for the reader that is only interested in that case. After that, we give a proof for unscaled $\alpha$-Tsallis loss which, when $\alpha = 1$, recovers our result for log loss up to a factor of 4.

\paragraph{Log loss.}
 
For log loss, observe that
\begin{align*}
\left| \loss(q, y) - \loss(r, y) \right|
\leq \left| \log \frac{q_y}{r_y} \right| 
\leq \log(1 + \varepsilon) 
\leq \varepsilon .
\end{align*}

\paragraph{Unscaled $\bm{\alpha}$-Tsallis loss.}

From \Cref{thm:Savage}, we can represent $\loss$ via its associated convex function $\psi$, giving
\begin{align*} 
\ell(q, y) - \ell(r, y) 
&= \psi(r) + \langle \nabla \psi(r), \mathbf{e}_y - r \rangle 
   - \psi(q) - \langle \nabla \psi(q), \mathbf{e}_y - q \rangle \\
&\leq \langle \nabla \psi(r) - \nabla \psi(q), \mathbf{e}_y - q \rangle \\
&\leq \|\nabla \psi(r) - \nabla \psi(q)\|_{\ast} \|\mathbf{e}_y - q\| ,
\end{align*}
where the first inequality is from 
$\psi(q) \geq \psi(r) + \langle \nabla \psi(r), q - r \rangle$. 
Note that the roles of $r$ and $q$ are symmetric, so the bound also holds for the absolute value. 
Now, letting $\|\cdot\| = \|\cdot\|_1$, we have $\|\mathbf{e}_y - q\| \leq 2$. Next, we show that
\begin{align*}
\| \nabla \psi(r) - \nabla \psi(q)\|_{\ast}
\leq \alpha (3 - \alpha) \varepsilon .
\end{align*}

Recall that 
$\psi(x) = \frac{1}{1 - \alpha} \bigl( 1 - \sum_{j=1}^d x_j^{\alpha} \bigr)$ 
and thus 
$\nabla \psi(x) 
 = \left( \frac{\alpha}{\alpha - 1} x_j^{\alpha - 1} \right)_{j \in [d]}$. 
Hence,
\begin{align*}
\|\nabla \psi(r) - \nabla \psi(q)\|_\infty
= \frac{\alpha}{\alpha-1} 
  \max_{j \in [d]} \left| r_j^{\alpha-1} - q_j^{\alpha-1} \right| .
\end{align*}

Consider an arbitrary coordinate $j \in [d]$. We consider the cases $q_j \leq r_j$ and $q_j \geq r_j$ in turn and show that both cases result in
\begin{align*}
\frac{|r_j - q_j|^{\alpha - 1}}{\alpha - 1} 
\leq \frac{1 - (1 - \varepsilon)^{\alpha - 1}}{\alpha - 1} .
\end{align*}
Indeed, if $q_j \leq r_j$, then letting $z = \frac{q_j}{r_j} \in [1 - \varepsilon, 1]$, it holds that
\begin{align*}
\frac{|r_j - q_j|^{\alpha - 1}}{\alpha - 1} 
= r_j^{\alpha - 1} \frac{1 - z^{\alpha - 1}}{\alpha - 1} 
\leq \frac{1 - (1 - \varepsilon)^{\alpha - 1}}{\alpha - 1} .
\end{align*}
Similarly, if $q_j \geq r_j$, then letting $z = \frac{r_j}{q_j} \in [1 - \varepsilon, 1]$, it holds that
\begin{align*}
\frac{|q_j - r_j|^{\alpha - 1}}{\alpha - 1} 
= q_j^{\alpha - 1} \frac{1 - z^{\alpha - 1}}{\alpha - 1} 
\leq \frac{1 - (1 - \varepsilon)^{\alpha - 1}}{\alpha - 1} .
\end{align*}

We define the function 
$f(\varepsilon) = (1 - \varepsilon)^{\alpha-1}$ 
and re-express it as 
$f(\varepsilon) = f(0) + f'(0) \varepsilon + f''(\xi) \frac{\varepsilon^2}{2}$ 
for some $\xi \in [0, \varepsilon]$. We have $f(0) = 1$. Also, using 
$f'(\varepsilon) = -(\alpha - 1) (1 - \varepsilon)^{\alpha-2}$ 
gives 
$f'(0) = -(\alpha - 1)$. 
Finally, $f''(\varepsilon) = (\alpha - 2) (\alpha - 1) (1 - \varepsilon)^3$. Therefore, for some $\xi \in [0, \varepsilon]$,
\begin{align*}
\frac{1 - (1 - \varepsilon)^{\alpha - 1}}{\alpha - 1} 
&= \frac{(\alpha - 1) \varepsilon - (\alpha - 2)(\alpha - 1) (1 - \xi)^{\alpha - 3} \cdot \frac{\varepsilon^2}{2}}
        {\alpha - 1} \\
&= \varepsilon + (2 - \alpha) (1 - \xi)^{\alpha - 3} \cdot \frac{\varepsilon^2}{2} \\
&\leq \varepsilon + (2 - \alpha) \frac{1}{(1 - \xi)^2} \frac{\varepsilon^2}{2} \\
&\leq \varepsilon + (2 - \alpha) \varepsilon ,
\end{align*}
where the last line uses $\xi \leq \varepsilon \leq \frac{1}{2}$. 
Overall, we have shown that
\begin{align*}
\|\nabla \psi(r) - \nabla \psi(q)\|_\infty 
\leq \alpha (3 - \alpha) \varepsilon .
\end{align*}
\end{proof}

Let us assume that we choose the binning from the proof of \cref{lemma:binning-new} so that the per-round approximation error is $O(\varepsilon)$. Then letting $N_\varepsilon$ be the number of bins, it is possible to show the following result. We omit factors polynomial in $d$ for simplicity.
\begin{theorem} \label{thm:calibeating}
Let $\alpha \in [1, 2]$ and $T \geq e N_\varepsilon$. Then under unscaled $\alpha$-Tsallis loss, taking the no-regret algorithm to be the FTRL-based U-calibration algorithm from \cref{sec:regret-bounds} (i.e., take $p_t$ as in \eqref{eqn:ftrl} with $\eta = 1$),
\begin{align*}
\sum_{t=1}^T \loss(q_t, y_t) - \sum_{t=1}^T \loss(\pcalibeat_t, y_t) 
\geq  \sum_{B \in \mathcal{B}} 
       \sum_{t \in \mathcal{T}_B} D_\psi(\bar{p}_T(B), \tilde{x}_B) 
       - O \left( 
               T \varepsilon + N_\varepsilon \log(T/N_\varepsilon) 
           \right) .
\end{align*}
Then, setting $\varepsilon = T^{-\frac{1}{d+1}} \log T$, for $T \geq e^{d+1}$,
\begin{align*}
\sum_{t=1}^T \loss(q_t, y_t) - \sum_{t=1}^T \loss(\pcalibeat_t, y_t) 
&\geq  \sum_{B \in \mathcal{B}} 
       \sum_{t \in \mathcal{T}_B} D_\psi(\bar{p}_T(B), \tilde{x}_B) - O(T^{d/(d+1)} \log T) .
\end{align*}
\end{theorem}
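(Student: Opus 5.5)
The proof runs the FTRL algorithm of \eqref{eqn:ftrl} with $\eta = 1$ separately inside each bin. The calibeating forecast $\pcalibeat_t$ for a round $t \in \mathcal{T}_B$ is the FTRL prediction computed only from the past outcomes in rounds of $\mathcal{T}_B$. This is an online procedure, since the bin of $q_t$ is known before $\pcalibeat_t$ must be issued.

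We then sum the corollary following \cref{prop:calibration-regret-binning-new} over $B \in \mathcal{B}$. This gives
\begin{align*}
\sum_{t=1}^T \loss(q_t, y_t) - \sum_{t=1}^T \loss(\pcalibeat_t, y_t)
\geq \sum_{B \in \mathcal{B}} \sum_{t \in \mathcal{T}_B} D_\psi(\bar{p}_T(B), \tilde{x}_B)
- \sum_{B \in \mathcal{B}} \sum_{t \in \mathcal{T}_B} \bigl| \loss(q_t, y_t) - \loss(\tilde{x}_B, y_t) \bigr|
- \sum_{B \in \mathcal{B}} \mathcal{R}_{\mathcal{T}_B} .
\end{align*}
It remains to control the last two sums.

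\textbf{Approximation error.} By \cref{lemma:binning-new}, each round contributes at most $2\alpha(3-\alpha)\varepsilon \leq 8\varepsilon$. Since the $\mathcal{T}_B$ partition $[T]$, the total is $O(T\varepsilon)$.

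The lemma's grid only covers coordinates that are at least $1/T$. The cleanest fix is to also clip $q_t$ to have coordinates at least $1/T$ (for example, mixing with the uniform distribution at weight $d/T$) before comparing. The extra loss from this clipping is $O(d)$ per round on a $1/T$ scale, hence $O(d)$ in total, which is absorbed into the suppressed polynomial-in-$d$ factors. I flag this as a detail to verify, not a main difficulty.

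\textbf{Regret.} By \cref{thm:tsallis-loss}, the subgame on $\mathcal{T}_B$ with $n_B = |\mathcal{T}_B|$ rounds has regret $O(d + d^{2-\alpha}\log n_B)$. Bins with $n_B = 0$ contribute nothing. Summing and dropping factors polynomial in $d$ gives $O\bigl(N_\varepsilon + \sum_B \log(n_B + 1)\bigr)$.

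By concavity of $\log$ (Jensen's inequality), subject to $\sum_B n_B = T$ over at most $N_\varepsilon$ bins, we have $\sum_B \log(n_B+1) \leq N_\varepsilon \log(T/N_\varepsilon + 1)$. The hypothesis $T \geq e N_\varepsilon$ ensures $\log(T/N_\varepsilon) \geq 1$. Hence both the $+1$ inside the logarithm and the additive $N_\varepsilon$ (the per-bin $d$ term) are dominated by $N_\varepsilon \log(T/N_\varepsilon)$. This proves the first display.

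\textbf{Choice of $\varepsilon$.} By \cref{lemma:binning-new}, $N_\varepsilon = O((\log T/\varepsilon)^d)$. With $\varepsilon = T^{-1/(d+1)}\log T$ we get $\log T/\varepsilon = T^{1/(d+1)}$, so $N_\varepsilon = O(T^{d/(d+1)})$. Then $T\varepsilon = T^{d/(d+1)}\log T$, and $N_\varepsilon\log(T/N_\varepsilon) \leq O(T^{d/(d+1)}\log T)$.

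We must also check the side conditions. The lemma needs $\varepsilon \leq 1/2$. The first part needs $T \geq eN_\varepsilon$, which here means $T^{1/(d+1)} \geq e$ up to constants. Both hold once $T \geq e^{d+1}$, possibly after adjusting the constant in $\varepsilon$, which I would absorb into the $O(\cdot)$. This gives the second display.

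The main obstacle is bookkeeping rather than depth: making the Jensen step produce exactly $N_\varepsilon \log(T/N_\varepsilon)$, and handling the boundary of the binning near $0$, i.e., coordinates of $q_t$ smaller than $1/T$.
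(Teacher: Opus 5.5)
Your proof of the first display is correct and follows the paper's route. You sum the per-bin corollary over $B\in\mathcal{B}$, bound the binning error by $O(T\varepsilon)$ via \cref{lemma:binning-new}, and control the summed per-bin regrets from \cref{thm:tsallis-loss} by concavity of $\log$. Your single Jensen step on $\sum_B \log(n_B+1)$ over all $N_\varepsilon$ bins is a tidier form of the paper's argument, which restricts to the $k$ nonempty bins, maximizes with the uniform split, and uses that $k\log(T/k)$ is nondecreasing for $T\ge ek$. You also handle two things the paper passes over: the additive per-bin $d$ term, and the fact that the grid in \cref{lemma:binning-new} does not cover forecasts with a coordinate below $1/T$ (e.g.\ any $q_t$ with a zero coordinate). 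Your clipping fix works, but only in one direction. By convexity of $\loss(\cdot,y)$, mixing with the uniform distribution at weight $d/T$ raises the loss by at most $\frac{d}{T}\loss(\frac1d\mathbf{1},y)\le\frac{d}{T}\log d$ per round. The reverse difference can be infinite for log loss. So apply the corollary to the clipped forecasts and add this one-sided correction, $O(d\log d)$ in total, rather than bounding an absolute difference.

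The step that fails is your final side-condition check. With $\varepsilon=T^{-1/(d+1)}\log T$ and $T=e^{d+1}$ you get $\varepsilon=(d+1)/e>1/2$, so \cref{lemma:binning-new} does not apply. In general, $\varepsilon\le 1/2$ requires $T^{1/(d+1)}$ to be at least of order $d\log d$. Rescaling $\varepsilon$ by a constant does not rescue this: a smaller $\varepsilon$ means more bins, which works against $T\ge eN_\varepsilon$. To cover the range where $T^{1/(d+1)}$ is only polynomial in $d$, you need a separate argument; the trivial bound suffices:
\begin{itemize}
\item the representatives $\tilde{x}_B$ and the per-bin FTRL iterates have coordinates at least $1/(T+d)$, so every loss involved is $O(\log T)$;
\item the calibration score is at most $\sum_t \loss(\tilde{x}_{B},y_t)$, and $\loss(q_t,y_t)\ge 0$;
\item hence the total error is $O(T\log T)$, which in that range is within a polynomial-in-$d$ factor of $T^{d/(d+1)}\log T$.
\end{itemize}
The paper's statement has the same defect (its proof dismisses the second display as ``some algebra''), so this is a problem with the stated threshold $T\ge e^{d+1}$ rather than with your approach.
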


For completeness, we give a proof of the first inequality of the theorem in \cref{app:calibeating}. The second inequality just involves some algebra.

\section{Proof sketch of \cref{lemma:btrl-regret-equality}} \label{sec:proof-sketch-btrl-regret-equality}

In this section, we sketch our proof of \cref{lemma:btrl-regret-equality}. Along the way, we develop two results --- \cref{lemma:generalized-sn-thing} and \cref{lemma:weighted-sum-divergences} --- that strictly extend arguments of \cite{foster2023calibeating} and may be of interest in their own right. Because of how we prove \cref{lemma:btrl-regret-equality}, for this section only it will be convenient to adopt the notation $x_1, \ldots, x_n$ to refer to a sequence of $n$ outcomes (each represented as a ``one-hot'' $d$-dimensional vector). In addition, for any $k \in [n]$, we define the empirical mean $\bar{x}_k = \frac{1}{k} \sum_{i=1}^k x_i$ (which is an empirical frequency vector).

The spiritual predecessor of our analysis is Proposition 2 of \cite{foster2023calibeating}, which they used to calibeat in the case of squared loss. Their Proposition 2 gives the following online formula for the variance:
\begin{align*}
\sum_{i=1}^n \|x_i - \bar{x}_n\|_2^2 
= \sum_{i=1}^n \left( 1 - \frac{1}{i} \right) 
               \|x_i - \bar{x}_{i-1}\|_2^2 .
\end{align*}
As mentioned and shown by \cite{foster2023calibeating}, the above online formula for the variance easily follows from the variance update formula (see equation (10) of \cite{foster2023calibeating} or Formula I of \cite{welford1962note} for an earlier reference) which, defining $s_n = \sum_{i=1}^n \|x_i - \bar{x}_n\|^2$, is
\begin{align*}
s_n 
= s_{n-1} 
  + \left( 1 - \frac{1}{n} \right) \| x_n - \bar{x}_{n-1}\|^2 .
\end{align*}
This formula is easily established using the law of total variance (see \eqref{eqn:law-total-var} below).

Our main work in establishing \cref{lemma:btrl-regret-equality} is establishing a generalized version of the variance update formula and then using that result to obtain a generalized version of the online formula for the variance. The foundation for our generalized analysis is a powerful result\footnote{For more background, the interested reader is directed to the works of \cite{pfau2025generalized} and \cite{buja2005loss}.} of \cite{gupta2022ensembles}: for any Bregman divergence $D_\psi$, there is a generalized law of total variance. 
For random variables $X$ and $Y$, recall that the usual law of total variance states that
\begin{align}
\Var(X) = \E[\Var(X \mid Y)] + \Var(\E[X \mid Y]). \label{eqn:law-total-var}
\end{align}
The next lemma states the generalized law of total variance.
\begin{lemma}[Generalized Law of Total Variance \citep{gupta2022ensembles}] \label{lemma:gen-law-total-var}
Let $X$ and $Y$ be random variables. 
Relative to Bregman divergence $D_\psi$, define the generalized variance of $X$ as
\begin{align*}
\V[X] := \E[D_\psi(X,\E[X])]
\end{align*}
and the generalized conditional variance of $X$ given $Y$ as
\begin{align*}
\V[X \mid Y] := \E[D_\psi(X,\E[X \mid Y]) \mid Y] .
\end{align*}
Then
\begin{align*}
\V[X] = \E[\V[X \mid Y]] + \V[\E[X \mid Y]] .
\end{align*}
\end{lemma}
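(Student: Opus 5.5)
The plan is to expand both sides using the definition of the Bregman divergence and show that everything cancels, so the identity reduces to linearity of expectation. Write $\mu = \E[X]$ and $m(Y) = \E[X \mid Y]$. By the tower property, $\E[m(Y)] = \mu$. As a result, the second term on the right equals $\V[m(Y)] = \E[D_\psi(m(Y), \mu)]$. Taking expectations of the conditional term gives $\E[\V[X \mid Y]] = \E[D_\psi(X, m(Y))]$.

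The key step is the three-points identity \eqref{eqn:three_points_identity}, applied with $p = X$, $r = m(Y)$ and $q = \mu$:
\begin{align*}
D_\psi(X, \mu) = D_\psi(X, m(Y)) + D_\psi(m(Y), \mu) + \bigl\langle \nabla \psi(m(Y)) - \nabla \psi(\mu), X - m(Y) \bigr\rangle .
\end{align*}
Taking expectations, the first two terms on the right become exactly $\E[\V[X \mid Y]]$ and $\V[\E[X \mid Y]]$. It then remains to show that the cross term has zero expectation.

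To handle the cross term, I would condition on $Y$. The factor $\nabla \psi(m(Y)) - \nabla \psi(\mu)$ is $\sigma(Y)$-measurable, because $\nabla\psi$ is a measurable selection. It can therefore be pulled out of the conditional expectation, which leaves $\E[X - m(Y) \mid Y] = 0$. By the tower property, the unconditional expectation of the cross term is then $0$, and the identity follows.

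The main obstacle is integrability and domain bookkeeping rather than algebra. One needs $\mu$ and $m(Y)$ to lie in $\mathrm{dom}(\partial\psi)$, so that the divergences are defined. One also needs the cross term to be integrable, so that pulling out the $Y$-measurable factor and splitting the expectation is legitimate; otherwise one risks an $\infty - \infty$ situation, which matters for log loss. In the application $X$ is a one-hot vector, and $Y$ generates a finite partition, namely indices in the sequence. Everything is then a finite sum, and $\mu$ and $m(Y)$ are points in the relative interior of the convex hull of the support. So I would state the lemma under the assumption that all means lie in $\mathrm{dom}(\partial\psi)$ and all terms are finite. That assumption makes the argument above fully rigorous, and it covers every use in the paper.
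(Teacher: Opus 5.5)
The paper does not prove this lemma itself; it imports it from \cite{gupta2022ensembles}, so there is no in-text argument to compare against. Your proof is correct and self-contained under the hypotheses implicit in the statement, namely that the means lie in $\mathrm{dom}(\partial\psi)$ and the terms are integrable. The three-points identity \eqref{eqn:three_points_identity} with $(p,r,q) = (X, \E[X\mid Y], \E[X])$ produces exactly the two terms on the right plus a cross term. That cross term has mean zero, because its gradient factor is $\sigma(Y)$-measurable while $\E[X - \E[X\mid Y]\mid Y] = 0$.

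An equivalent, slightly leaner route uses Jensen gaps. In your notation $\mu = \E[X]$ and $m(Y) = \E[X\mid Y]$. The linear part of a Bregman divergence integrates to zero at the mean, so $\V[X] = \E[\psi(X)] - \psi(\mu)$ and $\V[X\mid Y] = \E[\psi(X)\mid Y] - \psi(m(Y))$. The lemma then reduces to the telescoping identity $\E[\psi(X)] - \psi(\mu) = \bigl(\E[\psi(X)] - \E[\psi(m(Y))]\bigr) + \bigl(\E[\psi(m(Y))] - \psi(\mu)\bigr)$. Your version works pointwise and mirrors the decomposition the paper already uses in \cref{prop:decomposition}. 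The Jensen-gap version, if taken as the definition of $\V$, needs no subgradients at all, so it still makes sense where $\partial\psi$ is empty.

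That last feature matters for one claim in your final paragraph, which is not quite right. Lying in the relative interior of the convex hull of the support does not put a point in $\mathrm{dom}(\partial\psi)$. For log loss, $\psi$ is the negative entropy, and its subdifferential is empty on the whole relative boundary of $\Delta_d$.

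\begin{itemize}
\item In the paper's use of the lemma (the proof of \cref{lemma:generalized-sn-thing}), $\E[X\mid Y=1] = x_n$ is a vertex.
\item In the application to \cref{lemma:btrl-regret-equality}, the running means $\bar{x}_i = p_{i/d}$ for $i<d$ lie on the boundary.
\end{itemize}

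So for $\alpha = 1$ your stated assumption does not cover every use in the paper. The repair is routine. On $\{Y=1\}$ one has $X = m(Y)$, so the divergence and cross terms vanish identically (with $D_\psi(x,x) := 0$). At boundary means, one can use an extended-valued gradient with the convention $0\cdot\infty = 0$, since $X$ lies almost surely in the face of $\Delta_d$ spanned by the support of its (conditional) mean. Alternatively, work directly with the Jensen-gap form. For $\alpha\in(1,2]$ the issue does not arise, because $\nabla\psi(x) = \bigl(\tfrac{\alpha}{\alpha-1}x_j^{\alpha-1}\bigr)_j$ is a finite subgradient on all of $\Delta_d$.
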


In preparation for our generalization of the variance update formula, we set up some notation. For any $n$, define for positive weights $w_1, \ldots, w_n$:
\begin{align*}
W_n := \sum_{j=1}^n w_j 
\qquad 
\bar{x}_n := \frac{\sum_{j=1}^n w_j x_{j}}{W_n} 
\qquad 
s_n := \sum_{i=1}^n w_i D_\psi(x_i, \bar{x}_n) .
\end{align*}

\begin{lemma} \label{lemma:generalized-sn-thing}
It holds that
\begin{align*}
s_n = s_{n-1} + w_n D_\psi(x_n, \bar{x}_n) + W_{n-1} D_\psi(\bar{x}_{n-1}, \bar{x}_n) .
\end{align*}
\end{lemma}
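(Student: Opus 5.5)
The plan is to apply \cref{lemma:gen-law-total-var} to a suitably chosen discrete random variable. Let $I$ be a random index on $[n]$ with $\Pr(I = i) = w_i / W_n$, and set $X = x_I$. Then $\E[X] = \bar{x}_n$, and the generalized variance is
\begin{align*}
\V[X] = \sum_{i=1}^n \frac{w_i}{W_n} D_\psi(x_i, \bar{x}_n) = \frac{s_n}{W_n} .
\end{align*}
For the conditioning variable we take $Y = \ind{I = n}$, which splits the index set into $\{1, \ldots, n-1\}$ and $\{n\}$.

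Next I compute the two terms on the right-hand side of the generalized law. On the event $Y = 0$, which has probability $W_{n-1}/W_n$, the conditional law of $I$ is proportional to $w_i$ on $[n-1]$. Hence $\E[X \mid Y = 0] = \bar{x}_{n-1}$ and $\V[X \mid Y = 0] = s_{n-1}/W_{n-1}$. On the event $Y = 1$, which has probability $w_n/W_n$, $X = x_n$ deterministically, so $\E[X \mid Y=1] = x_n$ and $\V[X \mid Y=1] = D_\psi(x_n, x_n) = 0$. This gives
\begin{align*}
\E[\V[X \mid Y]] = \frac{s_{n-1}}{W_n} .
\end{align*}
The random variable $\E[X \mid Y]$ takes the value $\bar{x}_{n-1}$ with probability $W_{n-1}/W_n$ and the value $x_n$ with probability $w_n/W_n$. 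Its mean is $\bar{x}_n$, so
\begin{align*}
\V[\E[X \mid Y]] = \frac{W_{n-1}}{W_n} D_\psi(\bar{x}_{n-1}, \bar{x}_n) + \frac{w_n}{W_n} D_\psi(x_n, \bar{x}_n) .
\end{align*}
Substituting both terms into $\V[X] = \E[\V[X\mid Y]] + \V[\E[X\mid Y]]$ and multiplying through by $W_n$ yields exactly the claimed identity.

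The only delicate step is confirming that \cref{lemma:gen-law-total-var} applies in this discrete setting with arguments on the boundary of the simplex. The divergences $D_\psi(\cdot, \bar{x}_k)$ need $\bar{x}_k \in \mbox{dom}(\partial\psi)$, and this can fail for log loss at one-hot points. Note, however, that the second argument of every divergence appearing above is a conditional mean: $\bar{x}_n$, $\bar{x}_{n-1}$, or $x_n$, and the last occurs only inside $D_\psi(x_n, x_n) = 0$. So we only need each conditional mean to lie in $\mbox{dom}(\partial\psi)$, or else interpret the terms as $+\infty$ consistently on both sides.

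As a fallback, the identity can be checked directly without the generalized law. Expand the difference $s_n - s_{n-1} - w_n D_\psi(x_n,\bar{x}_n) - W_{n-1}D_\psi(\bar{x}_{n-1},\bar{x}_n)$ using the definition of $D_\psi$. The $\psi(x_i)$ terms cancel. The linear terms vanish because $\sum_{i<n} w_i (x_i - \bar{x}_{n-1}) = 0$ and $W_n \bar{x}_n = W_{n-1}\bar{x}_{n-1} + w_n x_n$. This is the same telescoping used via the three-points identity \eqref{eqn:three_points_identity} in the proof of \cref{prop:decomposition}.
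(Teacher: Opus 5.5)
Your proof is correct and matches the paper's argument: the same $w_i/W_n$-weighted random variable, conditioning on whether the last point is drawn, and the same three computations substituted into \cref{lemma:gen-law-total-var}. Your formulation via a random index $I$ with $X = x_I$ and $Y = \ind{I = n}$ is in fact more careful than the paper's $Y = \ind{X = x_n}$. The paper's version misbehaves when the $x_i$ repeat, which the one-hot outcomes always do in the application to \cref{lemma:btrl-regret-equality}.
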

Note that \cref{lemma:generalized-sn-thing} generalizes the standard variance update formula both by allowing for general weights and, more significantly, allowing for generalized variances. Taking $w_i = 1$ and $\psi = \|\cdot\|_2$ recovers the standard formula. 
After completing our results, we noticed that \cite{foster2023calibeating}, in equation 36 of their Appendix A.6, had already generalized the standard variance update formula 
to allow for general weights (see also their Proposition 14 for their corresponding generalization of the online formula for the variance). Interestingly, Foster and Hart made this generalization to handle fractional binnings, whereas we made this generalization to handle loss-based regularization. Our results are further generalized by handling generalized variances (based on general Bregman divergences).

\begin{proof}(of \cref{lemma:generalized-sn-thing})
Let $X$ to be a random variable with law
\begin{align*}
\Pr(X = x_i) = \frac{w_i}{W_n} \qquad \text{for all } i \in [n] ,
\end{align*}
and let $Y = \ind{X = x_n}$.

Momentarily, we will establish the following three relationships:
\begin{align}
\V[X] 
&= \sum_{i=1}^n \frac{w_i}{W_n} D_\psi(x_i, \bar{x}_n) \label{eqn:first-rel} \\
\V \bigl[ \E [ X \mid Y ] \bigr] 
&= \frac{w_n}{W_n} D_\psi(x_n, \bar{x}_n) + \frac{W_{n-1}}{W_n} D_\psi(\bar{x}_{n-1}, \bar{x}_n) \label{eqn:second-rel} \\
\E \bigl[ \V[ X \mid Y ] \bigr] 
&= 
\sum_{i=1}^{n-1} \frac{w_i}{W_n} D_\psi(x_i, \bar{x}_{n-1}) \label{eqn:third-rel} .
\end{align}

Using these relationships together with the generalized law of total variance (\cref{lemma:gen-law-total-var}),
\begin{align*}
\V[X] = \E \bigl[ \V[X \mid Y ] \bigr] + \V \bigl[ \E [X \mid Y ] \bigr]
\end{align*}
gives (multiplying all terms by $W_n$)
\begin{align*}
\underbrace{\sum_{i=1}^n w_i D_\psi(x_i, \bar{x}_n)}_{s_n} 
= 
\underbrace{\sum_{i=1}^{n-1} w_i D_\psi(x_i, \bar{x}_{n-1})}_{s_{n-1}} 
+ w_n D_\psi(x_n, \bar{x}_n) + W_{n-1} D_\psi(\bar{x}_{n-1}, \bar{x}_n) ,
\end{align*}
which matches the lemma statement.

It remains to show \eqref{eqn:first-rel} through \eqref{eqn:third-rel}. 
Equation \eqref{eqn:first-rel} is from the definition of $X$ and $\V$:
\begin{align*}
\V[X] 
= \E \bigl[ D_\psi(X, \E[X]) \bigr] 
= \sum_{i=1}^n \frac{w_i}{W_n} D_\psi(x_i, \bar{x}_n) \bigr] .
\end{align*}

To see \eqref{eqn:second-rel}, we first compute $\E[X \mid Y]$:
\begin{align*}
\E [ X \mid Y ] 
= 
\begin{cases}
x_n & \text{if } Y = 1 ; \\
\bar{x}_{n-1} & \text{if } Y = 0 .
\end{cases}
\end{align*}
Next, from the definition of $\V$,
\begin{align*}
\V \bigl[ \E [X \mid Y] \bigr] 
&= \E \left[ D_\psi \left( \E [X \mid Y], \E \bigl[ \E [X \mid Y] \bigr] \right) \right] \\
&= \Pr(Y = 1) \cdot D_\psi(\E [X \mid Y = 1], \E[X]) 
   + \Pr(Y = 0) \cdot D_\psi(\E [X \mid Y = 0], \E[X]) \\
&= \frac{w_n}{W_n} \cdot D_\psi(x_n, \bar{x}_n) 
   + \frac{W_{n-1}}{W_n} \cdot D_\psi(\bar{x}_{n-1}, \bar{x}_n) .
\end{align*}

Finally, we show \eqref{eqn:third-rel}. First, observe that
\begin{align*}
\V[ X \mid Y ] 
&= \E \bigl[ D_\psi(X , \E[X \mid Y]) \mid Y \bigr] \\
&= 
\begin{cases}
0 & \text{if } Y = 1 ; \\
\displaystyle \sum_{i=1}^{n-1} \frac{w_i}{W_{n-1}} D_\psi(x_i, \bar{x}_{n-1}) & \text{if } Y = 0 .
\end{cases}
\end{align*}
Taking the expectation gives
\begin{align*}
\E \bigl[ \V[ X \mid Y ] \bigr] 
&= \frac{w_n}{W_n} \cdot 0 + \frac{W_{n-1}}{W_n} \sum_{i=1}^{n-1} \frac{w_i}{W_{n-1}} D_\psi(x_i, \bar{x}_{n-1}) \\
&=  \sum_{i=1}^{n-1} \frac{w_i}{W_n} \cdot D_\psi(x_i, \bar{x}_{n-1}) ,
\end{align*}
as desired.
\end{proof}

Next, we present a generalization of the online formula for the variance.

\begin{lemma} \label{lemma:weighted-sum-divergences}
For any positive weights $w_1, w_2, \ldots, w_n$, it holds that
\begin{align*}
\sum_{i=2}^n w_i D_\psi(x_i, \bar{x}_i) 
= \sum_{i=2}^n w_i D_\psi(x_i, \bar{x}_n)
   - \sum_{i=2}^n W_{i-1} D_\psi(\bar{x}_{i-1}, \bar{x}_i) .
\end{align*}
\end{lemma}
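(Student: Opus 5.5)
Telescope \cref{lemma:generalized-sn-thing} over $n$. For each $i \geq 2$, that lemma (applied with $n$ replaced by $i$) gives
\begin{align*}
s_i - s_{i-1} = w_i D_\psi(x_i, \bar{x}_i) + W_{i-1} D_\psi(\bar{x}_{i-1}, \bar{x}_i) .
\end{align*}
Summing over $i = 2, \ldots, n$ and using $s_1 = w_1 D_\psi(x_1, \bar{x}_1) = 0$ (since $\bar{x}_1 = x_1$), we get
\begin{align*}
s_n = \sum_{i=2}^n w_i D_\psi(x_i, \bar{x}_i) + \sum_{i=2}^n W_{i-1} D_\psi(\bar{x}_{i-1}, \bar{x}_i) .
\end{align*}
Rearranging, $\sum_{i=2}^n w_i D_\psi(x_i, \bar{x}_i) = s_n - \sum_{i=2}^n W_{i-1} D_\psi(\bar{x}_{i-1}, \bar{x}_i)$.

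It remains to identify $s_n = \sum_{i=1}^n w_i D_\psi(x_i, \bar{x}_n)$ with the claimed $\sum_{i=2}^n w_i D_\psi(x_i, \bar{x}_n)$. These differ by the $i=1$ term $w_1 D_\psi(x_1, \bar{x}_n)$, which is generally nonzero. The stated identity therefore appears to hold either with the sum starting at $i=1$ on the right, or under a convention such as a vanishing first term. In my proof I would state and prove the version with $s_n$ (sum from $i = 1$), and then check how the paper applies it in \cref{lemma:btrl-regret-equality}. There the first $d$ "outcomes" $\mathbf{e}_1, \ldots, \mathbf{e}_d$ come from the regularizer, so the starting index likely absorbs these initial terms. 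That would be consistent with the $\frac{1}{\eta} \sum_{j=2}^d$ terms and the $D_\psi(p_{j/d}, p_{(j+1)/d})$ terms appearing there.

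The main obstacle is not the telescoping itself but this bookkeeping of boundary terms. I need to make sure $s_1 = 0$, which requires $\nabla\psi$ to be well defined at $x_1$ in the sense used for $D_\psi$. I also need to reconcile the index range of the first sum with the statement, possibly by applying the lemma to the shifted sequence starting at $x_2$ with initial mean $\bar{x}_1$.
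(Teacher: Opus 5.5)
Your telescoping of \cref{lemma:generalized-sn-thing} is exactly the paper's argument, and you are right about the boundary term: the identity as printed is false. The paper's proof ends with $s_n - s_1 = \sum_{i=2}^n w_i D_\psi(x_i, \bar{x}_n)$. That step treats $s_1$ as if it were the $i=1$ summand $w_1 D_\psi(x_1, \bar{x}_n)$ of $s_n$. In fact $s_1 = w_1 D_\psi(x_1, \bar{x}_1) = 0$, so $s_n - s_1 = \sum_{i=1}^n w_i D_\psi(x_i, \bar{x}_n)$.

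A counterexample in the paper's own setting uses squared loss, so $D_\psi(p,q) = \|p-q\|_2^2$, with $d = n = 2$, unit weights, $x_1 = \mathbf{e}_1$ and $x_2 = \mathbf{e}_2$. Then $\bar{x}_2 = \frac{1}{2}(\mathbf{e}_1+\mathbf{e}_2)$ and the left side is $\|\mathbf{e}_2 - \bar{x}_2\|_2^2 = \frac{1}{2}$. The printed right side is $\frac{1}{2} - \|\mathbf{e}_1 - \bar{x}_2\|_2^2 = 0$, while your version (sum from $i=1$) gives $\frac{1}{2}$. So commit to the corrected statement; no re-indexing or shifted-sequence trick will rescue the printed one. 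The $s_1 = 0$ point you flag relies on the same convention the paper implicitly uses.

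Your guess that the application in \cref{lemma:btrl-regret-equality} absorbs the discrepancy is wrong. There $x_1 = \mathbf{e}_1$ and $w_1 = 1/\eta$, so the dropped term is $\frac{1}{\eta} D_\psi(\mathbf{e}_1, \bar{x}_{d+T}) = \frac{1}{\eta}\loss(p_{T+1}, \mathbf{e}_1) \geq 0$. The correct first term of that lemma is therefore $\frac{1}{\eta}\sum_{j=1}^d \loss(p_{T+1}, \mathbf{e}_j)$. Starting the second sum at $j=2$ is harmless, since $\loss(p_{1/d}, \mathbf{e}_1) = \loss(\mathbf{e}_1, \mathbf{e}_1) = 0$.

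For instance, take squared loss with $d=2$, $\eta=1$, $T=1$ and $y_1 = 1$. The left side of \cref{lemma:btrl-regret-equality} is $0$, but the printed right side is $-\frac{2}{9}$. The gap is exactly $\loss(p_2, \mathbf{e}_1) = \frac{2}{9}$.

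This does not damage the Tsallis results. \cref{lemma:penalty-term-tsallis-loss-fallback} already enlarges its sum to start at $j=1$, and \cref{lemma:penalty-term-tsallis-loss-improved} goes through the Be The Leader lemma instead. However, the footnote in that proof uses $\loss(p_{T+1}, \mathbf{e}_1) \geq 0$ to claim a sharper inequality, and that claim is an artifact of the dropped term.
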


\begin{proof}(of \cref{lemma:weighted-sum-divergences})
Rearranging the result of Lemma~\ref{lemma:generalized-sn-thing} gives
\begin{align*}
w_n D_\psi(x_n, \bar{x}_n) = s_n - s_{n-1} - W_{n-1} D_\psi(\bar{x}_{n-1}, \bar{x}_n)
\end{align*}
Summing over $i$ from $2$ to $n$ gives
\begin{align*}
\sum_{i=2}^n w_i D_\psi(x_i, \bar{x}_i) 
&= \sum_{i=2}^n \left[ s_i - s_{i-1} - W_{i-1} D_\psi(\bar{x}_{i-1}, \bar{x}_i) \right] \\
&= s_n - s_1 - \sum_{i=2}^n W_{i-1} D_\psi(\bar{x}_{i-1}, \bar{x}_i) \\
&= \sum_{i=2}^n w_i D_\psi(x_i, \bar{x}_n) - \sum_{i=2}^n W_{i-1} D_\psi(\bar{x}_{i-1}, \bar{x}_i) ,
\end{align*}
which gives the corollary.
\end{proof}

With the above two lemmas in place, we are ready to prove \cref{lemma:btrl-regret-equality}. 

\begin{proof}(of \cref{lemma:btrl-regret-equality})
We start from \cref{lemma:weighted-sum-divergences} with $n = d + T$. Let the outcome sequence $(x_1, \ldots, x_{T+d})$ be equal to $(\mathbf{e}_1, \ldots, \mathbf{e}_d, y_1, \ldots, y_T)$. Finally, for the first $d$ rounds, set the weight as $w_j = \frac{1}{\eta}$, and for the remaining $T$ rounds, set the weight as $w_j = 1$. We first rewrite the above equality by splitting the various sums, using our settings for the weights, and replacing the outcomes by using elements of the type either $\mathbf{e}_j$ or $y_t$:
\begin{align*}
\frac{1}{\eta} \sum_{j=2}^d D_\psi \left( \mathbf{e}_j, \bar{x}_j \right)
+ \sum_{t=1}^T D_\psi(y_t, \bar{x}_{d+t}) 
&= \frac{1}{\eta} \sum_{j=2}^d D_\psi(\mathbf{e}_j, \bar{x}_{d+T}) 
      + \sum_{t=1}^T D_\psi(y_t, \bar{x}_{d+T}) \\
&\quad - \sum_{j=1}^{d-1} \frac{j}{\eta} D_\psi(\bar{x}_j, \bar{x}_{j+1}) 
             - \sum_{t=1}^T \left( \frac{d}{\eta} + t - 1 \right) D_\psi(\bar{x}_{d+t-1}, \bar{x}_{d+t}) .
\end{align*}
Next, observe that for $t \geq 0$, it holds that $\bar{x}_{d+t} = p_{t+1}$, which is the action played by BTRL in round $t$. Also, for $j \in [d]$, it holds that $\bar{x}_j = \frac{1}{j} \sum_{i=1}^j \mathbf{e}_i$, which, from our notation in the lemma statement, is just $p_{j/d}$. Using these replacements, the above can be re-expressed as
\begin{align*}
\frac{1}{\eta} \sum_{j=2}^d D_\psi \left( \mathbf{e}_j, p_{j/d} \right)
+ \sum_{t=1}^T D_\psi(y_t, p_{t+1}) 
&= \frac{1}{\eta} \sum_{j=2}^d D_\psi(\mathbf{e}_j, p_{T+1}) 
      + \sum_{t=1}^T D_\psi(y_t, p_{T+1}) \\
&\quad - \sum_{j=1}^{d-1} \frac{j}{\eta} D_\psi(p_{j/d}, p_{(j+1)/d}) 
             - \sum_{t=1}^T \left( \frac{d}{\eta} + t - 1 \right) D_\psi(p_t, p_{t+1}) ,
\end{align*}
which, using $D_\psi(y_t, p) = \loss(p, y_t)$ and $D_\psi(\mathbf{e}_j, p) = \loss(p, \mathbf{e}_j)$ for any $p$, is just
\begin{align*}
\frac{1}{\eta} \sum_{j=2}^d \loss(p_{j/d}, \mathbf{e}_j)
+ \sum_{t=1}^T \loss(p_{t+1}, y_t) 
&= \frac{1}{\eta} \sum_{j=2}^d \loss(p_{T+1}, \mathbf{e}_j) 
      + \sum_{t=1}^T \loss(p_{T+1}, y_t) \\
&\quad - \sum_{j=1}^{d-1} \frac{j}{\eta} D_\psi(p_{j/d}, p_{(j+1)/d}) 
             - \sum_{t=1}^T \left( \frac{d}{\eta} + t - 1 \right) D_\psi(p_t, p_{t+1}) .
\end{align*}
Rearranging gives
\begin{align*}
\sum_{t=1}^T \loss(p_{t+1}, y_t) 
- \sum_{t=1}^T \loss(p_{T+1}, y_t) 
&= \frac{1}{\eta} \sum_{j=2}^d \loss(p_{T+1}, \mathbf{e}_j)  
   - \frac{1}{\eta} \sum_{j=2}^d \loss(p_{j/d}, \mathbf{e}_j) \\
&\quad - \sum_{j=1}^{d-1} \frac{j}{\eta} D_\psi(p_{j/d}, p_{(j+1)/d}) 
       - \sum_{t=1}^T \left( \frac{d}{\eta} + t - 1 \right) D_\psi(p_t, p_{t+1}) ,
\end{align*}
as desired.
\end{proof}

\section{Discussion} \label{sec:discussion}

We have shown that for both the unscaled and scaled version of $\alpha$-Tsallis loss with $\alpha \in [1, 2]$, a single sequence of probability forecasts can obtain $O(d + d^{2-\alpha} \log T)$ regret. While it was known that there exists an algorithm that obtains $O(d \log T)$ regret for log loss, to our knowledge (from an extensive literature review) the best known regret bound (even without considering simultaneous regret) for the scaled $\alpha$-Tsallis loss with $\alpha \in (1, 2)$ was $O(d^2 \log T)$. This best known bound is from \cite{luo2024optimal}, and no prior results considered the unscaled version of $\alpha$-Tsallis loss (excluding the special cases of $\alpha = 1$ and $\alpha = 2$). For log loss, it is known that $O(d \log T)$ is the optimal rate. For squared loss, it is known that our additive $d$ term is should not be present. For $\alpha \in (1, 2)$, considering either the unscaled or scaled version of $\alpha$-Tsallis loss, it remains unclear what the optimal rate is. We conjecture that the optimal rate is $O(d^{2-\alpha} \log T)$ with no additive $d$ term. It would be interesting to prove a lower bound with this rate. Regarding our upper bound analysis, the additive $d$ term comes from our bound on the regret of BTRL relative to $p_{T+1}$ (\cref{lemma:penalty-term-tsallis-loss-improved}) and our bound on the smoothing term (\cref{lemma:smoothing-term-tsallis-loss}). Beyond the cases of Lipschitz losses and Tsallis losses, we hope that our new regret equality (\Cref{lemma:btrl-regret-equality}) may encounter further use. Unfortunately, in our analyses we were unable to fully leverage some negative terms appearing in this identity, but a more careful analysis could potentially sharpen existing regret bounds in our context, as well as others.

\subsubsection*{Acknowledgements}
M.~Fichtl's research was supported from ANID Anillo ACT210005 grant under a postdoctoral position at Universidad de Chile.
C.~Guzm\'an's research was
partially supported by ANID FONDECYT 1251029 grant, ANID Anillo ACT210005 grant, and National Center for Artificial Intelligence CENIA FB210017, Basal ANID. 
N.~Mehta was supported by the NSERC Discovery Grant RGPIN-2018-
03942. Much of this work happened while N.~Mehta was visiting C.~Guzm\'an at Pontificia Universidad Cat\'olica de Chile from February to April 2024.

\appendix

\section{Regret bound for Lipschitz losses when using our universal procedure}
\label{app:lipschitz-ftrl}

\begin{proposition} \label{prop:lipschitz-ftrl}
Assume that for all outcomes $y$, the loss $\loss(\cdot, y)$ is strictly proper and $G$-Lipschitz in a norm $\|\cdot\|$. Let $R$ be the $\|\cdot\|$-diameter of $\Delta_d$. Then taking $\eta = 1$, it holds for all $p \in \Delta_d$ that
\begin{align*}
\sum_{t=1}^T \bigl[ \loss(p_t, y_t) - \loss(p, y_t) \bigr] 
\leq G R (d + \log T ).
\end{align*}
\end{proposition}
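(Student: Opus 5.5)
We use the three-term decomposition of \cref{prop:doubly-generalized-from-gen-var} with $\eta = 1$, bounding each term by a multiple of $GR$. Since the loss is proper, the best comparator in hindsight is $\bar p_T$, so it suffices to bound regret against $\bar p_T$. Throughout, $p_t = \frac{c_{t-1} + \mathbf{1}}{t-1+d}$, where $c_{t-1}$ is the count vector.

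\emph{Stability.} By the Lipschitz assumption, $\loss(p_t,y_t)-\loss(p_{t+1},y_t)\le G\|p_t-p_{t+1}\|$. The closed form gives
\begin{align*}
p_{t+1} - p_t = \frac{\mathbf{e}_{y_t} - p_t}{t+d}, \qquad\text{so}\qquad \|p_{t+1}-p_t\| \le \frac{R}{t+d}.
\end{align*}
Summing over $t$ yields at most $GR\log T$, or more crudely $GR(1+\log T)$.

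\emph{Regret of BTRL vs.\ $p_{T+1}$.} Apply \cref{lemma:btrl-regret-equality} with $\eta=1$ and drop the two nonpositive Bregman sums. What remains is
\begin{align*}
\sum_{j=2}^d \bigl(\loss(p_{T+1},\mathbf{e}_j)-\loss(p_{j/d},\mathbf{e}_j)\bigr).
\end{align*}
Each summand is a difference of the same loss at two points of the simplex, so Lipschitzness bounds it by $G\|p_{T+1}-p_{j/d}\|\le GR$. The total is at most $GR(d-1)$.

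\emph{Smoothing.} We need to bound $\sum_t \loss(p_{T+1},y_t)-\sum_t\loss(\bar p_T,y_t)$. Write this as $T\bigl(\loss(p_{T+1},\bar p_T)-\loss(\bar p_T,\bar p_T)\bigr)$, which by linearity in the second argument equals $T\,D_\psi(\bar p_T,p_{T+1})$. Since
\begin{align*}
p_{T+1}-\bar p_T = \frac{d}{T+d}\Bigl(\tfrac1d\mathbf{1}-\bar p_T\Bigr),
\end{align*}
Lipschitzness of $\loss(\cdot,y)$, averaged over $y\sim\bar p_T$, gives $T\cdot G\cdot \frac{d}{T+d}R\le GRd$.

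Adding the three bounds gives roughly $GR(2d+\log T)$, which overshoots the target $GR(d+\log T)$. The main obstacle is tightening the constants. One option is to combine the BTRL and smoothing terms into $\sum_{j=2}^d\loss(p_{T+1},\mathbf{e}_j)-\sum_{j=2}^d\loss(p_{j/d},\mathbf{e}_j)+T D_\psi(\bar p_T,p_{T+1})$. Another is to use the BTL-lemma view with the augmented sequence $(\mathbf{e}_1,\dots,\mathbf{e}_d,y_1,\dots,y_T)$ and compare directly against $\bar p_T$. The latter gives
\begin{align*}
\text{regret of BTRL vs.\ } \bar p_T \le \sum_{j=1}^d \bigl(\loss(\bar p_T,\mathbf{e}_j)-\loss(p_{j/d},\mathbf{e}_j)\bigr)\le GRd,
\end{align*}
because BTL on the augmented sequence beats any fixed point, including $\bar p_T$. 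This merges the two terms into a single $GRd$. Combined with the stability bound $GR\log T$ (using $\sum_t 1/(t+d)\le\log T$), it yields $GR(d+\log T)$ as claimed.
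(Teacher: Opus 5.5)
Your final argument is correct, and it takes a different route from the paper's in a way that matters.

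\textbf{The paper's route.} The paper applies \cref{lemma:btrl-regret-equality} to bound BTRL's regret relative to the regularized leader $p_{T+1}$ by $\sum_{j=2}^d G\|p_{T+1}-p_{j/d}\|\le GR(d-1)$. It bounds stability by $GR(1+\log T)$ using $\|p_t-p_{t+1}\|\le R/t$. However, its first displayed inequality passes from $\sum_t\loss(p,y_t)$ to $\sum_t\loss(p_{T+1},y_t)$ without comment. So the smoothing term $\sum_t\loss(p_{T+1},y_t)-\sum_t\loss(\bar p_T,y_t)\ge 0$ of \cref{prop:doubly-generalized-from-gen-var} is never accounted for. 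Separately, the paper justifies the stability step via $p_t=\bar p_{t-1}$, which is carried over from the $\eta=\infty$ case; the resulting bound $R/t$ is still valid.

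\textbf{Your route.} Your first attempt shows that bounding the smoothing term separately costs roughly another $GRd$. Your repair applies the weighted Be-The-Leader lemma on $(\mathbf{e}_1,\dots,\mathbf{e}_d,y_1,\dots,y_T)$ directly against the comparator $\bar p_T$ (or any $p$). This is legitimate because properness makes each weighted running mean a leader. It absorbs the smoothing term and gives $\sum_{j=1}^d\bigl(\loss(\bar p_T,\mathbf{e}_j)-\loss(p_{j/d},\mathbf{e}_j)\bigr)\le GRd$. That is one $GR$ more than the paper's second term. You recover it with the sharper stability estimate $\|p_{t+1}-p_t\|=\|\mathbf{e}_{y_t}-p_t\|/(t+d)\le R/(t+d)$, which is what makes the constants land exactly on $GR(d+\log T)$.

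\textbf{Trade-off.} Your route bypasses the paper's regret equality. For Lipschitz losses that equality's extra nonpositive terms are discarded anyway, so nothing is lost.

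\textbf{One small correction.} You have $\sum_{t=1}^T\frac{1}{t+d}\le\log(1+T/d)$, and this is at most $\log T$ only for $T\ge 2$ (with $d\ge 2$). The case $T=1$ is trivial, since the regret is then at most $GR\le GRd$.
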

\begin{proof}
Proposition~\ref{prop:decomposition} together with Lemma~\ref{lemma:btrl-regret-equality} (dropping the last two negative summations) gives
\begin{align}
\sum_{t=1}^T \bigl[ \loss(p_t, y_t) - \loss(p, y_t) \bigr] 
&\leq \sum_{t=1}^T \bigl[ \loss(p_t, y_t) - \loss_t(p_{t+1}, y_t) \bigr] 
+ \sum_{j=2}^d \loss(p_{T+1}, \mathbf{e}_j)   
- \sum_{j=2}^d \loss(p_{j/d}, \mathbf{e}_j) 
\nonumber \\
&\leq \sum_{t=1}^T G \|p_t - p_{t+1}\| + \sum_{j=2}^d G \|p_{T+1} - p_{j/d}\| \\
&\leq \sum_{t=1}^T G \|p_t - p_{t+1}\| + G R (d - 1) \label{eqn:lipschitz-ftrl} ,
\end{align}
where the second inequality uses the fact that the loss is Lipschitz.

Next, since the loss is proper and $\eta = \infty$, 
we have $p_t = \bar{p}_{t-1}$. Therefore,
\begin{align*}
\|p_t - p_{t+1}\| = \|\bar{p}_{t-1} - \bar{p}_t\| = \frac{1}{t} \|y_t - p_t\| \leq \frac{R}{t} .
\end{align*}
Hence, \eqref{eqn:lipschitz-ftrl} is at most
\begin{align*}
G R \sum_{t=1}^T \frac{1}{t} 
\leq G R (1 + \log T + d - 1) 
=  G R (d + \log T) .
\end{align*}
\end{proof}

\section[Proofs for \cref{sec:tsallis-regret-lemmas} (Regret for unscaled $\alpha$-Tsallis losses)]{Proofs for \cref{sec:tsallis-regret-lemmas} (Regret for scaled $\bm{\alpha}$-Tsallis losses)}
\label{app:tsallis-proofs}

\subsection{Stability term}

\begin{proof}(of \cref{lemma:stability-term-tsallis-loss})
We use the convexity of the loss to first bound:
\begin{align*}
\bigl( \loss(p_t, y_t) - \loss(p_{t+1}, y_t) \bigr)
\leq \langle \nabla \ell(p_t,y_t),p_t-p_{t+1}\rangle .
\end{align*}
    
Next we compute the vectors above,
\begin{align*}
\nabla \ell(p_t,y_t) 
&= \alpha 
   \bigl(
       p_j^{\alpha-1} - p_{y_t}^{\alpha-2} \delta_{j=y_t} 
   \bigr)_{j\in[d]} \\
p_t-p_{t+1} 
&= \frac{1}{t + d/\eta}
   \bigl( p_{j,t} - \delta_{j=y_t} \bigl)_{j \in [d]},
\end{align*}
as well as their inner product:
\begin{align*}
\langle \nabla \ell(p_t, y_t), p_t - p_{t+1} \rangle 
\leq \frac{\alpha}{t + d/\eta} 
     \Bigl(
         \sum_{j \in [d]} p_{j,t}^{\alpha} 
         - 2 p_{y_t,t}^{\alpha-1} 
         + p_{y_t,t}^{\alpha-2}
     \Bigr)
\leq \frac{\alpha}{t + d/\eta} \Big( 1 + p_{y_t,t}^{\alpha-2} \Big) .
\end{align*}
Using this last bound, we proceed to bound the stability term:
\begin{align*}
\sum_{t=1}^T \bigl( \loss(p_t, y_t) - \loss(p_{t+1}, y_t) \bigr)
&\leq \sum_{t=1}^T \frac{\alpha}{t + d/\eta} \Big( 1 + p_{y_t, t}^{\alpha-2} \Big) \\
&= \alpha \Bigl[ \log(T) 
   + \sum_{t=1}^T \frac{1}{t + d/\eta}
                  \Bigl(
                      \frac{t-1+d/\eta}{c_{y_{t-1},t-1} + 1/\eta}
                  \Bigr)^{2-\alpha}
          \Bigr].
\end{align*}
The last term can be bounded using H\"older's inequality as follows:
\begin{align*}
&\sum_{t=1}^T \frac{1}{(t + d/\eta)^{\alpha-1}}
              \frac{1}{(c_{y_{t-1},t-1} + 1/\eta)^{2-\alpha}} \\
&\leq \Bigl( \sum_{t=1}^T \frac{1}{t + d/\eta} \Bigr)^{\alpha-1} 
      \Bigl(
          \sum_{t=1}^T\frac{1}{c_{y_{t-1},t-1}+1/\eta}
      \Bigr)^{2-\alpha}\\
&\leq \log^{\alpha-1} \left( \frac{\eta T}{d} + 1 \right) 
       \cdot \left( 
                 d \left( \eta 
                          + \log \left( \frac{\eta T}{d} + 1 \right)
                   \right)
             \right)^{2-\alpha} \\
&\leq d^{2 - \alpha} 
      \left(
          \log \left( \frac{\eta T}{d} + 1 \right) 
          + \eta^{2 - \alpha}
            \log^{\alpha-1} \left( \frac{\eta T}{d} + 1 \right) 
      \right) .
\end{align*}  
\end{proof}

\subsection[Regret of BTRL relative to $p_{T+1}$]{Proof for regret of BTRL relative to $\bm{p_{T+1}}$}

\begin{proof}(of \cref{lemma:penalty-term-tsallis-loss-improved})
Consider outcome sequence $(\tilde{y}_t)_{t \in [d+T]} = (1, 2, \ldots, d, y_1, y_2, \ldots, y_T)$. 
Let $\tilde{p}_{t+1}$ be the action of Be The Leader (BTL) on this outcome sequence. 

From the BTL Lemma (e.g.~Lemma 3.1 of \cite{cesabianchi2006prediction}),
\begin{align*}
\sum_{t=1}^{d+T} \loss(\tilde{p}_{t+1}, \tilde{y}_t) 
- \sum_{t=1}^{d+T} \loss(p_{T+1}, \tilde{y}_t) 
\leq 0 ,
\end{align*}
which we rearrange as
\begin{align*}
\sum_{t=d+1}^{d+T} \loss(\tilde{p}_{t+1}, \tilde{y}_t) 
- \sum_{t=d+1}^{d+T} \loss(p_{T+1}, \tilde{y}_t) 
\leq 
\sum_{t=1}^d \loss(p_{T+1}, \tilde{y}_t) 
- \sum_{t=1}^d \loss(\tilde{p}_{t+1}, \tilde{y}_t) .
\end{align*}
Unpacking notation, this inequality is equivalent to\footnote{Note that we could conclude this same inequality by applying \cref{prop:doubly-generalized-from-gen-var}; indeed, that result gives a better inequality from the facts that $\loss(p_{1/d}, \mathbf{e}_1) = \loss(\mathbf{e}_1, \mathbf{e}_1) = 0$ and $\loss(p_{T+1}, \mathbf{e}_1) \geq 0$ plus the fact that the last two summation terms of \cref{prop:doubly-generalized-from-gen-var} are nonpositive. \label{fn:better}}
\begin{align*}
\sum_{t=1}^T \loss(p_{t+1}, y_t) - \sum_{t=1}^T \loss(p_{T+1}, y_t) 
\leq \sum_{j=1}^d \loss(p_{T+1}, \mathbf{e}_j) - \sum_{j=1}^d \loss(p_{j/d}, \mathbf{e}_j) .
\end{align*}

The right-hand side is equal to
\begin{align*}
&\sum_{j=1}^d \left( \frac{1}{1 - \alpha} \left( \alpha p_{j,T+1}^{\alpha-1} - 1 \right) + \sum_{i=1}^d p_{i,T+1}^\alpha \right)
- \sum_{j=1}^d \left( \frac{1}{1 - \alpha} \left( \alpha (p_{j/d})_j^{\alpha-1} - 1 \right) + \sum_{i=1}^d (p_{j/d})_i^\alpha \right) \\
&= 
\underbrace{\frac{\alpha}{1-\alpha} \sum_{j=1}^d \left( p_{j,T+1}^{\alpha-1} - (p_{j/d})_j^{\alpha-1} \right)}_{\textcolor{red}{\bigstar}} 
+ \underbrace{d \sum_{i=1}^d p_{i,T+1}^\alpha - \sum_{j=1}^d \sum_{i=1}^d (p_{j/d})_i^\alpha}_{\textcolor{blue}{\bigstar}} .
\end{align*}
For the term $\textcolor{blue}{\bigstar}$, we use the trivial upper bound of $d$.

We claim that the term $\textcolor{red}{\bigstar}$ is $O(d^{2-\alpha} \log T)$.
Let's try to show the claim. Observe that
\begin{align*}
&\frac{1}{1-\alpha} \sum_{j=1}^d \left( p_{j,T+1}^{\alpha-1} - (p_{j/d})_j^{\alpha-1} \right) \\
&= \frac{1}{\alpha-1} \sum_{j=1}^d \left( (p_{j/d})_j^{\alpha-1} - p_{j,T+1}^{\alpha-1} \right) \\
&= \frac{1}{\alpha-1} \left( \sum_{j=1}^d \left( \frac{1}{j} \right)^{\alpha-1} - \left( \frac{T+1/\eta}{T+d/\eta} \right)^{\alpha-1} - (d-1) \left( \frac{1/\eta}{T + d/\eta} \right)^{\alpha-1} \right) \\
&\leq \frac{1}{\alpha-1} \left( \sum_{j=1}^d \left( \frac{1}{j} \right)^{\alpha-1}  - d \left( \frac{1/\eta}{T + d/\eta} \right)^{\alpha-1} \right) \\
&= \frac{1}{\alpha-1} \sum_{j=1}^d \left( \left( \frac{1}{j} \right)^{\alpha-1}  - \left( \frac{1/\eta}{T + d/\eta} \right)^{\alpha-1} \right) .
\end{align*}
Now, we claim the above quantity is at most $d^{2-\alpha} \log T$. To this end, we divide $d^{2-\alpha}$ and try to show that the result is at most $\log T$. So, we wish to show that the following expression is at most $\log T$:
\begin{align}
\frac{1}{d} \cdot \frac{1}{\alpha-1} \sum_{j=1}^d \left( \left( \frac{d}{j} \right)^{\alpha-1}  - \left( \frac{d/\eta}{T + d/\eta} \right)^{\alpha-1} \right) . \label{eqn:1-over-d-convex}
\end{align}
Suppose that the above expression is convex in $\alpha$ on the range $\alpha \in [1, 2]$. Then it would suffice to upper bound the expression at its endpoints, $\alpha = 1$ and $\alpha = 2$. We claim that each of the individual terms in the summation is convex in $\alpha$. That is, we claim that for any $j \in [d]$, the expression
\begin{align}
\frac{1}{\alpha-1} \left( \left( \frac{d}{j} \right)^{\alpha-1}  - \left( \frac{d/\eta}{T + d/\eta} \right)^{\alpha-1} \right) . \label{eqn:is-it-convex?}
\end{align}
is convex in $\alpha$ for $\alpha \in [1, 2]$.

For now, let us suppose the claim is true. Then on the one hand, when $\alpha = 2$,
\begin{align*}
&\frac{1}{d} \cdot \frac{1}{\alpha-1} \sum_{j=1}^d \left( \left( \frac{d}{j} \right)^{\alpha-1}  - \left( \frac{d/\eta}{T + d/\eta} \right)^{\alpha-1} \right) \\
&= \frac{1}{d} \sum_{j=1}^d \left( \frac{d}{j} - \frac{d/\eta}{T + d/\eta} \right) \\
&\leq \sum_{j=1}^d \frac{1}{j} \\
&\leq 1 + \log d \\
&\leq 1 + \log T ,
\end{align*}
while on the other hand, when $\alpha = 1$, l'H\^opital's rule gives
\begin{align*}
&\frac{1}{d} \cdot \frac{1}{\alpha-1} \sum_{j=1}^d \left( \left( \frac{d}{j} \right)^{\alpha-1}  - \left( \frac{d/\eta}{T + d/\eta} \right)^{\alpha-1} \right) \\
&= \frac{1}{d} \cdot \lim_{\alpha \downarrow 1} \frac{d}{d \alpha} \sum_{j=1}^d \left( \left( \frac{d}{j} \right)^{\alpha-1}  - \left( \frac{d/\eta}{T + d/\eta} \right)^{\alpha-1} \right) \\
&= \frac{1}{d} \sum_{j=1}^d \left( \log \left( \frac{d}{j} \right) - \log \left( \frac{d/\eta}{T + d/\eta} \right) \right) \\
&\leq \log(d) + \log \left( \frac{\eta T + d}{d} \right) \\
&= \log(\eta T + d) \\
&\leq \log((\eta + 1) T) \\
&\leq \log(\eta + 1) + \log T .
\end{align*}

We now try to show that \eqref{eqn:is-it-convex?} is convex for $\alpha \in (1, 2]$. 
We rewrite \eqref{eqn:is-it-convex?} as
\begin{align*}
f(c) = \frac{a^c - b^c}{c}
\end{align*}
for some $a \geq 1$, $b \in (0, 1)$, and $c \in (0, 1]$. 
We will confirm that $f''(c) \geq 0$ for $c \in (0, 1]$. Computing the second derivative and multiplying by the nonnegative quantity $c^3$ gives
\begin{align*}
f''(c) \cdot c^3 
= c^2 (a^c \log^2(a) - b^c \log^2(b)) + 2 (a^c - b^c) - 2 c (a^c \log(a) - b^c \log(b)) =: g(c) .
\end{align*}
Next, $g'(c) = c^2 \left( a^c \log^3(a) - b^c \log^3(b) \right) \geq 0$ since $\log(a) \geq 0$ and $\log(b) \leq 0$. Hence, if $g(c) \geq 0$, we are done. Note that $g(0) = 0$, so we indeed have that $f$ is convex and hence \eqref{eqn:is-it-convex?} is convex.

Overall, we have shown that \eqref{eqn:1-over-d-convex} is at most $\max\{1 + \log T, \log(\eta + 1) + \log T\} = \log T + \max\{ 1, \log(\eta + 1) \}$, and so $\textcolor{red}{\bigstar}$ is at most $d^{2-\alpha} (\max\{ 1, \log(\eta + 1) \} + \log T)$. Together with $\textcolor{blue}{\bigstar} \leq d$, we have shown that BTRL's regret relative to $p_{T+1}$ is at most $d + d^{2-\alpha} (\max\{ 1, \log(\eta + 1) \} + \log T)$.
\end{proof}

\begin{proof}(of \cref{lemma:penalty-term-tsallis-loss-fallback})
From \Cref{lemma:btrl-regret-equality} with the negative terms dropped,
\begin{align*}
\sum_{t=1}^T \loss(p_{t+1}, y_t) - \sum_{t=1}^T \loss(p_{T+1}, y_t)
\leq \sum_{j=2}^d \loss(p_{T+1}, \mathbf{e}_j) 
\leq \sum_{j=1}^d \loss(p_{T+1}, \mathbf{e}_j) .
\end{align*}
By definition of the unscaled $\alpha$-Tsallis loss, the last line above is equal to
\begin{align*}
&\sum_{j=1}^d \frac{1}{1 - \alpha} 
             \left( \alpha p_{j,T+1}^{\alpha-1} - 1 \right) 
+ d \sum_{j=1}^d p_{j,T+1}^\alpha \\
&= \sum_{j=1}^d \frac{1}{\alpha - 1} 
             \left( 1 - \alpha p_{j,T+1}^{\alpha-1} \right) 
+ d \sum_{j=1}^d p_{j,T+1}^\alpha \\
&\leq 
\frac{d}{\alpha - 1} 
- \sum_{j=1}^d \frac{1}{\alpha - 1} \alpha p_{j,T+1}^{\alpha- 1} 
+ d \sum_{j=1}^d p_{j,T+1} \\
&= 
d \cdot \frac{\alpha}{\alpha - 1} 
- \sum_{j=1}^d \frac{1}{\alpha - 1} \alpha p_{j,T+1}^{\alpha- 1} \\
&\leq 
d \cdot \frac{\alpha}{\alpha - 1} .
\end{align*}
\end{proof}

\subsection{Smoothing term}

\begin{proof}(of \cref{lemma:smoothing-term-tsallis-loss})
To control the smoothing term, observe that
\begin{align*}
\sum_{t=1}^T \left[ \loss(x_t, p^{(\eta)}_{T+1}) - \loss(x_t, p^{(\infty)}_{T+1}) \right] 
= T \sum_{j=1}^d p_{j,T+1}^{(\infty)} \left[ \loss(j, p^{(\eta)}_{T+1}) - \loss(j, p^{(\infty)}_{T+1}) \right] 
\end{align*}

We begin by rewriting the above with the unscaled $\alpha$-Tsallis loss expanded and adopting the notation $p = p_{T+1}^{(\infty)}$ and $\tilde{p} = p_{T+1}^{(1)}$ (recall that we set $\eta = 1$):
\begin{align}
&T \sum_{j=1}^d p_j \left[ 
  \frac{1}{1 - \alpha} \left( \alpha \tilde{p}_j^{\alpha - 1} - 1 \right) + \sum_{k=1}^d \tilde{p}_k^\alpha
  -
  \frac{1}{1 - \alpha} \left( \alpha p_j^{\alpha - 1} - 1 \right) + \sum_{k=1}^d p_k^\alpha
  \right] \nonumber \\
  &=
T \sum_{j=1}^d p_j \left[ 
  \frac{1}{1 - \alpha} \left( \alpha \tilde{p}_j^{\alpha - 1} - 1 \right) 
  -
  \frac{1}{1 - \alpha} \left( \alpha p_j^{\alpha - 1} - 1 \right) 
  \right] \label{eqn:smoothing-I} \\
&\quad  + T \left[
      \sum_{k=1}^d \tilde{p}_k^\alpha
      - \sum_{k=1}^d p_k^\alpha
    \right] \label{eqn:smoothing-II}
\end{align}

We first upper bound \eqref{eqn:smoothing-II} since it is easier to control and then upper bound \eqref{eqn:smoothing-I}.

To control \eqref{eqn:smoothing-II}, we first define the function
\begin{align*}
f(x) = \left( \frac{T}{T + d} \cdot x  + \frac{1}{T + d} \right)^\alpha
\end{align*}
and observe that $\tilde{p} = f(p)$. Upper bounding \eqref{eqn:smoothing-II} is therefore equivalent to upper bounding
\begin{align*}
T \cdot \sum_{j=1}^d \left( f(p_j) - p_j^\alpha \right) .
\end{align*}
We will show that $f$ is $\alpha$-Lipschitz for $x \in [0, 1]$, after which light work yields a suitable bound. Taking the derivative, we have
\begin{align*}
f'(x) = \alpha \cdot \left( \frac{T}{T + d} \cdot x  + \frac{1}{T + d} \right)^{\alpha-1}  \cdot \frac{T}{T + d} ,
\end{align*}
which satisfies (since $\alpha \geq 1$)
\begin{align*}
|f'(x)| 
&\leq \alpha \cdot \left( \frac{T}{T + d} \cdot x  + \frac{1}{T + d} \right)^{\alpha-1} \\
&\leq \alpha \cdot \left( \frac{T + 1}{T + d} \right)^{\alpha-1} \\
&\leq \alpha .
\end{align*}
Hence, $f$ is indeed $\alpha$-Lipschitz. 
Now, observe that for any $p_j \in [0, 1]$, defining $\bar{p}_j := \left(p_j - \frac{1}{T+d} \right) \cdot \frac{T+d}{T}$ gives $f(\bar{p}_j) = p_j^\alpha$. Hence,
\begin{align*}
T \cdot \sum_{j=1}^d \left( f(p_j) - p_j^\alpha \right) 
&= T \cdot \sum_{j=1}^d \left( f(p_j) - f(\bar{p}_j) \right) \\
&\leq T \cdot \sum_{j=1}^d \alpha \left| p_j - \bar{p}_j \right| .
\end{align*}
Re-expressing $\bar{p}_j$ as $\bar{p}_j = p_j \cdot \left( 1 + \frac{d}{T} \right) - \frac{1}{T}$, we see that the above is at most
\begin{align}
T \cdot \sum_{j=1}^d \alpha \left| \frac{d}{T} p_j - \frac{1}{T} \right| 
&\leq \alpha T \cdot \left( \sum_{j=1}^d p_j \cdot \frac{d}{T} + \sum_{j=1}^d \frac{1}{T} \right) \nonumber \\
&= 2 \alpha d , \label{eqn:result-for-smoothing-II}
\end{align}
which is nice since it does not depend on $T$ and is linear in $d$.

Next, we control \eqref{eqn:smoothing-I}. 
We will give an upper bound on the function $h \colon [0, 1] \rightarrow \reals$, defined as
\begin{align*}
h_\alpha(x) = \frac{1}{\alpha - 1} \left( x^{\alpha - 1} - \left( \frac{T}{T + d} \cdot x + \frac{1}{T + d} \right)^{\alpha - 1} \right) .
\end{align*}
Observe that $h_\alpha(p_j) = \frac{1}{\alpha - 1} \left( p_j^{\alpha - 1} - \tilde{p}_{j-1}^{\alpha - 1} \right)$. 
It can easily be verified that $x < \frac{1}{d}$ implies that $h_\alpha(x) < 0$. Therefore, it suffices to consider the case of $x \geq \frac{1}{d}$. We will proceed by establishing that for all $\alpha \in [1, 2]$, the function $h_\alpha$ is maximized at $x = 1$. We then show that for $x = 1$, the function $\alpha \mapsto h_\alpha(1)$ is maximized at $\alpha = 1$, after which we compute the maximum.

Consider an arbitrary $\alpha \in [1, 2]$. We will show that the derivative $h'_\alpha(x)$ is increasing in $x$ on the interval $[1/d, 1]$, which would then imply that $h_\alpha$ is maximized at $x = 1$. So, let us compute the derivative:
\begin{align*}
h'_\alpha(x) = x^{\alpha - 2} - \left( \frac{T}{T + d} \cdot x + \frac{1}{T + d} \right)^{\alpha - 2} \cdot \frac{T}{T + d} .
\end{align*}

Now, observe that each of $f_\alpha(x) = x^{\alpha - 2}$ and $g_\alpha(x) = \left( \frac{T}{T + d} \cdot x + \frac{1}{T + d} \right)^{\alpha - 2} \cdot \frac{T}{T + d}$ is nonnegative. Therefore, if their ratio satisfies $\frac{f_\alpha(x)}{g_\alpha(x)} \geq 1$, it follows that $f_\alpha(x) \geq g_\alpha(x)$ and hence $h'_\alpha(x) = f_\alpha(x) - g_\alpha(x) \geq 0$, as desired. Showing that $\frac{f_\alpha(x)}{g_\alpha(x)} \geq 1$ is equivalent to showing that
\begin{align*}
\frac{T + d}{T} \geq \frac{x^{2 - \alpha}}{\left( \frac{T}{T + d} \cdot x + \frac{1}{T + d} \right)^{2 - \alpha}} ,
\end{align*}
or equivalently,
\begin{align*}
\left( \frac{T + d}{T} \right)^{1/(2 - \alpha)} \geq \frac{x}{\frac{T}{T + d} \cdot x + \frac{1}{T + d}} ,
\end{align*}
which is equivalent to
\begin{align*}
\left( \frac{T + d}{T} \right)^{1/(2 - \alpha)} \geq \frac{T + d}{T + \frac{1}{x}} .
\end{align*}
Now, among $x \in [1/d, 1]$, it is clear that the right-hand side is maximized for $x = 1$ and the left-hand side is minimized for $\alpha = 1$, which gives
\begin{align*}
\frac{T + d}{T} \geq \frac{T + d}{T + 1} ,
\end{align*}
which is obviously true. Therefore, we indeed have that for any $\alpha \in [1, 2]$, the derivative $h'_\alpha(x) \geq 0$ for $x \in [1/d, 1]$ and hence $h_\alpha$ is maximized when $x = 1$. It remains to show that $\bar{h}(\alpha) := h_\alpha(1)$ is maximized for $\alpha = 1$. This is easily verified as follows. First, note that
\begin{align*}
\bar{h}(\alpha) = \frac{1}{\alpha - 1} \left( 1 - \left( \frac{T+1}{T+d} \right)^{\alpha - 1} \right) 
\end{align*}
Next, we compute the derivative and show that it is nonpositive on the interval $\alpha \in (1, 2]$. The derivative is
\begin{align*}
\bar{h}'_\alpha(1) = -\frac{1}{(\alpha - 1)^2} \left( 1 - \left( \frac{T+1}{T+d} \right)^{\alpha - 1} \right) + \frac{1}{\alpha - 1} (-1) \log \left( \frac{T+1}{T+d} \right) \left( \frac{T+1}{T+d} \right)^{\alpha - 1} .
\end{align*}
Multiplying both sides by $(\alpha - 1)^2$ and rearranging gives
\begin{align*}
-\left( 1 - \left( \frac{T+1}{T+d} \right)^{\alpha - 1} \right) + (\alpha - 1) \log \left( \frac{T+d}{T+1} \right) \left( \frac{T+1}{T+d} \right)^{\alpha - 1} ,
\end{align*}
which, after multiplying by $\left( \frac{T+d}{T+1} \right)^{\alpha - 1}$ gives
\begin{align*}
&-\left( \frac{T+d}{T+1} \right)^{\alpha - 1} + 1 + (\alpha - 1) \log \left( \frac{T+d}{T+1} \right) \\
&= -\left( \frac{T+d}{T+1} \right)^{\alpha - 1} + 1 +  \log \left( \left( \frac{T+d}{T+1} \right)^{\alpha - 1} \right) .
\end{align*}
Finally, the inequality $1 + x \leq e^x$ implies that 
\begin{align*}
\log \left( \left( \frac{T+d}{T+1} \right)^{\alpha - 1} \right)
\leq \left( \frac{T+d}{T+1} \right)^{\alpha - 1} - 1 ,
\end{align*}
and so the derivative of $\bar{h}(\alpha)$ is indeed nonpositive on the interval $\alpha \in (1, 2]$.

Finally, from l'H\^opital's rule,
\begin{align*}
\bar{h}_1(1) 
&= \frac{1 - \left( \frac{T+1}{T+d} \right)^{\alpha - 1}}{\alpha - 1} \\
&= -\log \left( \frac{T+1}{T+d} \right) \\
&= \log \left( \frac{T+d}{T+1} \right) \\
&= \log \left( 1 + \frac{d-1}{T+1} \right) \\
&\leq \frac{d-1}{T+1} .
\end{align*}
Plugging this result into \eqref{eqn:smoothing-I} gives
\begin{align*}
&T \sum_{j=1}^d p_j \left[ 
  \frac{1}{1 - \alpha} \left( \alpha \tilde{p}_j^{\alpha - 1} - 1 \right) 
  -
  \frac{1}{1 - \alpha} \left( \alpha p_j^{\alpha - 1} - 1 \right) 
  \right] \\
&\leq \alpha T \sum_{j=1}^d p_j \cdot \frac{d-1}{T+1} \\
&= \alpha \frac{T}{T+1} \cdot (d - 1) \\
&\leq \alpha \cdot (d - 1) .
\end{align*}

Using this bound together with \eqref{eqn:result-for-smoothing-II} gives us the bound of $\alpha (d - 1) + 2 \alpha d \leq 3 \alpha d$ for the smoothing term.
\end{proof}

\section{Proofs for Calibeating (\cref{sec:calibeating})}
\label{app:calibeating}

\begin{proof}(of \cref{thm:calibeating})
As mentioned in the main text, we will only prove the first inequality.

First, from \cref{thm:tsallis-loss}, for each bin we obtain an additive term of order $\log |\mathcal{T}_B|$. Next, taking an indexing of the bins, for bin empirical frequencies $r_1, \ldots, r_{N_\varepsilon}$, maximizing the quantity $\sum_{i=1}^{N_\varepsilon} \ind{r_i > 0} \log (T r_i)$. Now, suppose that $k \leq N_\varepsilon$ bins have positive frequencies; without loss of generality, let these bins be the first $k$ bins. For fixed $k$, the maximization is equivalent to maximizing $\sum_{i=1}^k \log (T r_i)$, itself equivalent to maximizing $\sum_{i=1}^k \log r_i$. Since the objective is a symmetric, concave function, the maximum is achieved by the uniform distribution, and therefore the maximum value of $\sum_{i=1}^k \log (T r_i)$ is $k \log \frac{T}{k}$. The latter function is non-decreasing for $T \geq e k$. So, if we assume that $N_\varepsilon \leq \frac{T}{e}$, then $k$ is maximized at $N_\varepsilon$.
\end{proof}

\vskip 0.2in
\bibliography{refs}

\end{document}